\def\shownotes{1}  
\newcommand{\authnote}[2]{$\ll$\textsf{\small #1 notes: #2}$\gg$}
\newcommand{\authnote}[2]{}
\newcommand{\slope}{{\sf slope}}
\newcommand{\upperslope}{\overline{\sf slope}}
\newcommand{\lowerslope}{\underline{\sf slope}}
\newcommand{\dw}{W_1}
\DeclareMathOperator*{\upperlim}{limsup}
\DeclareMathOperator*{\lowerlim}{liminf}
\title{\LARGE \bf Distribution Approximation and Statistical Estimation Guarantees of Generative Adversarial Networks} 
\author{Minshuo Chen$^*$ \quad Wenjing Liao$^\dagger$ \quad Hongyuan Zha$^\diamond$ \quad Tuo Zhao$^*$ \\
$^*$ Industrial and systems engineering, Georgia Tech \quad $^\dagger$ Math, Georgia Tech \\
$^\diamond$ School of data science, Chinese University of Hong Kong, Shen Zhen \\
Email: {\tt \{mchen393, wliao60, tourzhao\}@gatech.edu}, {\tt zhahy@cuhk.edu.cn}}
\date{}
\begin{document}

\maketitle

\begin{abstract}
Generative Adversarial Networks (GANs) have achieved a great success in unsupervised learning. Despite its remarkable empirical performance, there are limited theoretical studies on the statistical properties of GANs. This paper provides approximation and statistical guarantees of GANs for the estimation of data distributions that have densities in a H\"{o}lder space. Our main result shows that, if the generator and discriminator network architectures are properly chosen, GANs are consistent estimators of data distributions under strong discrepancy metrics, such as the Wasserstein-1 distance. Furthermore, when the data distribution exhibits low-dimensional structures, we show that GANs are capable of capturing the unknown low-dimensional structures in data and enjoy a fast statistical convergence, which is free of curse of the ambient dimensionality.
Our analysis for low-dimensional data builds upon a universal approximation theory of neural networks with Lipschitz continuity guarantees, which may be of independent interest.
\\

\noindent {\bf Keywords}: Generative adversarial networks, Distribution estimation, Low-dimensional data, Universal approximation
\end{abstract}

\section{Introduction}

Generative Adversarial Networks (GANs, \cite{goodfellow2014generative}) utilize two neural networks competing with each other to generate new samples with the same distribution as the training data. They have been successful in many applications including producing photorealistic images, improving astronomical images, and modding video games \cite{reed2016generative, ledig2017photo,schawinski2017generative, brock2018large, volz2018evolving, radford2015unsupervised,salimans2016improved}. 

\begin{figure}[!htb]
\centering
\includegraphics[width = 0.8\textwidth]{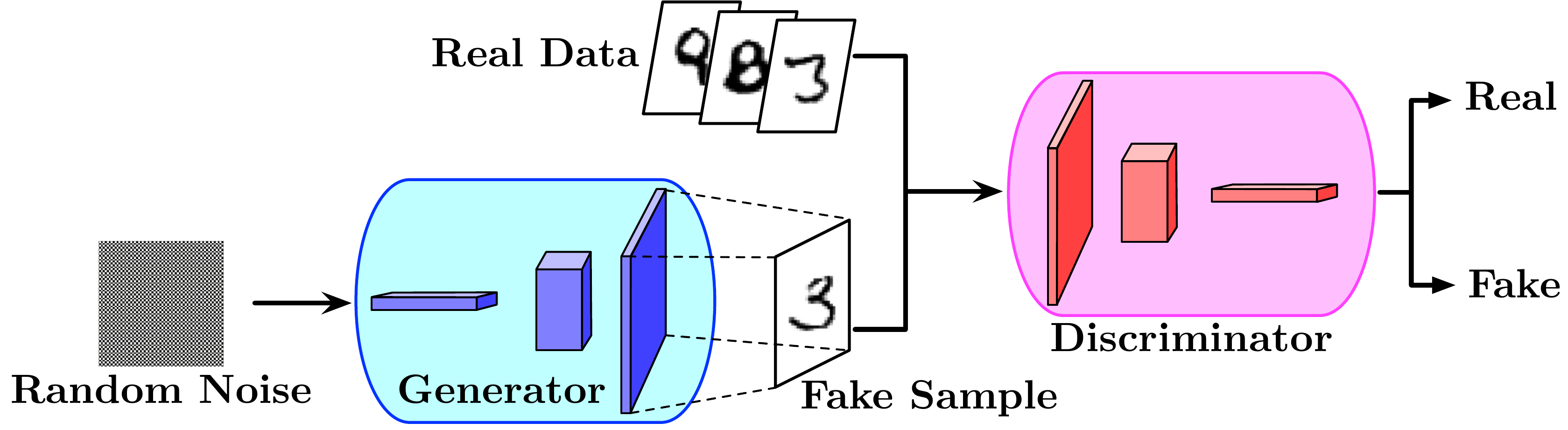}
\caption{The architecture of GANs}
\label{fig:gan}
\end{figure}

From the perspective of statistics, GANs have stood out as an important unsupervised method for learning target data distributions. Different from explicit distribution estimators, such as the kernel density estimator, GANs implicitly learn the data distribution and act as samplers to generate new fake samples mimicking the data distribution (see Figure \ref{fig:gan}).

To estimate a data distribution $\mu$, GANs solve the following minimax optimization problem 
\begin{equation}\label{eq:populationgan}
(g^*, f^*) \in \argmin_{g \in \cG} \max_{f \in \cF} ~\EE_{z \sim \rho} [f(g(z))] - \EE_{x \sim \mu} [f(x)],
\end{equation}
where $\cG$ denotes a class of generators, $\cF$ denotes a symmetric class (if $f\in \cF$, then $-f \in \cF$) of discriminators, and $z$ follows some easy-to-sample distribution $\rho$, e.g., a uniform distribution. The estimator of $\mu$ is given by a pushforward distribution of $\rho$ under $g^*$.

The inner maximization problem of \eqref{eq:populationgan} is an Integral Probability Metric (IPM, \cite{muller1997integral}), which quantifies the discrepancy between two distributions $\mu$ and $\nu$ w.r.t. the symmetric function class $\cF$:
\begin{equation*}
d_{\cF}(\mu, \nu) = \sup_{f \in \cF}~ \EE_{x \sim \mu} [f(x)] - \EE_{y \sim \nu} [f(y)].
\end{equation*}
Accordingly, GANs essentially minimize an IPM between the generated distribution and the data distribution. IPM unifies many standard discrepancy metrics. For example, when $\cF$ is taken to be all $1$-Lipschitz functions, $d_{\cF}(\cdot, \cdot)$ is the Wasserstein-1 distance $W_1(\cdot, \cdot)$; when $\cF$ is the class of all indicator functions, $d_{\cF}(\cdot, \cdot)$ is the total variation distance; when $\cF$ is taken as neural networks, $d_{\cF}(\cdot, \cdot)$ is the so-called ``neural net distance'' \cite{arora2017generalization}.

In practical GANs, the generator and discriminator classes $\cG$ and $\cF$ are parametrized by neural networks. We denote $\cG = \cG_{\textrm{NN}}$ and $\cF = \cF_{\textrm{NN}}$ to emphasize such a parameterization. In this paper, we focus on using feedforward ReLU networks, since it has wide applications \cite{nair2010rectified, glorot2011deep, maas2013rectifier} and can ease the notorious vanishing gradient issue during training, which commonly arises with sigmoid or hyperbolic tangent activations \cite{glorot2011deep, Goodfellow-et-al-2016}

When $n$ samples of the data distribution $\mu$ are given, denoted as $\{x_i\}_{i=1}^n$, one can replace $\mu$ in \eqref{eq:populationgan}  by its empirical counterpart $\hat{\mu}_n$, and \eqref{eq:populationgan} becomes
\begin{equation}\label{eq:empiricalgan}
(g^*_\theta, f^*_\omega) \in \argmin_{g_\theta \in \cG_{\textrm{NN}}} \max_{f_\omega \in \cF_{\textrm{NN}}} ~ \EE_{z \sim \rho} [f_\omega(g_\theta(z))]  - \frac{1}{n} \sum_{i=1}^n f_\omega(x_i),
\end{equation}
where $\theta$ and $\omega$ are parameters in the generator and discriminator networks, respectively. The empirical estimator of $\mu$ given by GANs is the pushforward distribution of $\rho$ under $g_\theta^*$, denoted by $(g_\theta^*)_\sharp \rho$.

In contrast to the prevalence of GANs in applications, there are very limited works on the theoretical properties of GANs \cite{arora2017generalization,bai2018approximability,liang2018well, schreuder2021statistical, Block2021ANE}. This paper focuses on the following fundamental questions from a theoretical point of view:

\begin{itemize}

\item {\it (Q1)}. What types of distributions can be approximated by a deep neural network generator?

\item {\it (Q2)}. If the distribution can be approximated, what is the statistical rate of estimation using GANs?

\item {\it (Q3)}. If further there are unknown low-dimensional structures in the data distribution, can GANs capture the low-dimensional data structure and enjoy a fast rate of estimation?
\end{itemize}

\subsection{Main results}

\noindent \textbf{Results in Euclidean space}. To address {\it (Q1)} and {\it (Q2)}, we show that, if the generator and discriminator network architectures are properly chosen, GANs can learn distributions with H\"{o}lder densities supported on a convex domain. Specifically, we consider a data distribution $\mu$ supported on a compact convex subset $\cX \subset \RR^d$, where $d$ is the data dimension. We assume $\mu$ has an $\alpha$-H\"{o}lder density with respect to Lebesgue measure in $\RR^d$ and the density is lower bounded away from $0$ on $\cX$.

Our generator and discriminator network architectures are explicitly chosen --- we specify the width and depth of the network, total number of neurons, and total number of weight parameters (details are provided in Section \ref{sec:theory}). Roughly speaking, the generator needs to be flexible enough to approximately transform an easy-to-sample distribution to the data distribution, and the discriminator is powerful enough to distinguish the generated distribution from the data distribution.

Let $g^*_\theta$ be the optimal solution of \eqref{eq:empiricalgan}, and then $(g^*_\theta)_\sharp \rho$ is the generated data distribution as an estimation of $\mu$. Our main result can be summarized as, for any $\beta \geq 1$, if the generator and discriminator network architectures are properly chosen, then
\begin{equation}\label{eq:highdrate}
\EE\left[d_{\cH^\beta}\left((g^*_\theta)_\sharp\rho, \mu \right)\right] = \tilde{O}\big(n^{-\frac{\beta}{2\beta + d}} \log^2 n\big),
\end{equation}
where the expectation is taken over the randomness of samples and $\tilde{O}$ hides polynomial factors in $\beta, d$. 
It shows that the $\beta$-H\"{o}lder IPM between the generated distribution and the data distribution converges at a rate depending on the H\"{o}lder index $\beta$ and dimension $d$. When $\beta = 1$, our theory implies that GANs can estimate any distribution with a H\"{o}lder density under the Wasserstein-$1$ distance. A comparison to closely related works is provided in Section \ref{sec:related}.

In our analysis, we decompose the distribution estimation error into a statistical error and an approximation error by an oracle inequality.
A key step is to properly choose the generator network architecture to control the approximation error. Specifically, the generator architecture allows an accurate approximation to a data transformation $T$ such that $T_\sharp \rho = \mu$. The existence of such a transformation $T$ is guaranteed by optimal transport theory \cite{villani2008optimal}, and holds universally for all the data distributions with H\"{o}lder densities.

\noindent \textbf{Results in low-dimensional linear subspace}. Moreover, we provide a positive answer to {\it (Q3)} by considering data distributions with low-dimensional linear structures. Specifically, we assume the data support $\cX \subset \RR^d$ is a compact subset of a $q$-dimensional linear subspace. Let columns of $A \in \RR^{d \times q}$ denote a set of orthonormal basis of the $q$-dimensional linear subspace.
We assume the pushforward $A^{\top}_\sharp \mu$ of data distribution has a density function $p_\mu$ defined in $\RR^q$, and $p_\mu$ is $\alpha$-H\"{o}lder continuous and lower bounded away from $0$ on its support. We leverage the data geometric structures and generate samples by transforming an easy-to-sample distribution $\rho$ in $\RR^q$. With a proper choice of the generator and discriminator network architectures, the statistical error of GANs converges at a fast rate
\begin{equation}\label{eq:lowdrate}
\EE\left[\dw\left((g^*_\theta)_\sharp\rho, \mu \right)\right] = \tilde{O}\left(n^{-\frac{1}{2 + q}} \log^2 n\right).
\end{equation}
By taking $\beta = 1$ in \eqref{eq:highdrate}, we note that \eqref{eq:lowdrate} enjoys a faster statistical convergence in the Wasserstein-1 distance, since the exponent only depends on the intrinsic dimension $q$.
Meanwhile, \eqref{eq:lowdrate} indicates that GANs can circumvent the curse of ambient dimensionality when data are supported on a low-dimensional subspace.

From a technical point of view, a key challenge in obtaining the fast rate in \eqref{eq:lowdrate} is to prove that the generator can capture the unknown linear structure in data. We achieve this by introducing a learnable linear projection layer in the generator, and pairing it with an ``anti-projection'' layer in the discriminator. We show (see Lemma \ref{lemma:subspace_match}) that by optimizing \eqref{eq:empiricalgan}, the linear projection layer in generator accurately recovers the linear subspace of data.

\noindent {\bf Results in low-dimensional mixture model}. We further consider learning low-dimensional mixture distributions. In particular, we assume the data distribution $\mu$ consists of $K$ components, i.e., $\mu = \sum_{k=1}^K p_k \mu_k$ with $p_k > 0, \sum_{k=1}^K p_k = 1.$
Each $\mu_k$ can be represented as low-dimensional pushforward distribution $\mu_k = (g_k)_\sharp {\rm Unif}([0, 1]^q)$ ($q \leq d$), where $g_k$ is a H\"{o}lder mapping. As can be seen, $\mu$ exhibits low-dimensional structures and include the linear subspace setting as a special case. Mixture data are widely seen in practice, such as in image classification problems \cite{lee2002unsupervised, chen2010predictive, fang2017robust, caron2018deep}.

To estimate $\mu$, we transform a $(q+1)$-dimensional uniform distribution $\rho$. We optimize \eqref{eq:empiricalgan} over properly chosen generator and discriminator networks. Then we prove that the statistical error of GANs also converges at a fast rate
\begin{align}\label{eq:mix_rate}
\EE \left[\dw\left((g_\theta^*)_\sharp \rho, \mu \right)\right] \leq C_\delta d n^{-\frac{1}{q + \delta}},
\end{align}
where $\delta > 0$ is any positive constant and $C_\delta$ is independent of $n$. \eqref{eq:mix_rate} further demonstrates that GANs are adaptive to data intrinsic structures and better explains the empirical success of GANs.

\noindent {\bf Roadmap}: The rest of the paper is organized as follows: Section \ref{sec:theory} presents the statistical guarantees of GANs for learning data distributions with a H\"{o}lder density. 
Section \ref{sec:lowd} extends the statistical theory to low-dimensional linear data, and shows that GANs can adapt to the intrinsic structures. Section \ref{sec:mixture} further shows that GANs are adaptive to low-dimensional nonlinear mixture models. Section \ref{sec:related} compares our results to existing literature. Section \ref{sec:proof} proves the theories in Section \ref{sec:theory} and Section \ref{sec:lowdproof} presents an outline for establishing results in Section \ref{sec:lowd}. Lastly, Section \ref{sec:discuss} concludes the paper and discusses related topics.

\noindent {\bf Notations}: Given a real number $\alpha$, we denote $\lfloor \alpha \rfloor$ as the largest integer smaller than $\alpha$ (in particular, if $\alpha$ is an integer, $\lfloor \alpha \rfloor = \alpha - 1$). Given a vector $v \in \RR^d$, we denote its $\ell_2$ norm by $\norm{v}_2$, the $\ell_\infty$ norm as $\norm{v}_\infty = \max_{i} \vert v_i\vert$, and the number of nonzero entries by $\norm{v}_0$. Given a matrix $A \in \RR^{d_1 \times d_2}$, we denote $\norm{A}_\infty = \max_{i, j} \vert A_{i, j}\vert$ as the maximal magnitude of entries and the number of nonzero entries by $\norm{A}_0$. We denote function $L_\infty$-norm as $\norm{f}_\infty = \sup_x \lvert f(x)\rvert$. For a multivariate transformation $T: \RR^{d_1} \mapsto \RR^{d_2}$, and a given distribution $\rho$ in $\RR^{d_1}$, we denote the pushforward distribution as $T_\sharp \rho$, i.e., for any measurable set $\Omega$, $T_\sharp \rho (\Omega) = \rho (T^{-1} (\Omega))$.

\section{Preliminary}\label{sec:pre}
In this section, we introduce distributions with H\"{o}lder densities, discrepancy metrics between distributions, optimal transport theory, and neural network architectures.

\subsection{H\"{o}lder density and IPM}
Throughout the paper, we focus on estimating a data distribution $\mu$ supported on domain $\cX$. In Section \ref{sec:theory}, we consider $\mu$ having a well-defined density function $p_\mu$ with respect to the Lebesgue measure in $\RR^d$. Moreover, we characterize the smoothness of $p_\mu$ by H\"{o}lder continuity.
\begin{definition}[$\alpha$-H\"{o}lder Function]
Given a H\"{o}lder index $\alpha > 0$, a function $f : \cX \mapsto \RR$ belongs to the H\"{o}lder class $\cH^\alpha(\cX)$, if and only if, for any multi-index $s \in \NN^d$ with $\lvert s \rvert = \sum_{i=1}^d s_i \leq \lfloor \alpha \rfloor$, the derivative $\partial^{s} f = \frac{\partial^{\lvert s \rvert} f}{\partial x_1^{s_1} \dots \partial x_d^{s_d}}$ exists, and for any $s$ satisfying $\lvert s \rvert = \lfloor \alpha \rfloor$, we have
\begin{align*}
\sup_{x \neq y}~ \frac{\big\vert \partial^{s} f(x) - \partial^{s} f(y)\big\vert}{\norm{x - y}_2^{\alpha - \lfloor \alpha \rfloor}} < \infty \quad \textrm{for any $x, y$ in the interior of $\cX$}.
\end{align*}
\end{definition}
When $f \in \cH^\alpha(\cX)$, we define its H\"{o}lder norm as
\begin{align*}
\norm{f}_{\cH^{\alpha}(\cX)} = \sum_{0 \leq s \leq \lfloor \alpha \rfloor} \norm{\partial^{s} f}_{\infty} + \sum_{\lvert s \rvert = \lfloor \alpha \rfloor} \sup_{x \neq y} \frac{\big\vert \partial^{s} f(x) - \partial^{s} f(y)\big\vert}{\norm{x - y}_2^{\alpha - \lfloor \alpha \rfloor}}.
\end{align*}
The H\"{o}lder continuity above can be generalized to multi-dimensional mappings. Specifically, for $g = [g_1, \dots, g_{d}]^\top : \cX \mapsto \RR^d$, we say it is $\alpha$-H\"{o}lder if and only if each coordinate mapping $g_i$ is $\alpha$-H\"{o}lder. In addition, the H\"{o}lder norm of $g$ is defined as $\norm{g}_{\cH^\alpha(\cX)} = \sum_{i=1}^d \norm{g_i}_{\cH^\alpha(\cX)}$.

In order to measure the performance of GANs in estimating target distribution $\mu$, we adopt the Integral Probability Metric (IPM) with respect to H\"{o}lder discriminative functions. In particular, suppose GAN generates a fake distribution $\nu$. For any $\beta \geq 1$, we denote
\begin{align*}
d_{\cH^\beta}(\mu, \nu) = \sup_{f \in \cH^\beta} \EE_{x \sim \mu} [f(x)] - \EE_{y \sim \nu} [f(y)].
\end{align*}
\begin{remark}
It is convenient to restrict $\cH^\beta$ in IPM $d_{\cH^\beta}$ to have a bounded radius. Specifically, for any $f \in \cH^\beta$, we assume $\norm{f}_{\cH^\beta} \leq C$ for some constant $C$. Otherwise, we can simply rescale $f$ while maintaining the discriminative power of the IPM. In addition, since IPMs are translation invariant, meaning that discriminative functions $f$ and $f+c$ for some constant $c$ are equivalent. Therefore, we also assume $f(0) = 0$ for simplicity.
\end{remark}
In the special case of $\beta = 1$, $d_{\cH^\beta}(\cdot, \cdot)$ shares the same discriminative power as Wasserstein-1 distance, which can be defined using the dual formulation,
\begin{align*}
\dw(\mu, \nu) = \sup_{\norm{f}_{\rm Lip} \leq 1} \EE_{x \sim \mu} [f(x)] - \EE_{y \sim \nu} [f(y)].
\end{align*}
In the right-hand side above, $\norm{f}_{\rm Lip}$ denotes the Lipschitz coefficient of $f$. It can be checked that Lipschitz functions are H\"{o}lder continuous with H\"{o}lder index $\alpha = 1$. Therefore, $\dw(\cdot, \cdot)$ is equivalent to $d_{\cH^1}(\cdot, \cdot)$.

\subsection{Optimal transport}
GANs are closely related to Optimal Transport (OT, \cite{santambrogio2010models, galichon2017survey, ganin2014unsupervised, courty2016optimal}), as the generator essentially learns a pushforward mapping of an easy-to-sample distribution. A typical problem in OT is the following: Let $\cX, \cZ$ be subsets of $\RR^d$. Given two probability spaces $(\cX, \mu)$ and $(\cZ, \rho)$, OT aims to find a transformation $T : \cZ \mapsto \cX$, such that $T(z) \sim \mu$ for $z \sim \rho$.
In general, the transformation $T$ may neither exist nor be unique. Fortunately, in the case that $\mu$ and $\rho$ have H\"{o}lder densities $p_\mu$ and $p_\rho$, respectively, the Monge map ensures the existence of a H\"{o}lder transformation $T^*$, when $\cX$ is convex. In particular, the Monge map $T^*$ is the solution to the following optimization problem:
\begin{equation}\label{eq:monge}
T^* \in \argmin_T~~ \EE_{z \sim \rho} \left[\ell(z, T(z))\right],\quad\textrm{subject to}\quad T_\sharp \rho = \mu,
\end{equation}
where $\ell$ is a cost function. \eqref{eq:monge} is known as the Monge problem. When $\cX$ is convex and the cost function is quadratic, the solution to \eqref{eq:monge} satisfies the Monge-Amp\`{e}re equation \cite{monge1784memoire}. The regularity of $T^*$ was proved in \cite{caffarelli1992regularity, caffarelli1992boundary, caffarelli1996boundary} and \cite{urbas1988regularity, urbas1997second} independently. Their main result is summarized in the following lemma.
\begin{lemma}[\cite{caffarelli1992regularity}]\label{lemma:monge}
Suppose $\mu$ and $\rho$ both have $\alpha$-H\"{o}lder densities, and the support $\cX$ is convex. Then there exists a transformation $T^* : \cZ \mapsto \cX$ such that $T^*_\sharp \rho = \mu$. Moreover, this transformation $T^*$ belongs to the H\"{o}lder class $\cH^{\alpha+1}(\cZ)$.
\end{lemma}
We will see in Theorem \ref{thm:generator}, Lemma \ref{lemma:monge} provides important guidelines for choosing proper generator networks in distribution estimation.

\subsection{Network architecture and universal approximation}
Recall that we parameterize the generator and discriminator in GANs as ReLU neural networks, which takes the following form
\begin{equation}\label{eq:fnn}
f(x) = W_{L} \cdot \textrm{ReLU}(W_{L-1} \cdots \textrm{ReLU}(W_{1} x + b_{1}) \cdots + b_{L-1}) + b_{L},
\end{equation}
with $W_{i}$'s and $b_i$'s being weight matrices and intercepts, respectively. The ReLU activation function computes ${\rm ReLU}(a) = \max\{a, 0\}$ and is applied entrywise. When optimize \eqref{eq:empiricalgan} during training, we take $\cF_{\rm NN}$ and $\cG_{\rm NN}$ as a class of neural networks, which we refer to as a network architecture. We define the following prototypical network architecture:
\begin{align}\label{eq:NN_architecture}
\begin{split}
{\rm NN}(R, \kappa, L, p, J, d_{\rm in}, d_{\rm out}) = \Big\{
&g : \RR^{d_{\rm in}} \mapsto \RR^{d_{\rm out}} ~ \big\vert~ g ~\textrm{in}  \textrm{ the form of \eqref{eq:fnn},} \\
&\textrm{with $L$ layers and max width $p$,} \\
&\norm{g_i}_\infty \leq R, ~\norm{W_{i}}_{\infty} \leq \kappa, ~\norm{b_{i}}_\infty \leq \kappa, \\
& \sum_{j=1}^L \norm{W_{i}}_0 + \norm{b_{i}}_0 \leq J, \textrm{ for}~ i = 1, \dots, L \Big\}.
\end{split}
\end{align}
In later sections, we will take generator and discriminator networks based on \eqref{eq:NN_architecture} with appropriate configuration parameters.

A key property of the network architecture \eqref{eq:NN_architecture} is its universal approximation ability \cite{cybenko1989approximation, hornik1991approximation, chui1992approximation, barron1993universal, mhaskar1996neural}.
Recently, \cite{yarotsky2017error} established a universal approximation theory for ReLU networks, where a network with optimal size is constructed to approximate any Sobolev functions. We extend to H\"{o}lder functions and summarize the result in the following lemma, whose proof is deferred to Appendix \ref{pf:approx}.
\begin{lemma}[Universal Approximation]\label{lemma:approx}
Let $\cX$ be a compact domain in $[0,1]^d$.
Given any $\delta \in (0, 1)$, there exists a ReLU network architecture such that, for any $f \in \cH^\alpha(\cX)$ for $\alpha \geq 1$, if the weight parameters are properly chosen, the network yields a function $\hat{f}$ for the approximation of $f$ with $\norm{\hat{f} - f}_\infty \leq \delta$. Such a network has (i) no more than $c(\log \frac{1}{\delta} + 1)$ layers, and (ii) at most $c' \delta^{-\frac{d}{\beta}} (\log \frac{1}{\delta} + 1)$ neurons and weight parameters, where the constants $c$ and $c'$ depend on $d$, $\alpha$, and H\"{o}lder norm $\norm{f}_{\cH^\alpha(\cX)}$.
\end{lemma}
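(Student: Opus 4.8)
The plan is to follow the classical localization-plus-polynomial strategy behind \citet{yarotsky2017error}. The first ingredient is that a ReLU network can approximate the squaring map $h(x) = x^2$ on $[0,1]$ to accuracy $\delta$ using only $O(\log(1/\delta))$ layers and neurons: iterating the tent map $T(x) = 2\min\{x, 1-x\} = 2\,\textrm{ReLU}(x) - 4\,\textrm{ReLU}(x - 1/2)$ produces, after $m$ compositions, a sawtooth function $g_m$, and a suitable telescoping combination $x - \sum_{s=1}^m g_s(x)\,4^{-s}$ equals $x^2$ up to error $O(4^{-m})$; hence $m \asymp \log(1/\delta)$ layers of constant-size blocks suffice. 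Polarization $xy = \tfrac14\big[(x+y)^2 - (x-y)^2\big]$ then turns this into an approximate multiplication gate $\widetilde{\mathrm{mult}}(x,y)$ with the same depth and size budget, and composing such gates in a binary tree yields approximate monomials $x_1^{a_1}\cdots x_d^{a_d}$ of total degree at most $\lfloor\beta\rfloor$, with depth $O(\log(1/\delta)\cdot\log\lfloor\beta\rfloor)$ and a neuron count polynomial in $\lfloor\beta\rfloor$, and hence approximate polynomials of degree $\lfloor\beta\rfloor$.

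Second, I would localize. Fix a uniform grid on $[0,1]^d$ of spacing $1/N$ with $N \asymp \delta^{-1/\beta}$, and for each grid point $v = m/N$, $m \in \{0,\dots,N\}^d$, define the bump $\phi_v(x) = \prod_{j=1}^d \psi(N x_j - m_j)$, where $\psi$ is the standard hat function supported on $[-1,1]$. The family $\{\phi_v\}$ is a partition of unity, each $\phi_v$ is a product of $d$ piecewise-linear univariate functions (hence implementable up to accuracy $\delta$ by the multiplication gate above), and at most $2^d$ of the $\phi_v$ are nonzero at any point. On the support of $\phi_v$ I approximate $f$ by its Taylor polynomial $P_v$ of degree $\lfloor\beta\rfloor$ about $v$; since $f \in \cH^\beta([0,1]^d)$, the remainder satisfies $\norm{f - P_v}_\infty$ over $\textrm{supp}\,\phi_v$ being at most a constant (depending on the H\"older seminorm of $f$) times $N^{-\beta} = O(\delta)$. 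The network output is $\widehat f(x) = \sum_v \phi_v(x) P_v(x)$, realized as a single ReLU network whose sub-networks compute the bumps and the Taylor polynomials from the multiplication gates and whose last layer performs the (at most $2^d$-term, pointwise) summation; the triangle inequality gives $\norm{\widehat f - f}_\infty = O(\delta)$ after rescaling the accuracy constants.

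Third comes the accounting. There are $N^d \asymp \delta^{-d/\beta}$ cells, each contributing $O(1)$ neurons up to the $O(\log(1/\delta))$ overhead of the squaring gadget, and the per-cell sub-networks are placed in parallel so that the overall depth stays $O(\log(1/\delta) + 1)$ rather than multiplying across cells; the weight count scales identically, yielding $O(\delta^{-d/\beta}(\log(1/\delta)+1))$ neurons and weights with constants depending on $d$, $\beta$, and $f$, exactly as stated. I expect the main obstacle to be error propagation through composed approximate multiplications: a product of several almost-correct factors can amplify error, so one must (i) clip every intermediate value to a fixed bounded interval via $\textrm{ReLU}$ so that a Lipschitz bound for $\widetilde{\mathrm{mult}}$ applies, and (ii) set the per-gate accuracy to a sufficiently small polynomial in $\delta$ — which costs only an extra constant factor in the depth $\log(1/\delta)$ — so that the error accumulated over a degree-$\lfloor\beta\rfloor$ polynomial and over the $2^d$ active bumps remains $O(\delta)$. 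The companion subtlety is arranging that this overhead is additive in the depth and merely a multiplicative $\log$ factor (not a per-cell factor) in the neuron count, which is why the squaring structure is shared uniformly across all cells.
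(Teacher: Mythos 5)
Your proposal is correct and follows essentially the same route as the paper, which proves this lemma by invoking \emph{Theorem 1} of \citet{yarotsky2017error} and sketching exactly the two steps you elaborate: approximate $f$ by a partition-of-unity-weighted sum of local Taylor polynomials, then implement each polynomial with ReLU networks built from the sawtooth-based approximate multiplication gate. Your depth/size accounting and your handling of error propagation through composed approximate multiplications match the cited construction, so there is nothing to add.
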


\section{Distribution estimation in Euclidean space}\label{sec:theory}

We consider a data distribution $\mu$ supported on a convex subset $\cX \subset \RR^d$ and assume that $\mu$ has a density function $p_\mu$ with respect to the Lebesgue measure in $\RR^d$. GANs seek to estimate the data distribution $\mu$ by transforming some easy-to-sample distribution $\rho$ supported on domain $\cZ \subset \RR^d$, such as a uniform distribution. Our main results provide statistical guarantees of GANs for the estimation of $\mu$, based on the following assumptions.
\begin{assumption}\label{assump1}
The domains $\cX$ and $\cZ$ are compact, and $\cX$ is convex. There exists a constant $B>0$ such that for any $x \in \cX$ or $x \in \cZ$, $\norm{x}_\infty \leq B$.
\end{assumption}

\begin{assumption}\label{assump2}
Given a H\"{o}lder index $\alpha > 0$, the density function $p_\mu$ of $\mu$ (w.r.t. Lebesgue measure in $\RR^d$) belongs to the H\"{o}lder class $\cH^\alpha(\cX)$ with $\norm{p_\mu}_{\cH^{\alpha}(\cX)} \leq C$ for some constant $C > 0$. Meanwhile, $p_\mu$ is lower bounded, i.e., $$\inf_{x \in \cX}~ p_\mu(x) \geq \tau$$ for some constant $\tau>0.$
\end{assumption}

\begin{assumption}\label{assump3}
The easy-to-sample distribution $\rho$ has a $C^\infty$ (smooth) density function $p_\rho$.
\end{assumption}

H\"{o}lder regularity is commonly used in literature on smooth density estimation \cite{wasserman2006all, tsybakov2008introduction}. In the remaining of the paper, 
we occasionally omit the domain in H\"{o}lder spaces when it is clear from the context. The condition of $p_\mu$ being lower bounded is a common technical assumption in the optimal transport theory \cite{moser1965volume, caffarelli1996boundary}. This condition and the convexity of $\cX$ guarantee that, there exists a H\"{o}lder transformation $T$ such that $T_\sharp \rho = \mu$ (see Lemma \ref{lemma:monge}). Besides, Assumption \ref{assump3} is always satisfied, since $\rho$ is often taken as a uniform distribution.

Given Assumption \ref{assump1} - \ref{assump3}, we set the generator network architecture as
\begin{equation*}
\cG_{\textrm{NN}}(R, \kappa, L, p, J) = {\rm NN}(R, \kappa, L, p, J, d_{\rm in} = d, d_{\rm out} = d)
\end{equation*}
and the discriminator network architecture as
\begin{equation*}
\cF_{\textrm{NN}}(\bar{R}, \bar{\kappa}, \bar{L}, \bar{p}, \bar{J}) = {\rm NN}(\bar{R}, \bar{\kappa}, \bar{L}, \bar{p}, \bar{J}, d_{\rm in} = d, d_{\rm out} = 1).
\end{equation*}

We first show a properly chosen generator network can universally approximate data distributions with a H\"older density. 
\begin{theorem}[Distribution approximation theory]\label{thm:generator}
For any data distribution $(\cX, \mu)$ and easy-to-sample distribution $(\cZ, \rho)$ satisfying Assumption \ref{assump1} - \ref{assump3}, there exists an $(\alpha+1)$-H\"{o}lder continuous transformation $T: \RR^{d} \rightarrow \RR^d$ such that $T_\sharp \rho = \mu$. Moreover, 
given any $\epsilon \in (0, 1)$, there exists a generator network with configuration
\begin{equation}\label{eq:gnnsize}
\begin{split}
& L = O(\log(1/\epsilon)), \quad p = O(d \epsilon^{-\frac{d}{\alpha+1}}), \quad J = O(d \epsilon^{-\frac{d}{\alpha+1}} \log (1/\epsilon)), \\
& \hspace{1.2in} R = B, \quad \kappa = \max\{C, B\},
\end{split}
\end{equation}
such that, if the weight parameters of this network are properly chosen, then it yields a transformation $g_\theta$ satisfying $$\max_{z \in \cZ} \norm{g_\theta(z) - T(z)}_\infty \leq \epsilon \quad \text{and} \quad W_1((g_\theta)_\sharp \rho, \mu) \leq \sqrt{d} \epsilon.$$
\end{theorem}

In Theorem \ref{thm:generator}, the existence of a transformation $T$ is guaranteed by optimal transport theory (Lemma \ref{lemma:monge}). Furthermore, we explicitly choose a generator network architecture to approximately realize $T$, such that the easy-to-sample distribution is approximately transformed to the data distribution.

Our statistical result is the following finite-sample estimation error bound in terms of the H\"{o}lder IPM between $(g_\theta^*)_\sharp \rho$ and $\mu$, where $g^*_\theta$ is the optimal solution of GANs in \eqref{eq:empiricalgan}. We use $O(\cdot)$ to hide constant factors depending on $B$, $C$, $\alpha$, and $\beta$; $\tilde{O}(\cdot)$ further hides polynomial factors of $d$ and logarithmic factors of $n$.

\begin{theorem}[Statistical estimation theory]\label{thm:main}
Suppose Assumption \ref{assump1} -- \ref{assump3} hold. For any $\beta \geq 1$, choose $\epsilon = n^{-\frac{\beta}{2\beta + d}}$ in Theorem \ref{thm:generator} for the generator network
and
\begin{align*}
& \bar{L} = O\left(\frac{\beta}{2\beta + d} \log n \right), \quad \bar{p} = O\left(n^{\frac{d}{2\beta + d}}\right), \quad \bar{J} = O\left(\frac{\beta}{2\beta + d} n^{\frac{d}{2\beta + d}} \log n\right), \\
& \hspace{1.7in} \bar{R} = C, \quad \bar{\kappa} = C, 
\end{align*}
for the discriminator network. Then it holds
\begin{align}
\EE \left[d_{\cH^\beta}((g_\theta^*)_\sharp \rho, \mu)\right] = \tilde{O}\left(n^{-\frac{\beta}{2\beta + d}}\log^2 n \right).
\label{thm2eq}
\end{align}
\end{theorem}
Theorem \ref{thm:main} demonstrates that GANs can effectively learn data distributions, with a convergence rate depending on the smoothness of the function class in IPM and the dimension $d$.

We remark that, both networks have uniformly bounded outputs. Such a requirement can be achieved by adding an additional clipping layer to the end of the network, in order to truncate the output in the range $[-R, R]$. Specifically, we can use
$
g(a) =  \max\{-R, \min\{a, R\}\} = \textrm{ReLU}(a-R) - \textrm{ReLU}(a + R) - R.
$

In the case that only $m$ samples from the easy-to-sample distribution $\rho$ are collected, GANs solve the following empirical minimax problem
\begin{align}\label{eq:finitegan}
\min_{g_\theta \in \cG_{\textrm{NN}}} \max_{f_\omega \in \cF_{\textrm{NN}}} \frac{1}{m} \sum_{i=1}^m f_\omega(g_\theta(z_i)) - \frac{1}{n} \sum_{j=1}^n f_\omega(x_j).
\end{align}
We denote $(g_{\theta}^{*, m}, f_\omega^{*, m})$ as the optimal solution of \eqref{eq:finitegan}. We show in the following corollary that GANs retain similar statistical guarantees for distribution estimation with finite generated samples. 
\begin{corollary}\label{cor:finitesample}
Suppose Assumption \ref{assump1} -- \ref{assump3} hold and $m \geq n$. We choose
\begin{align*}
& \hspace{0.6in} L = O\left(\frac{\alpha+1}{2(\alpha+1) + d} \log m\right), \quad p = O\left(d m^{\frac{d}{2(\alpha+1)+d}} \right), \\
& \quad J = O\left(\frac{d(\alpha+1)}{2(\alpha+1) +d} m^{\frac{d}{2(\alpha+1)+d}} \log m \right),\quad R = B, \quad \kappa = \max\{C, B\},
\end{align*}
for the generator network and the same architecture as in Theorem \ref{thm:main} for the discriminator network. Then it holds
\begin{align*}
\EE \left[d_{\cH^\beta}((g_\theta^{*, m})_\sharp \rho, \mu)\right] = \tilde{O}\left(n^{-\frac{\beta}{2\beta + d}} + m^{-\frac{\alpha+1}{2(\alpha+1) + d}}\right).
\end{align*}
\end{corollary}

Here $\tilde{O}$ also hides a logarithmic factors $m$. As it is often cheap to obtain a large amount of samples from $\rho$, the convergence rate in Corollary \ref{cor:finitesample} is dominated by $n^{-\frac{\beta}{2\beta + d}}$ whenever $m \geq n^{\frac{\beta}{\alpha+1} \frac{2(\alpha+1) + d}{2\beta + d} \vee 1}$.

Theorem \ref{thm:main} and Corollary \ref{cor:finitesample} suggest that GANs suffer from the curse of data dimensionality. However, such an exponential dependence on the dimension $d$ is inevitable without further assumptions on the data, as indicated by the minimax optimal rate of distribution estimation: To estimate a distribution $\mu$ with a $\cH^\alpha(\cX)$ density, the minimax optimal rate under the $\cH^\beta$ IPM loss satisfies
\begin{align*}
\inf_{\tilde{\mu}_n} \sup_{\mu \in \cH^\alpha} \EE \left[d_{\cH^\beta}(\tilde{\mu}_n, \mu) \right] \gtrsim n^{-\frac{\alpha+\beta}{2\alpha + d}} + n^{-\frac{1}{2}},
\end{align*}
where $\tilde{\mu}_n$ is any estimator of $\mu$ based on $n$ data points \cite{liang2018well, tang2022minimax}.

\section{Distribution estimation in low-dimensional linear subspace}\label{sec:lowd}
In this section, we 
prove that GANs are adaptive to unknown low-dimensional linear structures in data. We consider the data domain $\cX \subset \RR^d$ being a compact subset of a $q$-dimensional linear subspace with $q \ll d$. Our analysis holds for general $q \leq d$, while $q \approx d$ is less of interest as practical data sets are often low-dimensional with intrinsic dimension much smaller than ambient dimension \cite{tenenbaum2000global, roweis2000nonlinear, pope2021intrinsic}.
\begin{assumption}\label{assumption:lowdmanifold}
The data domain $\cX$ is compact, i.e., there exists a constant $B > 0$ such that for any $x \in \cX$, $\norm{x}_\infty \leq B$. Moreover, $\cX$ is a convex subset of a $q$-dimensional linear subspace in $\RR^d$, and the span of $\cX$ is the $q$-dimensional subspace.
\end{assumption}
\begin{figure}[!htb]
\centering
\includegraphics[width = 0.85\textwidth]{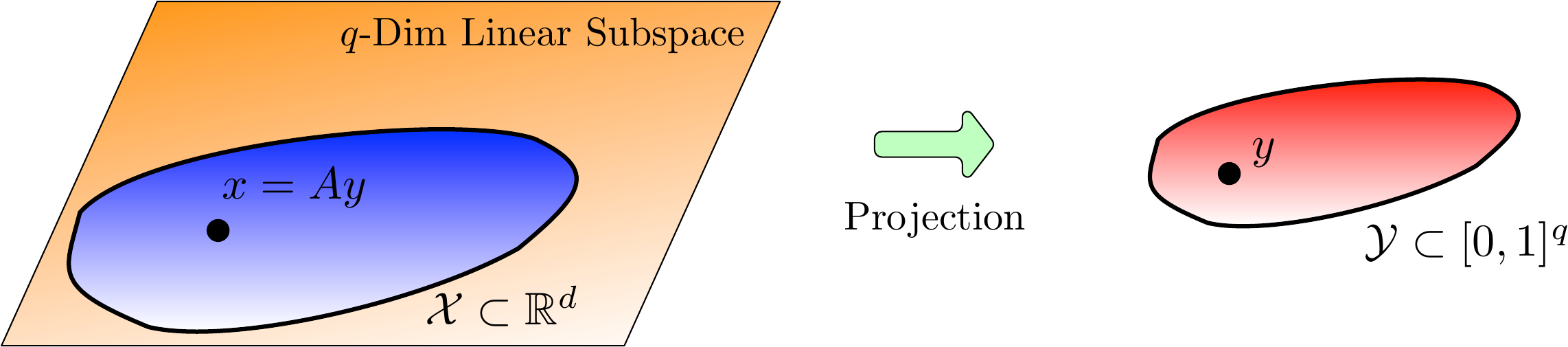}
\caption{Low-dimensional linear structures in $\cX$.}
\label{fig:linear_lowd}
\end{figure}
Under Assumption \ref{assumption:lowdmanifold}, a data point $x \in \cX$ can be represented as $A y$, where $y \in \RR^{q}$ and $A \in \RR^{d \times q}$ is a linear transformation (See graphical illustration in Figure \ref{fig:linear_lowd}). The following lemma formally justifies the existence of the linear transformation $A$.
\begin{lemma}\label{lemma:linearprojection}
Suppose Assumption \ref{assumption:lowdmanifold} holds. Consider a matrix $A \in \RR^{d \times q}$ with columns being an orthonormal basis of the $q$-dimensional linear subspace. Then it holds that $\cY = A^\top \cX = \{A^\top x : x \in \cX\}$ is a compact and convex subset of $\RR^{q}$, and $A \cY = \cX$.
\end{lemma}
The proof is deferred to Appendix \ref{pf:linearprojection}. The projected domain $\cY$ captures the intrinsic geometric structures in $\cX$. More importantly, using transformation $A$ allows us to define smoothness of the target data distribution. Specifically, we consider a data distribution $\mu$ supported on $\cX$. Since $\cX$ is a low-dimensional space, $\mu$ does not have a well defined density function with respect to the Lebesgue measure in $\RR^d$. Thanks to Lemma \ref{lemma:linearprojection}, the pushforward distribution $A^{\top}_\sharp \mu$ has a well-defined density function. Accordingly, we make the following data distribution assumption.
\begin{assumption}\label{assumption:lowddensity}
Without loss of generality, we assume $\cY \subset [0, 1]^q$. Given a H\"{o}lder index $\alpha > 0$, the density function $p_\mu$ of $A^{\top}_\sharp \mu$ belongs to $\cH^\alpha(\cY)$ with a bounded H\"{o}lder norm $\norm{p_\mu}_{\cH^{\alpha}(\cY)} \leq C$ for some constant $C > 0$, and $p_\mu \geq \tau > 0$ on $\cY$ for some constant $\tau$.
\end{assumption}
We assume $\cY \subset [0, 1]^q$ for convenience. Otherwise, we can rescale the input space $\cX$ by a constant $c$, so that the projected space $\cY \subset [0, 1]^q$. Since $\cX$ is compact, the constant $c$ is bounded and will not undermine the statistical rate of convergence.

To generate samples mimicking data distribution $\mu$, we consider transforming a $q$-dimensional easy-to-sample distribution $\rho$ supported on $[0, 1]^q$ to leverage the structural assumption in domain $\cX$. We define the generator network architecture $\cG_{\rm NN}^{\rm ld}(R, \kappa, L, p, J)$ as
\begin{equation}\label{eq:generator_lowd}
\begin{split}
\cG_{\rm NN}^{\rm ld}(R, \kappa, L, p, J) = & \big\{U \circ g_\theta : U \in \RR^{d \times q}~\text{with orthonormal columns and} \\
& \hspace{0.7in} g_\theta \in {\rm NN}(R, \kappa, L, p, J, d_{\rm in} = q, d_{\rm out} = d) \big\}.
\end{split}
\end{equation}
Note that $U \in \RR^{d \times q}$ lifts the transformed easy-to-sample distribution $(g_\theta)_\sharp \rho$ to $\RR^d$. We expect $U$ to extract the linear structures in data, while $g_\theta$ approximates an optimal transport plan for transforming $\rho$ to $A_\sharp^\top \mu$.

Pairing with the generator, we define the discriminator network architecture $\cF_{\rm NN}^{\rm ld}(\bar{R}, \bar{\kappa}, \bar{L}, \bar{p}, \bar{J}, \bar{\gamma})$ as
\begin{equation}\label{eq:discriminator_lowd}
\begin{split}
\cF_{\rm NN}^{\rm ld}(\bar{R}, \bar{\kappa}, \bar{L}, \bar{p}, \bar{J}, \bar{\gamma}) & = \big\{ f_\omega \circ V^\top : V \in \RR^{d \times q}~\text{with}~\norm{V}_2 \leq 1, \\
& \qquad f_\omega \in {\rm NN}(\bar{R}, \bar{\kappa}, \bar{L}, \bar{p}, \bar{J}, d_{\rm in} = q, d_{\rm out} = 1), \textrm{ and } \\
& \qquad \left\lvert f_\omega(x) - f_\omega(y) \right\rvert \leq \bar{\gamma} \norm{x - y}_\infty~\text{for}~x, y \in [0, 1]^q \big\}.
\end{split}
\end{equation}
The matrix $V$ is chosen to ``couple'' with the linear structures learned by the generator (``anti-projection'') and $f_\omega$ will approximate Lipschitz functions in $\RR^q$ for approximating Wasserstein distance. We remark that an appropriate choice of Lipschitz coefficient $\bar{\gamma}$ on $f_\omega$ will not undermine the approximation power of $\cF_{\rm NN}^{\rm ld}$ as confirmed in Lemma \ref{lemma:approx_lip}. Meanwhile, the Lipschitz constraint of discriminator ensures that the generator can accurately capture the linear structures in data. In practice, such a Lipschitz regularity is often enforced by computational heuristics \cite{virmaux2018lipschitz, pauli2021training, gouk2021regularisation}.
\begin{figure}[!htb]
\centering
\includegraphics[width = 0.65\textwidth]{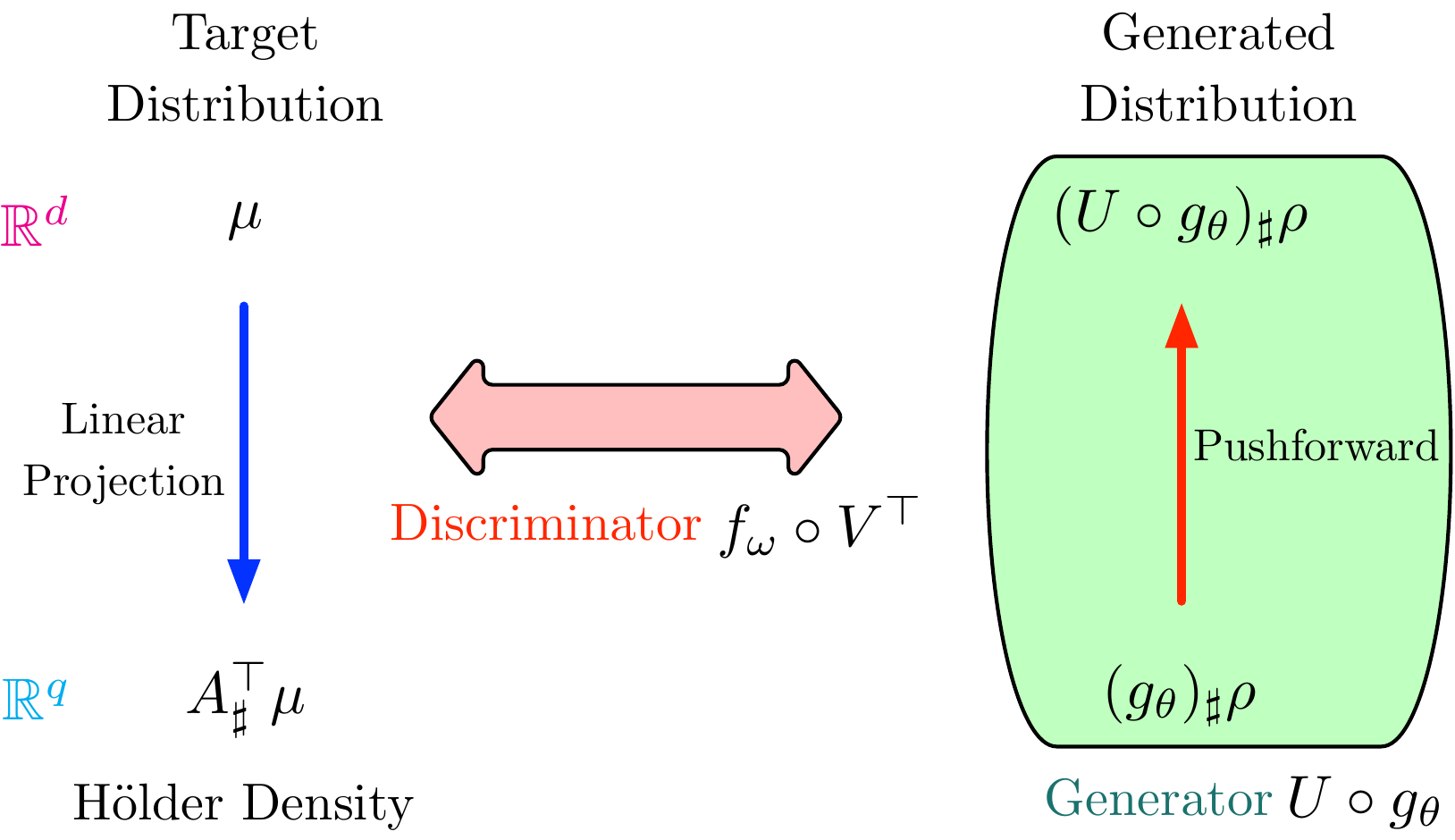}
\caption{Learning data distribution $\mu$ with unknown linear structures using generator in \eqref{eq:generator_lowd} and discriminator in \eqref{eq:discriminator_lowd}.}
\label{fig:lowd_gan}
\end{figure}

With proper configurations of network classes \eqref{eq:generator_lowd} and \eqref{eq:discriminator_lowd}, we train GANs using \eqref{eq:empiricalgan} (see Figure \ref{fig:lowd_gan} for an illustration) and denote the optimizer as $(U^*, g^*_\theta, V^*, f_\omega^*)$, i.e.,
\begin{align*}
(U^*, g^*_\theta, V^*, f_\omega^*) \in \argmin_{f_\omega \circ V^\top \in \cF_{\rm NN}^{\rm ld}} \max_{U \circ g_\theta \in \cG_{\rm NN}^{\rm ld}} & \EE_{z \sim \rho} \left[(f_\omega \circ V^\top) \circ (U \circ g_\theta)(z)\right] \\
& \quad - \frac{1}{n} \sum_{i=1}^n (f_\omega \circ V^\top)(x_i).
\end{align*}
The following theorem establishes a fast statistical rate of convergence of $(U^* \circ g_\theta^*)_\sharp \rho$ to data distribution $\mu$.
\begin{theorem}\label{thm:lowdbound}
Suppose Assumption \ref{assumption:lowdmanifold} and \ref{assumption:lowddensity} hold. We choose
\begin{equation*}
\begin{split}
& \quad\qquad R = B, \quad \kappa = \max\{B, C\}, \quad L = O\left(\frac{\alpha}{2\alpha + q}\log n\right), \\
& p = O\left(q n^{\frac{q\alpha}{(\alpha+1)(2\alpha + q)}} \vee d\right), \quad J = O\left(dq + \frac{\alpha}{2\alpha + q} n^{\frac{q\alpha}{(\alpha+1)(2\alpha + q)}} \log n \right).
\end{split}
\end{equation*}
for the generator $\cG_{\rm NN}^{\rm ld}(R, \kappa, L, p, J)$ in \eqref{eq:generator_lowd} and 
\begin{equation*}
\begin{split}
&\quad \bar{R} = C,\quad \bar{\kappa} = C, \quad \bar{\gamma} = 10q, \quad \bar{L} = O\left(\frac{1}{2 + q}\log n \right), \\
& \bar{p} = O\left(n^{q/(2+q)} \vee d\right),\quad \bar{J} = O\left(dq + \frac{1}{2 + q}n^{q/(2+q)}\log n \right).
\end{split}
\end{equation*}
for the discriminator $\cF_{\rm NN}^{\rm ld}(\bar{R}, \bar{\kappa}, \bar{L}, \bar{p}, \bar{J}, \bar{\gamma})$ in \eqref{eq:discriminator_lowd}. Then it holds
\begin{equation}
\EE\left[\dw ((U^*\circ g_\theta^*)_\sharp \rho, \mu)\right] = \tilde{O}\left(n^{-\frac{1}{2 + q}} \log^2 n\right). \nonumber
\end{equation}
\end{theorem}

Compared to Theorem \ref{thm:main}, we observe that the sizes of generator and discriminator in Theorem \ref{thm:lowdbound} crucially depend on $q$ and only weakly depend on $d$. Meanwhile, the rate of convergence is fast as the exponent only depends on $q$. This result provides important understandings of why GANs can circumvent the curse of dimensionality in real-world applications, since low-dimensional intrinsic structures are often seen in real-world data sets. Nonetheless, linear structures in Assumption \ref{assumption:lowdmanifold} is largely simplified, as it is rare the case that real-world data lie in a subset of a low-dimensional linear subspace (see a generalization to nonlinear mixture data in Section \ref{sec:mixture}). At the same time, real data are often contaminated with observational noise and concentrate only near a low-dimensional manifold.

Theorem \ref{thm:lowdbound} demonstrates that GANs with properly chosen generator and discriminator are adaptive to the unknown linear structures in data. Since data are concentrated on a linear subspace, one may advocate PCA-like methods for estimating the linear structure first and then learn the data distribution on a projected subspace. However, such a method requires two-step learning and is rarely used in practical GANs. In fact, GANs simultaneously capture the linear structure and learning the target data distribution via optimizing the empirical risk \eqref{eq:empiricalgan}.

A major difficulty in establishing Theorem \ref{thm:lowdbound} is proving GANs can capture the unknown linear structures in data. We exploit the optimality of $(U^*, g_\theta^*)$ to prove that $\norm{U^* - A}_{\rm F}$ is small, i.e., the column spaces of $U^*$ and the ground truth matrix $A$ match closely. In particular, the mismatch $\norm{U^* - A}_{\rm F}$ depends on the approximation power of the generator and discriminator (see Lemma \ref{lemma:subspace_match}). Built upon this crucial ingredient, the remaining analysis focuses on tackling the projected Wasserstein distance with respect to the data transformation $A$ (see \cite{wang2020two, wang2022manifold} for applications of projected Wasserstein distance in two-sample test). In this way, we circumvent the curse of ambient dimensionality.

\section{Distribution estimation in low-dimensional mixture model}\label{sec:mixture}
Section \ref{sec:lowd} provides a detailed study of GANs estimating target distributions with unknown linear structures. The obtained estimation guarantee enjoys a fast convergence rate, dependent on the linear subspace dimension. In this section, we generalize to data distributions with nonlinear intrinsic structures and show that GANs maintain the fast statistical estimation guarantee.

We consider target data distribution $\mu$ supported on $\cX \subset [0, 1]^d$ being a mixture of $K$ components. 
\begin{assumption}\label{assumption:mixture}
Data distribution $\mu$ takes the decomposition
\begin{align}\label{eq:mixture}
\mu = \sum_{k=1}^K p_k \mu_k \quad \text{with} \quad \sum_{k=1}^K p_k = 1,
\end{align}
where $p_k > 0$ is the prior of the corresponding component.

Moreover, each component $\mu_k$ is a pushforward distribution of the $q$-dimensional uniform distribution on $[0, 1]^q$ ($q \leq d$). In particular, $\mu_k = (g_k)_\sharp {\rm Unif}([0, 1]^q)$ and $g_k : [0, 1]^q \mapsto \RR^d$ is $\alpha_k$-H\"{o}lder continuous for some $\alpha_k \geq 1$. Moreover, there exists a constant $C_{\alpha} > 0$ such that $\norm{g_k}_{\cH^{\alpha_k}} \leq C_{\alpha}$ for any $k = 1, \dots, K$.
\end{assumption}
Mixture data are widely seen in practice. For example, MNIST data set is naturally clustered into $10$ groups corresponding to different handwritten digits. Images in CIFAR-10 and ImageNet can also be clustered according to labels. Moreover, the intrinsic dimensions of these data sets are all estimated to be much smaller than their ambient dimensions \cite{pope2021intrinsic}. In our mixture model, we can view the $q$-dimensional coordinates as the low-dimensional intrinsic parameters, and $g_k$ gives rise to a parametrization of data in the $k$-th component.

We can understand the mixture distribution $\mu$ as the marginal distribution of a random vector $x$, whose distribution further depends on a latent random variable $\xi$. Specifically, let $\xi$ be a categorical random variable with $\PP(\xi = k) = p_k$ for $k = 1, \dots, K$. Then we define $x ~|~ \xi = k \sim \mu_k$. It can be checked that the marginal distribution of $x$ is $\mu$. More importantly, introducing such a latent variable allows an easy sampling from the mixture distribution; we will choose the generator network based on this intuition.

Note that Assumption \ref{assumption:lowddensity} is a special case of Assumption \ref{assumption:mixture}, when taking $K = 1$. Assumption \ref{assumption:mixture} also suggests a natural partition of domain $\cX$. Let $\cX_k = g_k([0, 1]^q) = \{g_k(z) : z \in [0, 1]^q\}$. Then we have $\cX = \bigcup_{k=1}^K \cX_k$. This shares the same principle as a low-dimensional manifold embedded in $\RR^d$, as $\cX_k$ can be viewed as a local neighborhood on $\cX$ (although $g_k$ is not necessarily a homeomorphism between $\cX_k$ and $[0, 1]^q$).

We generate samples using a generator by transforming a uniform distribution $\rho$ on unit cube $[0, 1]^{q+1}$. In particular, we will use the first coordinate to mimic a latent variable and the remaining $q$ coordinates are used to generate $\mu_k$ in each component. We set the generator network architecture as
\begin{align*}
\cG_{\rm NN}^{\rm mix}(R, \kappa, L, p, J) = {\rm NN}(R, \kappa, L, p, J, d_{\rm in} = q+1, d_{\rm out} = d)  
\end{align*}
and establish a mixture distribution approximation theory.
\begin{proposition}\label{prop:mixture_approximation}
Suppose Assumption \ref{assumption:mixture} holds. Given any $\epsilon > 0$, we choose $\cG_{\rm NN}^{\rm mix}(R, \kappa, L, p, J)$ with
\begin{align*}
R = 1, \kappa = \max\{C_\alpha, 1\}, L = O\left(\log \frac{1}{\epsilon} \right), p = O\left(K d \epsilon^{-\frac{q}{\alpha}}\right), J = O\left(K d \epsilon^{-\frac{q}{\alpha}} \log \frac{1}{\epsilon} \right).
\end{align*}
Then there exists $g_\theta \in \cG_{\rm NN}^{\rm mix}$ such that
\begin{align*}
\dw\left((g_\theta)_\sharp \rho, \mu\right) \leq \sqrt{d} \epsilon.
\end{align*}
\end{proposition}
Proposition \ref{prop:mixture_approximation} is proved in Appendix \ref{pf:mixture}, which draws motivation from Theorem \ref{thm:generator}. At a coarse level, the generator network is to approximate $g_k$ in each component. Therefore, the network architecture consists of $K$ parallel transformations.

We set the discriminator network architecture as
\begin{align*}
\cF_{\rm NN}^{\rm mix}(\bar{R}, \bar{\kappa}, \bar{L}, \bar{p}, \bar{J}, \bar{\gamma}) = \Big\{f_\omega : & f_\omega \in {\rm NN}(\bar{R}, \bar{\kappa}, \bar{L}, \bar{p}, \bar{J}, d_{\rm in} = d, d_{\rm out} = 1), ~\text{and} \\
& \quad |f_\omega(x) - f_\omega(y)| \leq \bar{\gamma} \norm{x - y}_\infty ~\text{for}~x, y \in [0, 1]^d \Big\}.
\end{align*}
The discriminator network is to approximating $1$-Lipschitz discriminative functions on $[0, 1]^d$. We recall training GANs via optimizing \eqref{eq:empiricalgan} and the optimizer is denoted as $(g_\theta^*, f_\omega^*)$. The following theorem provides a fast finite-sample distribution estimation guarantee for learning low-dimensional mixture models.
\begin{theorem}\label{thm:mixture}
Suppose Assumption \ref{assumption:mixture} holds. Denote $\alpha = \min_k \alpha_k$. We choose
\begin{equation*}
\begin{split}
& R = 1, \quad \kappa = \max\{C_\alpha, 1\}, \quad L = O\left(\frac{1}{q}\log n\right), \\
& ~~~ p = O\left(K d n^{\frac{1}{\alpha}}\right), \quad J = O\left(K d n^{\frac{1}{\alpha}} \log n \right).
\end{split}
\end{equation*}
for the generator $\cG_{\rm NN}^{\rm mix}(R, \kappa, L, p, J)$ (corresponding to set $\epsilon_1 = n^{-1/q}$ in Proposition \ref{prop:mixture_approximation}) and 
\begin{equation*}
\begin{split}
& \bar{R} = \sqrt{d},\quad \bar{\kappa} = O(1), \quad \bar{\gamma} = 10d, \quad \bar{L} = O\left(\log n + d \right), \\
& ~\quad~ \bar{p} = O\left(n^{d/q}\right),\quad \bar{J} = O\left(n^{d/q}(\log n + d) \right).
\end{split}
\end{equation*}
for the discriminator $\cF_{\rm NN}^{\rm mix}(\bar{R}, \bar{\kappa}, \bar{L}, \bar{p}, \bar{J}, \bar{\gamma})$. Then for any positive constant $\delta > 0$, it holds
\begin{equation}
\EE\left[\dw \left((g_\theta^*)_\sharp \rho, \mu\right)\right] \leq C_\delta d n^{-\frac{1}{q + \delta}}, \nonumber
\end{equation}
where $C_\delta$ is independent of $n$.
\end{theorem}
The proof is deferred to Appendix \ref{pf:mixture}. We discuss implications of Theorem \ref{thm:mixture}.
\begin{enumerate}
\item {\it Fast rate}. We obtain a fast rate of convergence in estimating the low-dimensional mixture distribution $\mu$. This result provides a generalization of linear data in Theorem \ref{thm:lowdbound}, and better explains the empirical success of GANs in practice. We remark that the rate of convergence in Theorem \ref{thm:mixture} depends on a positive constant $\delta > 0$. When $n$ is sufficiently large, the convergence rate is arbitrarily close to $n^{-1/q}$, yet is always marginally slower. Theorem \ref{thm:main} and \ref{thm:lowdbound} share the same spirit by providing explicit logarithmic factors in $n$.

\item {\it Relation to Theorem \ref{thm:lowdbound}}. Theorem \ref{thm:lowdbound} considers a special case of Assumption \ref{assumption:mixture} and provides a fine-grained analysis. In particular, in addition to a distribution estimation guarantee, Theorem \ref{thm:lowdbound} shows that GANs are capable of accurately recover the unknown linear structures in data (Lemma \ref{lemma:subspace_match}). Theorem \ref{thm:mixture} generalizes Theorem \ref{thm:lowdbound} and shows that GANs are capable of capturing nonlinear data intrinsic structures encoded by $g_k$'s.

In addition, Theorem \ref{thm:lowdbound} indicates a pairing between generator and discriminator networks, in that, they have comparable sizes (exponentially dependent on $q$) and there is a coupling between the projection and ``anti-projection'' layers. Theorem \ref{thm:mixture} is slightly different: The generator is chosen to learn the low-dimensional nonlinear transformation $g_k$'s and the discriminator is capable of approximating any $1$-Lipschitz function in $\RR^d$. We observe that the size of discriminator in Theorem \ref{thm:mixture} is exponentially dependent on $d$. In this sense, we view the discriminator in Theorem \ref{thm:mixture} being overparameterized. This overparameterization, however, does not undermine the fast statistical convergence, largely due to the Lipschitz continuity property of the discriminator.
\end{enumerate}

\section{Related work}\label{sec:related}

\noindent $\bullet$ {\bf Distribution approximation using deep generative models}. Using generator to accurately approximate the data distribution is of essential importance in understanding the statistical properties of GANs. \cite{bai2018approximability} considered data distribution being exactly realized by an invertible generator, i.e., all the weight matrices and activation functions are invertible. Such an invertibility requires the width of the generator to be the same as input data dimension $d$. Existing literature has shown that such narrow networks lack approximation ability \cite{lu2017expressive, park2020minimum}. In fact, to ensure universal approximation for Lebesgue-integrable functions and $L^p$ functions in $\RR^d$, the weakest width requirement needs to be $d+4$ and $d+1$, respectively. Our work, in contrast, allows the generator to be wide and expressive for any data distribution with H\"{o}lder densities.

\noindent $\star$ {\it Approximating empirical distribution using neural networks}. After the release of an early version of the manuscript, the authors were aware of a concurrent work studying distribution approximation using generative networks. Specifically, \cite{lu2020universal} established universal approximation abilities of neural network generators for approximating sub-Gaussian data distributions. They proved the existence of a properly chosen generator architecture for achieving an $\epsilon$ approximation error of data distribution in Wasserstein-1 distance. Our Theorem \ref{thm:generator} shares a similar conclusion to \cite{lu2020universal} for data distributions with H\"{o}lder densities. However, the analysis in \cite{lu2020universal} is very different and relies on memorizing discretized data distribution using neural networks. More recently, \cite{huang2022error} showed that GANs can approximate any data distribution (in any dimension) by transforming an absolutely continuous distribution. The idea is to memorize the empirical data distribution using ReLU networks. Nonetheless, the designed generator is not able to generate new samples (different from the training data), which cannot explain the success of GANs in practice.

\noindent $\bullet$ {\bf Statistical properties of GANs}. Statistical guarantees of generative models for distribution estimation has been studied in several works. We compare with existing works in Table \ref{tab:literature}.
\begin{table}[h]
\caption{A comparison of this paper to closely related works in problem setups and statistical results. NN stands for neural networks and `---' indicates no specific choice is given. Weak metric refers to ``neural net distance'' in \cite{arora2017generalization} and strong metric refers to IPMs with nonparametric discriminative function classes, e.g., using $1$-Lipschitz discriminative functions corresponds to the Wasserstein-1 distance.}
\label{tab:literature}
\centering
\begin{tabular}{ p{5em} | p{6em} | p{6em} | p{10.25em} | p{4em}}
\hline
& Generator & Discriminator & Distribution & Metric  \\\hline
\multicolumn{5}{c}{{\it Generalization error bound}} \\\hline
\cite{arora2017generalization, zhang2017discrimination, jiang2018computation} & NN & NN & General \hspace{0.8in} (Euclidean) & Weak \\\hline
\cite{bai2018approximability, liang2017well, liang2018well} & Invertible NN & NN & Realizable by invertible NN generators \hspace{0.8in} (Eulidean and low-d) & Strong \\\hline 
\cite{luise2020generalization} & $C^s$ & --- & $C^s$ pushforward of sub-Gaussian distributions (Eulidean and low-d) & Sinkhorn \\\hline
\cite{Block2021ANE} & --- & H\"{o}lder & General \hspace{0.8in} (Euclidean and low-d) & Strong \\\hline
\multicolumn{5}{c}{\it Statistical estimation bound} \\\hline
\cite{schreuder2021statistical} & NN & Lipschitz & $C^s$ pushforward of uniform distributions \hspace{0.3in} (low-d) & Strong \\\hline
{\bf Ours} & NN & NN & Having H\"{o}lder densities \hspace{0.6in} (Euclidean and low-d) & Strong \\\hline
\end{tabular}
\label{tab:comparison}
\end{table}

\noindent $\star$ {\it Generalization bound of GANs}. \cite{arora2017generalization} studied the generalization error of GANs. Lemma 1 in \cite{arora2017generalization} shows that GANs cannot generalize under the Wasserstein distance and the Jensen-Shannon divergence unless the sample size is $\tilde{O}(\epsilon^{-\text{poly}(d)})$, where $\epsilon$ is the generalization gap. Alternatively, they defined a surrogate metric called ``neural net distance'' $d_{\cF_{\textrm{NN}}}(\cdot, \cdot)$, where $\cF_{\textrm{NN}}$ is the class of discriminator networks. They proved that GANs generalize under the neural net distance, with sample complexity of $\tilde{O}(\epsilon^{-2})$. This result has two limitations: 1). The sample complexity depends on some unknown parameters of the discriminator network class (e.g., the Lipschitz constant of discriminators with respect to parameters); 2). A small neural net distance does not necessarily imply that two distributions are close \cite[Corollary 3.2]{arora2017generalization}, which in turn can not answer {\it (Q1)} firmly. Our results are explicit in the network architectures, and provide a statistical convergence of GANs under the Wasserstein distance.

Some follow-up works attempted to address the first limitation in \cite{arora2017generalization}. \cite{zhang2017discrimination} explicitly quantified the Lipschitz constant and the covering number of the discriminator network. They improved the generalization bound in \cite{arora2017generalization} with the technique in \cite{bartlett2017spectrally}. Whereas the bound has an exponential dependence on the depth of the discriminator. \cite{jiang2018computation} proved a tighter generalization bound under spectral normalization applied to the discriminator, where the bound has a polynomial dependence on the size of the discriminator. These generalization theories rely on the assumption that the generator can approximate the data distribution well with respect to the neural net distance, nonetheless, the existence of such a generator is unknown.

\cite{bai2018approximability} tackled the second limitation in \cite{arora2017generalization}, and studied the estimation error of GANs under the Wasserstein distance for a special class of distributions implemented by a generator, while the discriminator is designed to guarantee zero bias (or approximation error). Specifically, \cite{bai2018approximability} showed that for certain generator classes, there exist corresponding discriminator classes with a strong discriminative power against the generator. Particular examples include two-layer ReLU network discriminators (half spaces) for distinguishing Gaussian distributions/mixture of Gaussians, and $(L+2)$-layer discriminators for $(L+1)$-layer invertible generators. In these examples, if the data distribution can be exactly implemented by some generator, then the neural net distance can provably approximate the Wasserstein distance. Consequently, GANs can generalize under the Wasserstein distance. As mentioned earlier, these results require an invertibility assumption on the generator.

Concurrent with \cite{bai2018approximability}, \cite{liang2018well} studied the estimation error of GANs under the Sobolev IPMs. \cite{liang2018well} considered both nonparametric and parametric settings. In the nonparametric setting, the generator and discriminator network architectures are not explicitly chosen, so the bias of the distribution estimation remains unknown. As a result, the bound cannot provide an explicit sample complexity for distribution estimation. Their parametric results are very similar to \cite{bai2018approximability}, which requires the same invertibility assumptions and the data distribution needs to be exactly implementable by the generator. 

\noindent $\star$ {\it Generative distribution estimation under IPMs}.
Recently, several works studied distribution estimation under certain discrepancy measures using generative models, when data exhibit low-dimensional structures \cite{luise2020generalization, schreuder2021statistical, Block2021ANE}. The distribution estimation framework is
\begin{align*}
g^* \in \argmin_{g \in \cG}~ {\tt discrepancy}(g_\sharp \rho, \mu)
\end{align*}
and the corresponding statistical rate of estimation is free of the curse of data ambient dimensionality. Specifically, in \cite{luise2020generalization}, the generative models are assumed to be continuously differentiable up to order $s$. By simultaneously optimize the choice of latent distribution $\rho$ and generative model $g$, they proved that the Sinkhorn divergence between the generated distribution and data distribution converges only depending on data intrinsic dimension. \cite{schreuder2021statistical} consider data being generated by a ground truth pushfowrad mapping applied to latent samples from a low-dimensional unit cube, which is a special case of $K = 1$ in our Assumption \ref{assumption:mixture}. Using Lipschitz generator, they proved that the generalization bound in terms of Wassesrstein-1 distance converges only depending on the dimension of the latent space. More recently, \cite{Block2021ANE} established a generalization bound in terms of H\"{o}lder IPMs for generative models and the bound converges depending on data intrinsic dimension. Nonetheless, how well the generator can represent the data distribution remains unclear. It is worth mentioning that \cite{chae2021likelihood} considered estimating low-dimensional singular distributions using deep generative models. They adopted a likelihood approach, which is different from GANs. All of the aforementioned results rely on training the generative model by minimizing certain discrepancy metric, e.g., Wasserstein-1 distance and Sinkhorn divergence. There is no explicit discriminator network involved, while our analysis considers neural network discriminators and still provides statistical guarantees of GANs in terms of Wasserstein-1 distance.

\noindent $\star$ {\it Density estimation under IPMs}. There is also a line of works considering nonparametric density estimation under IPMs \cite{singh2018nonparametric,uppal2019nonparametric}. \cite{singh2018nonparametric} studied the minimax error under Sobolev IPMs. Later, \cite{uppal2019nonparametric} generalized the minimax result to Besov IPMs for estimating distributions with Besov densities. Yet our work is different from these works. Specifically, the distribution estimation framework in \cite{singh2018nonparametric,uppal2019nonparametric} is
\begin{align}
\min_{\nu \in \cP} \max_{f_\omega \in \mathcal{F}_{\rm NN}} \int f_\omega(y) \nu(y) dy - \frac 1 n \sum_{i=1}^n f_{\omega}(x_i), \label{eq:notgan}
\end{align}
where $\nu: \cX \mapsto \RR$ is a density function and $\cP$ denotes a class of density estimators, such as the wavelet-thresholding estimator in \cite{uppal2019nonparametric}. Compared to our framework in \eqref{eq:empiricalgan}, we consider the push-forward structure in GANs, where the generator $g_\theta$ is a multidimensional mapping. In contrast, \eqref{eq:notgan} considers density estimators, where $\nu : \cX \mapsto \RR$ is some density function parameterized by a neural network --- involving NO generator architecture which transforms the easy-to-sample distribution to the data distribution. Moreover, to evaluate the integral in \eqref{eq:notgan}, one needs to exactly know the feature space $\cX$, and efficiently sample from $\cX$. Consequently, \cite{singh2018nonparametric,uppal2019nonparametric} only apply to $\cX = [0, 1]^d$. Besides, only estimating the density function requires extensive extra efforts to sample from it, e.g., using Monte Carlo simulation, due to the lack of the push-forward structure. However, our theories are applicable to push-forward GANs, and allow an efficient sampling of generated (fake) data.

\section{Proof of statistical theory in Euclidean space}\label{sec:proof}

We provide proofs of Theorem \ref{thm:generator} and \ref{thm:main}. The developed analytical framework will also be adopted for proving Theorem \ref{thm:lowdbound} with additional treatments on low-dimensional structures in Section \ref{sec:lowdproof}.

\subsection{Proof of Theorem \ref{thm:generator}}\label{sec:proofthmgenerator}

Theorem \ref{thm:generator} is obtained by combining Lemma \ref{lemma:monge} and \ref{lemma:approx}. Under Assumption \ref{assump1} - \ref{assump3}, Lemma \ref{lemma:monge} ensures the existence of a $\cH^{\alpha+1}(\cZ)$ data transformation $T$ such that $T_\sharp \rho = \mu$. The remaining step is to choose a proper generator network for approximating $T$.

If the latent space $\cZ \subset [0, 1]^d$, we can directly apply Lemma \ref{lemma:approx} for constructing the generator. Otherwise, if $\cZ \subset [-B, B]^d$, we define a linear scaling function $\phi(z) = (z + B \mathbf{1}) / (2B) \in [0, 1]^d$ for any $z \in \cZ$, where $\mathbf{1}$ denotes a vector of $1$'s. For the data transformation $T$, we rewrite it as $T \circ \phi^{-1} (\phi(\cdot))$ so that it suffices to approximate $T \circ \phi^{-1}$ supported on $[0, 1]^d$. $T\circ \phi^{-1}$ retains the same H\"{o}lder smoothness as $T$, since $\phi$ is invertible and linear. To this end, without loss of generality, we focus on $\cZ \subset [0, 1]^d$.

Our generator network architecture is constructed in the following way. By denoting $T = [T_1, \dots, T_d]^\top$ with $T_i: \cZ \rightarrow \RR$ for $i=1,\ldots,d$, we approximate each coordinate mapping $T_i$ using Lemma \ref{lemma:approx}. For a given error $\epsilon \in (0, 1)$, $T_i$ can be approximated by a ReLU network with $O\left(\log \frac{1}{\epsilon}\right)$ layers and $O\big(\delta^{-\frac{d}{\alpha+1}} \log \frac{1}{\epsilon}\big)$ neurons and weight parameters. Thus, mapping $T$ can be approximated by $d$ such networks and we denote as $g_\theta$. Further, the distribution approximation error is
\begin{align*}
\dw((g_\theta)_\sharp \rho, \mu) & = \sup_{\norm{f}_{\rm Lip}\leq 1} \EE_{z \sim \rho} [f(g_\theta(z))] - \EE_{x \sim \mu} [f(x)] \\
& \leq \EE_{z \sim \rho} \norm{g_\theta(z) - T(z)}_2 \\
& \leq \sqrt{d} \epsilon.
\end{align*}

\subsection{Proof of Theorem \ref{thm:main}}\label{sec:proofthmmain}
We prove an oracle inequality for establishing Theorem \ref{thm:main}, which decomposes the distribution estimation error into the generator approximation error $\cE_1$, the discriminator approximation error $\cE_2$, and the statistical error $\cE_3$.
\begin{lemma}\label{lemma:oracle}
Let $\cH^\beta(\cX)$ be the H\"{o}lder function class defined on $\cX$ with H\"{o}lder index $\beta \geq 1$. Define $\cH^{\beta}_\infty(\cX) = \left\{f \in \cH^\beta(\cX) : \lvert f(x) -  f(y)\rvert \leq \|x - y\|_\infty\right\}$. Then it holds
\begin{align*}
d_{\cH^\beta} ((g^*_\theta)_\sharp \rho, \mu) & \leq \cE_1+4\cE_2+\cE_3,
\end{align*}
where $\cE_1=\inf_{g_\theta \in \cG_{\rm NN}} d_{\cH^\beta_\infty}\left((g_\theta)_{\sharp} \rho, \mu\right),~\cE_2=\sup_{f \in \cH^{\beta}} \inf_{f_{\omega} \in \cF_{\rm NN}} \norm{f - f_{\omega}}_\infty$, and $\cE_3=d_{\cH^{\beta}}\left(\mu, \hat{\mu}_n\right) + d_{\cF_{\rm NN}} \left(\mu, \hat{\mu}_n\right)$.
\end{lemma}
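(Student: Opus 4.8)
The plan is to bound $d_{\cH^\beta}((g^*_\theta)_\sharp\rho,\mu)$ by inserting and subtracting intermediate quantities, exploiting the optimality of $(g^*_\theta, f^*_\omega)$ for the empirical GAN problem \eqref{eq:empiricalgan}. First I would fix an arbitrary $f\in\cH^\beta(\cX)$ and an arbitrary target generator $g_\theta\in\cG_{\textrm{NN}}$. Writing $\EE_{z\sim\rho}[f(g^*_\theta(z))] - \EE_{x\sim\mu}[f(x)]$, I would decompose it along a chain: replace $f$ by its network approximation $f_\omega\in\cF_{\textrm{NN}}$ (costing $2\|f-f_\omega\|_\infty$ for the two expectations, or rather costing $\cE_2$-type terms controlled by $\sup_{f\in\cH^\beta}\inf_{f_\omega}\|f-f_\omega\|_\infty$), then pass from population expectations to empirical averages $\frac1n\sum f_\omega(x_i)$ (introducing $d_{\cF_{\textrm{NN}}}(\mu,\hat\mu_n)$ terms), then use that $(g^*_\theta,f^*_\omega)$ is a saddle point of the empirical objective to swap $g^*_\theta$ for the oracle $g_\theta$ and $f_\omega$ for $f^*_\omega$, and finally reverse the empirical-to-population and network-approximation steps. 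Each reversal contributes another $\cE_2$ or $\cE_3$ term, which is why the final bound carries the factor $4$ in front of $\cE_2$ and two statistical-error terms.

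More concretely, the key inequality chain I would carry out is: for any $f\in\cH^\beta$ and the minimizing $g_\theta$,
\begin{align*}
\EE_{\rho}[f(g^*_\theta(z))] - \EE_{\mu}[f]
&\le \EE_{\rho}[f_\omega(g^*_\theta(z))] - \EE_{\mu}[f_\omega] + 2\|f-f_\omega\|_\infty\\
&\le \tfrac1n\textstyle\sum_i f_\omega(g^*_\theta(z))\ \text{(pop}\to\text{emp on }\mu\text{)} + d_{\cF_{\textrm{NN}}}(\mu,\hat\mu_n) + 2\cE_2,
\end{align*}
where I have suppressed that $\EE_\rho$ is exact (no generated-sample error in \eqref{eq:empiricalgan}). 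Then by the saddle-point property of $(g^*_\theta,f^*_\omega)$, the empirical objective at $g^*_\theta$ with $f_\omega$ is at most that at $g^*_\theta$ with $f^*_\omega$, which is at most the value at the oracle $g_\theta$ with $f^*_\omega$ — but one must be careful: GANs minimize over $g$ of the $\max$ over $f$, so $\max_{f_\omega}[\EE_\rho f_\omega(g^*_\theta)-\frac1n\sum f_\omega(x_i)]\le \max_{f_\omega}[\EE_\rho f_\omega(g_\theta)-\frac1n\sum f_\omega(x_i)]$. From there I unwind back: bound $\EE_\rho f_\omega(g_\theta)-\frac1n\sum f_\omega(x_i)$ by reintroducing $\mu$ (another $d_{\cF_{\textrm{NN}}}(\mu,\hat\mu_n)$), then note $\EE_\rho f_\omega(g_\theta)-\EE_\mu f_\omega \le d_{\cH^\beta_\infty}((g_\theta)_\sharp\rho,\mu)+2\cE_2$, using that the network discriminators $f_\omega\in\cF_{\textrm{NN}}$ can be compared to $\cH^\beta_\infty$ functions after appropriate rescaling (the $\infty$-Lipschitz normalization in $\cH^\beta_\infty$ is what makes the IPM with $(g_\theta)_\sharp\rho$ the natural quantity, since $g_\theta$ controls the coordinates in $\ell_\infty$). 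Taking $\sup$ over $f\in\cH^\beta$ on the left and $\inf$ over $g_\theta$ on the right yields the claimed bound.

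The main obstacle, and the step requiring the most care, is bookkeeping the direction of the min–max and ensuring the network function class $\cF_{\textrm{NN}}$ is symmetric so that $d_{\cF_{\textrm{NN}}}$ is a genuine IPM and the $\sup_f$ can be moved inside/outside without sign errors; relatedly, one must verify that the substitution of $f_\omega$ for $f$ is legitimate in both directions, i.e. that $\cF_{\textrm{NN}}$ is rich enough that $\cE_2 = \sup_{f\in\cH^\beta}\inf_{f_\omega\in\cF_{\textrm{NN}}}\|f-f_\omega\|_\infty$ is the right quantity, and that $\|f-f_\omega\|_\infty$ bounds the expectation differences uniformly over $\mu$ and over all pushforwards $(g_\theta)_\sharp\rho$ (this uses Assumption \ref{assump1}, compact supports, and the boundedness $\|g_i\|_\infty\le R$ built into $\cG_{\textrm{NN}}$, so that all relevant functions are evaluated only on a fixed compact set where the H\"older approximation of Lemma \ref{lemma:approx} applies). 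I do not expect the telescoping estimates themselves to be difficult once the saddle-point inequality is pinned down; the care is entirely in the order of operations.
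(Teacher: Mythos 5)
Your proposal follows essentially the same route as the paper's proof: a telescoping chain of triangle inequalities that (i) replaces the H\"older discriminator $f$ by a network $f_\omega$ at cost $2\cE_2$, (ii) passes from $\mu$ to $\hat{\mu}_n$, (iii) invokes the optimality of $g^*_\theta$ via $\max_{f_\omega}\bigl[\EE_\rho f_\omega(g^*_\theta(z))-\frac{1}{n}\sum_i f_\omega(x_i)\bigr]\le \max_{f_\omega}\bigl[\EE_\rho f_\omega(g_\theta(z))-\frac{1}{n}\sum_i f_\omega(x_i)\bigr]$ for any $g_\theta\in\cG_{\textrm{NN}}$, and (iv) unwinds back to $d_{\cH^\beta_\infty}((g_\theta)_\sharp\rho,\mu)$ at cost another $2\cE_2$, exactly as in Appendix \ref{pf:oracle}. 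Two remarks. First, a bookkeeping mismatch: because you approximate $f$ by $f_\omega$ \emph{before} introducing $\hat{\mu}_n$, both population-to-empirical conversions are charged to the network class, so your chain delivers $\cE_1+4\cE_2+2\,d_{\cF_{\textrm{NN}}}(\mu,\hat{\mu}_n)$ rather than the stated $\cE_3=d_{\cH^\beta}(\mu,\hat{\mu}_n)+d_{\cF_{\textrm{NN}}}(\mu,\hat{\mu}_n)$; since $\cF_{\textrm{NN}}\not\subset\cH^\beta$ these are not comparable in general, although either version supports the downstream rate. To recover the lemma verbatim, apply the first triangle inequality at the level of $d_{\cH^\beta}$, namely $d_{\cH^\beta}((g^*_\theta)_\sharp\rho,\mu)\le d_{\cH^\beta}((g^*_\theta)_\sharp\rho,\hat{\mu}_n)+d_{\cH^\beta}(\hat{\mu}_n,\mu)$, before swapping $\cH^\beta$ for $\cF_{\textrm{NN}}$, as the paper does. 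Second, you are right that step (iv) is the delicate one: bounding $d_{\cF_{\textrm{NN}}}((g_\theta)_\sharp\rho,\mu)-d_{\cH^\beta_\infty}((g_\theta)_\sharp\rho,\mu)$ by $2\cE_2$ nominally requires approximating each network $f_\omega$ by an element of $\cH^\beta_\infty$, which is the reverse direction from the one $\cE_2$ measures; the paper makes the same move (justified only by $\cH^\beta_\infty\subset\cH^\beta$), so your treatment is no less rigorous than the original, but ``appropriate rescaling'' would have to be made precise in a full write-up.
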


\begin{proof}[Proof of Lemma \ref{lemma:oracle}]
We introduce the empirical data distribution as an intermediate term for bounding $d_{\cH^\beta}((g_\theta^*)_\sharp \rho, \mu)$. Using the triangle inequality, we derive 
\begin{align}\label{eq:recoverbound}
& \quad~ d_{\cH^\beta}((g^*_\theta)_\sharp \rho, \mu) \nonumber \\
& \leq d_{\cH^\beta}((g^*_\theta)_\sharp \rho, \hat{\mu}_n) + d_{\cH^\beta}(\hat{\mu}_n, \mu) \notag\\
& = d_{\cF_\textrm{NN}}((g^*_\theta)_\sharp \rho, \hat{\mu}_n) + d_{\cH^\beta}((g^*_\theta)_\sharp \rho, \hat{\mu}_n) - d_{\cF_\textrm{NN}}((g^*_\theta)_\sharp \rho, \hat{\mu}_n) \nonumber \\
& \quad + d_{\cH^\beta}(\hat{\mu}_n, \mu) \notag \\
& \overset{(i)}{\leq} d_{\cF_\textrm{NN}}((g^*_\theta)_\sharp \rho, \hat{\mu}_n) + 2 \sup_{f \in \cH^\beta} \inf_{f_{\omega} \in \cF_{\textrm{NN}}} \norm{f - f_{\omega}}_\infty + d_{\cH^\beta}(\hat{\mu}_n, \mu),
\end{align}
where step $(i)$ is obtained by rewriting $d_{\cH^\beta}((g^*_\theta)_\sharp \rho, \hat{\mu}_n) - d_{\cF_\textrm{NN}}((g^*_\theta)_\sharp \rho, \hat{\mu}_n)$ as
\begin{align*}
&~~\quad d_{\cH^\beta}((g^*_\theta)_\sharp \rho, \hat{\mu}_n) - d_{\cF_\textrm{NN}}((g^*_\theta)_\sharp \rho, \hat{\mu}_n) \\
& = \sup_{f \in \cH^\beta} \left[\EE_{x \sim (g_\theta^*)_\sharp \rho} [f(x)] - \EE_{x \sim \hat{\mu}_n} [f(x)]\right] \\
& \qquad - \sup_{f_\omega \in \cF_\textrm{NN}} \left[ \EE_{x \sim (g_\theta^*)_\sharp \rho} [f_\omega(x)] - \EE_{x\sim \hat{\mu}_n} [f_\omega(x)]\right] \\
& =  \sup_{f \in \cH^\beta} \inf_{f_\omega \in \cF_\textrm{NN}} \left[\EE_{x \sim (g_\theta^*)_\sharp \rho} [f(x)] - \EE_{x \sim \hat{\mu}_n} [f(x)]\right] \\
& \qquad - \left[ \EE_{x \sim (g_\theta^*)_\sharp \rho} [f_\omega(x)] - \EE_{x\sim \hat{\mu}_n} [f_\omega(x)]\right] \\
& = \sup_{f \in \cH^\beta} \inf_{f_\omega \in \cF_\textrm{NN}} \EE_{x \sim (g_\theta^*)_\sharp \rho} [f(x) - f_{\omega}(x)] - \EE_{x\sim \hat{\mu}_n} [f(x) - f_{\omega}(x)] \\
& \leq \sup_{f \in \cH^\beta} \inf_{f_\omega \in \cF_{\rm NN}} \EE_{x \sim (g_\theta^*)_\sharp \rho} [\lvert f(x) - f_{\omega}(x)\rvert] + \EE_{x\sim \hat{\mu}_n} [\lvert f(x) - f_{\omega}(x) \rvert] \\
& \leq 2 \sup_{f \in \cH^\beta} \inf_{f_\omega \in \cF_\textrm{NN}} \norm{f - f_{\omega}}_\infty.
\end{align*}
Now we bound $d_{\cF_\textrm{NN}}((g^*_\theta)_\sharp \rho, \hat{\mu}_n)$ using a similar triangle inequality trick:
\begin{align*}
d_{\cF_\textrm{NN}}((g^*_\theta)_\sharp \rho, \hat{\mu}_n) & = \inf_{g_\theta \in \cG_{\textrm{NN}}} d_{\cF_\textrm{NN}}((g_\theta)_{\sharp}\rho, \hat{\mu}_n) \\
& \leq \inf_{g_\theta \in \cG_{\textrm{NN}}} d_{\cF_\textrm{NN}}((g_\theta)_{\sharp}\rho, \mu) + d_{\cF_\textrm{NN}}(\mu, \hat{\mu}_n) \\
& = \inf_{g_\theta \in \cG_{\textrm{NN}}} d_{\cF_\textrm{NN}}((g_\theta)_{\sharp}\rho, \mu) - d_{\cH^\beta_\infty} ((g_\theta)_{\sharp}\rho, \mu) \\
& \quad + d_{\cH^\beta_\infty} ((g_\theta)_{\sharp}\rho, \mu) + d_{\cF_\textrm{NN}}(\mu, \hat{\mu}_n) \\
& \leq 2 \sup_{f \in \cH^\beta} \inf_{f_\omega \in \cF_\textrm{NN}} \norm{f - f_{\omega}}_\infty + \inf_{g_\theta \in \cG_{\textrm{NN}}} d_{\cH^\beta_\infty} ((g_\theta)_{\sharp} \rho, \mu) \\
& \quad + d_{\cF_\textrm{NN}}(\mu, \hat{\mu}_n),
\end{align*}
where the last inequality holds by the identity $\cH^\beta_\infty \subset \cH^\beta$. Substituting the above ingredients into \eqref{eq:recoverbound}, we have
\begin{align*}
d_{\cH^\beta}((g_\theta^*)_{\sharp} \rho, \mu) & \leq \underbrace{\inf_{g_\theta \in \cG_{\textrm{NN}}} d_{\cH^\beta_\infty}((g_\theta)_{\sharp}\rho, \mu)}_{\cE_1:~ \textrm{generator approximation error}} \\
& \quad + 4 \underbrace{\sup_{f \in \cH^\beta} \inf_{f_\omega \in \cF_\textrm{NN}} \norm{f - f_{\omega}}_\infty}_{\cE_2:~ \textrm{discriminator approximation error}} \\
& \quad + \underbrace{d_{\cH^\beta}(\hat{\mu}_n, \mu) + d_{\cF_\textrm{NN}}(\mu, \hat{\mu}_n)}_{\cE_3:~ \textrm{statistical error}}.
\end{align*}
The proof is complete.
\end{proof}

We next bound each error term separately. $\cE_1$ and $\cE_2$ can be controlled by proper choices of the generator and discriminator architectures. $\cE_3$ can be controlled based on empirical process \cite{van1996weak, gyorfi2006distribution}.

\noindent $\bullet$ \textbf{Bounding generator approximation error $\cE_1$}.
We answer this question: Given $\epsilon_1 \in (0,1)$, how can we properly choose $\cG_{\textrm{NN}}$ to guarantee $\cE_1 \leq \epsilon_1$? Later, we will pick $\epsilon_1$ based on the sample size $n$, and H\"{o}lder indexes $\beta$ and $\alpha$.
\begin{lemma}\label{lemma:generator}
Given any $\epsilon_1 \in (0, 1)$, there exists a ReLU network architecture $\cG_\textrm{NN}(R, \kappa, L, p, K)$ with parameters given by \eqref{eq:gnnsize} with $\epsilon =\epsilon_1$
such that, for any data distribution $(\cX, \mu)$ and easy-to-sample distribution $(\cZ, \rho)$ satisfying Assumptions \ref{assump1} -- \ref{assump3}, if the weight parameters of this network are properly chosen, then it yields a transformation $g_\theta$ satisfying $d_{\cH^\beta_\infty}((g_\theta)_\sharp \rho, \mu) \leq \epsilon_1$.
\end{lemma}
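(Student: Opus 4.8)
The plan is to reduce the claimed bound on $\cE_1 = \inf_{g_\theta \in \cG_{\textrm{NN}}} d_{\cH^\beta_\infty}((g_\theta)_\sharp\rho, \mu)$ to the pointwise approximation of the transport map $T$ already delivered by Theorem \ref{thm:generator}. First I would invoke Theorem \ref{thm:generator}(a) to fix a transformation $T \in \cH^{\alpha+1}$ with $T_\sharp\rho = \mu$, and then Theorem \ref{thm:generator}(b) with $\epsilon = \epsilon_1$ to obtain a generator architecture $\cG_{\textrm{NN}}(R,\kappa,L,p,K)$ with the parameters in \eqref{eq:gansize} whose weights can be tuned, uniformly over all $(\cX,\mu)$ and $(\cZ,\rho)$ obeying Assumptions \ref{assump1}--\ref{assump3}, so that the induced map $g_\theta$ satisfies $\sup_{z\in\cZ}\|g_\theta(z) - T(z)\|_\infty \le \epsilon_1$. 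Note the architecture depends on $\alpha$ (the smoothness of $T$) but not on $\beta$; this is consistent with the statement because the rest of the argument uses only the $\ell_\infty$-Lipschitz property common to all of $\cH^\beta_\infty$, $\beta \ge 1$.

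Next I would use the change-of-variables identity $\EE_{x\sim\mu}[f(x)] = \EE_{z\sim\rho}[f(T(z))]$ (valid since $\mu = T_\sharp\rho$) to rewrite
\begin{align*}
d_{\cH^\beta_\infty}\big((g_\theta)_\sharp\rho,\mu\big) = \sup_{f\in\cH^\beta_\infty}\ \EE_{z\sim\rho}\big[f(g_\theta(z)) - f(T(z))\big].
\end{align*}
The one technical point is that $g_\theta(z)$ need not lie in $\cX$ (it lies only within $\epsilon_1$ of $T(z)\in\cX$), whereas $f\in\cH^\beta_\infty$ is a priori defined only on $\cX$. I would handle this with a McShane-type extension: each $f$ is $1$-Lipschitz in $\|\cdot\|_\infty$ on $\cX$, so $\tilde f(x) := \inf_{y\in\cX}\big(f(y) + \|x-y\|_\infty\big)$ is $1$-Lipschitz in $\|\cdot\|_\infty$ on all of $\RR^d$ and coincides with $f$ on $\cX$; since $T(z)\in\cX$, replacing $f$ by $\tilde f$ changes nothing in the display. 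Then for every $f \in \cH^\beta_\infty$ and every $z\in\cZ$,
\begin{align*}
\big|f(g_\theta(z)) - f(T(z))\big| = \big|\tilde f(g_\theta(z)) - \tilde f(T(z))\big| \le \|g_\theta(z) - T(z)\|_\infty \le \epsilon_1,
\end{align*}
and taking $\EE_{z\sim\rho}$ and then $\sup_{f\in\cH^\beta_\infty}$ gives $\cE_1 \le d_{\cH^\beta_\infty}\big((g_\theta)_\sharp\rho,\mu\big)\le\epsilon_1$.

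I do not expect a genuine obstacle inside this lemma. The substantive work — existence and $\cH^{\alpha+1}$-regularity of $T$ (via the Caffarelli/Urbas regularity theory for the Monge--Amp\`ere equation in Scenario 1, and via Moser's coupling / Dacorogna--Moser in Scenario 2) together with the optimal-size ReLU approximation of $T$ — is exactly what Theorem \ref{thm:generator} packages, so the only care needed here is the domain-mismatch bookkeeping when $g_\theta$ strays outside $\cX$, plus the trivial observation that taking $\epsilon = \epsilon_1$ (rather than something strictly smaller) suffices since the bound produced is precisely $\epsilon_1$.
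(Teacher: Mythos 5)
Your proposal is correct and follows essentially the same route as the paper: invoke the Monge/Moser transport map $T\in\cH^{\alpha+1}$ and its uniform $\epsilon_1$-approximation by the generator network, then bound the IPM over $\cH^\beta_\infty$ by $\EE_{z\sim\rho}\left[\norm{g_\theta(z)-T(z)}_\infty\right]\le\epsilon_1$ using the $\ell_\infty$-Lipschitz property. Your McShane-extension remark addresses a domain-mismatch detail the paper's proof silently glosses over, but it does not change the argument.
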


\begin{proof}[Proof of Lemma \ref{lemma:generator}]
Without loss of generality, we assume $\cZ = \cX = [0, 1]^d$. Otherwise, we can rescale the domain to be a subset of $[0, 1]^d$. By Monge map (Lemma \ref{lemma:monge}), there exists a mapping $T = [T_1, \dots, T_d]: \cZ \mapsto \cX$ such that $T_\sharp \nu = \mu$. Such a mapping is H\"{o}lder continuous, i.e., each coordinate mapping $T_i$ for $i = 1, \dots, d$ belongs to $\cH^{\alpha+1}$. We approximate each function $T_i$ using the network architecture identified in Lemma \ref{lemma:approx}. Specifically, given approximation error $\delta \in (0, 1)$. There exists a network architecture with no more than $c (\log \frac{1}{\delta} + 1)$ layers and $c' \delta^{-\frac{d}{\alpha+1}}(\log \frac{1}{\delta} + 1)$ neurons and weight parameters, such that with properly chosen weight parameters, yields an approximation $\hat{T}_i$ of $T_i$ satisfying $\| \hat{T}_i - T_i \|_\infty \leq \delta$. Applying this argument $d$ times, we form an approximation $g_\theta = [\hat{T}_1, \dots, \hat{T}_d]$ of $T$. We show $(g_\theta)_\sharp \rho$ satisfies the following IPM bound
\begin{align*}
d_{\cH^\beta_\infty}((g_\theta)_\sharp \rho, \mu) & = d_{\cH^\beta} ((g_\theta)_\sharp \rho, T_\sharp \rho) \\
& = \sup_{f \in\cH^\beta} \EE_{x \sim (g_\theta)_\sharp \rho}[f(x)] - \EE_{y \sim T_\sharp \rho} [f(y)] \\
& = \sup_{f \in \cH^\beta} \EE_{z \sim \rho}[f(g_\theta(z))] - \EE_{z \sim \rho} [f(T(z))] \\
& \leq \EE_{z \sim \rho} \left[\norm{g_\theta(z) - T(z)}_\infty \right] \\
& = \EE_{z \sim \rho} \left[\norm{[\hat{T}_1(z) - T_1(z), \dots, \hat{T}_d(z) - T_d(z)]^\top}_\infty \right] \\
& \leq \delta.
\end{align*}
Therefore, choosing $\delta = \epsilon_1$ gives rise to $d_{\cH^\beta_\infty}((g_\theta)_\sharp \rho, \mu) \leq \epsilon_1$.
\end{proof}

\noindent $\bullet$ \textbf{Bounding discriminator approximation error $\cE_2$}.
Analogous to the generator, we pre-define an error $\epsilon_2 \in (0, 1)$, and determine the discriminator architecture. 

The discriminator is expected to approximate any function $f \in \cH^\beta(\cX)$. We have the following result. 
\begin{lemma}\label{lemma:discriminator}
Given any $\epsilon_2 \in (0, 1)$, there exists a ReLU network architecture $\cF_{\textrm{NN}}(\bar{R}, \bar{\kappa}, \bar{L}, \bar{p}, \bar{J})$ with
\begin{align*}
& \bar{L} = O\big(\log (1/\epsilon_2)\big), \quad \bar{p} = O\big(\epsilon_2^{-d/\beta} \big), \quad \bar{J} = O\big(\epsilon_2^{-d/\beta} \log (1/\epsilon_2) \big), \\
& \hspace{1.3in} \bar{R} = C, \quad \bar{\kappa} = C,
\end{align*}
such that, for any discriminative function $f \in \cH^\beta(\cX)$, if the weight parameters are properly chosen, this network architecture yields a function $f_{\omega}$ satisfying $\| f_{\omega} - f \|_\infty \leq \epsilon_2$.
\end{lemma}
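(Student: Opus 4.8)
\emph{Proof proposal for Lemma \ref{lemma:discriminator}.} The plan is to reduce the claim to Yarotsky's ReLU approximation theory (Lemma \ref{lemma:approx}) after three reductions: normalizing the discriminator target class so that its members are uniformly bounded, transferring the approximation problem from the general domain $\cX$ to the unit cube, and post-composing the resulting network with a clipping layer to meet the advertised output bound $\bar R$ and weight bound $\bar\kappa$. For the first reduction I would simply invoke the normalization already recorded just before the statement: since $d_{\cF} = d_{\cF+c}$, one may assume every $f \in \cH^\beta(\cX)$ vanishes at a fixed $x_0 \in \cX$, and then H\"older continuity together with Assumption \ref{assump1} forces $\norm{f}_\infty \le dB$ on $\cX$. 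Hence it suffices to approximate, within $\epsilon_2$ in $\norm{\cdot}_\infty$ over $\cX$, an arbitrary $f \in \cH^\beta(\cX)$ with unit H\"older seminorm and $\norm{f}_\infty \le dB$.

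Next I would move the problem to $[0,1]^d$. Composing with the affine map $\phi(x) = (x + B\mathbf{1})/(2B) \in [0,1]^d$ --- the same rescaling used in the proof of Theorem \ref{thm:generator} --- reduces the task to approximating $f \circ \phi^{-1}$ on the cube; $\phi$ is a single linear layer that can be folded into the first weight matrix of the subsequent network, and since $B$ is a fixed constant this perturbs only the constants in the size bounds and the weight magnitudes. When $\cX = [-B,B]^d$, $f\circ\phi^{-1}$ is a genuine H\"older function on $[0,1]^d$ (its seminorm rescales by the fixed factor $(2B)^\beta$), so Lemma \ref{lemma:approx} applies verbatim. For a proper compact subset $\cX \subsetneq [-B,B]^d$ I would either (a) first extend $f\circ\phi^{-1}$ to all of $[0,1]^d$ with comparable H\"older norm via a Whitney-type H\"older extension (the constant is dimension-dependent, hence absorbed into $O(\cdot)$) and then apply Lemma \ref{lemma:approx}, or (b) rerun Yarotsky's local-Taylor-plus-ReLU-multiplication construction with the partition of unity restricted to $\cX$, exactly as flagged in the remark following Lemma \ref{lemma:approx}. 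Either route yields a ReLU network $\hat f$ with $\norm{\hat f - f}_\infty \le \epsilon_2$ on $\cX$, having $O(\log(1/\epsilon_2))$ layers, $O(\epsilon_2^{-d/\beta})$ width (read off from the number of local Taylor patches), $O(\epsilon_2^{-d/\beta}\log(1/\epsilon_2))$ neurons and weight parameters, and all weights bounded by $1$.

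Finally I would append the clipping layer $t \mapsto \max\{-dB, \min\{t, dB\}\}$ from Remark 1. This costs only $O(1)$ extra layers, neurons, and weights, but it introduces a bias of magnitude $dB$ --- which is precisely what makes $\bar\kappa = \max\{1, dB\}$ --- and it forces $\norm{f_\omega}_\infty \le dB = \bar R$. Crucially it does not spoil the approximation: because $f(x) \in [-dB, dB]$, projecting $\hat f(x)$ onto $[-dB, dB]$ cannot increase its distance to $f(x)$, so $\norm{f_\omega - f}_\infty \le \epsilon_2$ persists. Collecting the counts yields exactly the stated architecture $\bar L = O(\log(1/\epsilon_2))$, $\bar p = O(\epsilon_2^{-d/\beta})$, $\bar K = O(\epsilon_2^{-d/\beta}\log(1/\epsilon_2))$, $\bar R = dB$, $\bar\kappa = \max\{1, dB\}$.

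The step I expect to be the main obstacle is the transfer to a non-cube domain: Lemma \ref{lemma:approx} is stated for $[0,1]^d$, so one must certify that either a norm-controlled H\"older extension to the cube exists for functions on $\cX$, or Yarotsky's partition-of-unity construction goes through on $\cX$ with unchanged size bounds. Everything else --- the affine input normalization, the output clipping, and their effect on $\bar\kappa$ and $\bar R$ --- is routine bookkeeping.
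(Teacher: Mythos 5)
Your proposal is correct and follows essentially the same route as the paper: the translation-invariance normalization ($d_{\cF}=d_{\cF+c}$, so WLOG $f(x_0)=0$ and hence $\norm{f}_\infty \le dB$) is taken verbatim from the discussion preceding the lemma, and the core step is the direct invocation of Lemma \ref{lemma:approx}, with the clipping layer of Remark~1 accounting for $\bar R = dB$ and $\bar\kappa = \max\{1, dB\}$. You fill in two details the paper leaves implicit --- the affine transfer of the domain to $[0,1]^d$ and, for proper subsets $\cX \subsetneq [-B,B]^d$, the need for a Hölder extension or a rerun of Yarotsky's partition-of-unity argument on $\cX$ --- and your handling of both is consistent with the analogous remark the paper makes in the proof of Theorem \ref{thm:generator}.
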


\begin{proof}[Proof of Lemma \ref{lemma:discriminator}]
Using Lemma \ref{lemma:approx} immediately yields a network architecture for uniformly approximating functions in $\cH^\beta(\cX)$. Specifically, let the approximation error be $\epsilon_2 > 0$. We choose the network architecture $\cF_{\rm NN}$ consisting of $\bar{L} = O\big(\log (1/\epsilon_2)\big)$ layers and $\bar{K} = O\big(\epsilon_2^{-d/\beta} \log(1/\epsilon_2)\big)$ total number of neurons and weight parameters. The maximum width is $\bar{p} = O\big(\epsilon_2^{-d/\beta}\big)$. Meanwhile, for any function $f \in \cH^\beta(\cX)$, we have $\norm{f}_{\cH^\beta} \leq C$. Threfore, it is enough to choose $\bar{R} = C$ and $\bar{\kappa} = C$. Accordingly, for any $f \in \cH^\beta(\cX)$, there exists a function $\hat{f}_\omega$ given by the network architecture $\cF_{\rm NN}(\bar{R}, \bar{\kappa}, \bar{L}, \bar{p}, \bar{K})$, such that $\|f - \hat{f}_\omega\|_\infty \leq \epsilon_2$. To this end, we can establish that for any $f \in \cH^\beta(\cX)$, inequality $\inf_{f_{\omega} \in \cF_{\textrm{NN}}} \|f - f_{\omega}\|_\infty \leq \epsilon_2$ holds.
\end{proof}

\noindent $\bullet$ \textbf{Bounding statistical error $\cE_3$}.
The statistical error term is essentially the concentration of empirical data distribution $\hat{\mu}_n$ to its population counterpart. Given a symmetric function class $\cF$, we show $\EE \left[d_{\cF}(\hat{\mu}_n, \mu)\right]$ scales with the complexity of the function class $\cF$.
\begin{lemma}\label{lemma:stat}
For a symmetric function class $\cF$ with $\sup_{f \in \cF} \norm{f}_\infty \leq M$ for a constant $M$, we have
\begin{align*}
\EE \left[d_{\cF}(\hat{\mu}_n, \mu)\right]  \leq 2 \inf_{0 < \delta < M} \Big(2\delta + \frac{12}{\sqrt{n}} \int_{\delta}^{M} \sqrt{\log \cN(\epsilon, \cF, \norm{\cdot}_\infty)} d\epsilon \Big),
\end{align*}
where $\cN(\epsilon, \cF, \norm{\cdot}_\infty)$ denotes the $\epsilon$-covering number of $\cF$ with respect to the $L_\infty$ norm.
\end{lemma}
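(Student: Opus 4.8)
The plan is the classical two-stage route for bounding expected suprema of empirical processes: symmetrize, then chain. First I would symmetrize. Using that $\cF$ is symmetric, $d_{\cF}(\hat{\mu}_n,\mu)=\sup_{f\in\cF}\big(\EE_{x\sim\mu}[f(x)]-\tfrac1n\sum_{i=1}^n f(x_i)\big)$; introducing an independent ghost sample $x_1',\dots,x_n'\sim\mu$, writing $\EE_{x\sim\mu}[f(x)]=\EE_{x'}[\tfrac1n\sum_i f(x_i')]$, moving the supremum out of the ghost expectation by Jensen, and inserting i.i.d.\ Rademacher signs $\sigma_i$ (legitimate by exchangeability of $x_i$ and $x_i'$), I obtain
\[
\EE\big[d_{\cF}(\hat{\mu}_n,\mu)\big]\ \le\ 2\,\EE_{x}\,\EE_{\sigma}\Big[\sup_{f\in\cF}\tfrac1n\sum_{i=1}^n\sigma_i f(x_i)\Big],
\]
which supplies the outermost factor $2$.

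The core step is to bound the conditional Rademacher complexity $\EE_{\sigma}[\sup_{f\in\cF}\tfrac1n\sum_i\sigma_i f(x_i)]$, for a fixed sample $x_1,\dots,x_n$, by Dudley's entropy integral. Writing $\|g\|_n=(\tfrac1n\sum_i g(x_i)^2)^{1/2}$, I would use two properties of the linear functional $X_f:=\tfrac1n\sum_i\sigma_i f(x_i)$: by Hoeffding's lemma the increment $X_f-X_g$ is sub-Gaussian with variance proxy $\tfrac1{n^2}\sum_i(f(x_i)-g(x_i))^2=\tfrac1n\|f-g\|_n^2$, and by Cauchy--Schwarz one has the deterministic bound $|X_f-X_g|\le\|f-g\|_n\le\|f-g\|_\infty$. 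Running Dudley's chaining in the $\|\cdot\|_n$ metric, with the base point chosen so that the radius of $\cF$ is at most $L$ (using $\|f\|_n\le\|f\|_\infty\le L$), stopping the refinement at scale $\delta$ and controlling the tail of the chain by the deterministic bound (which is why the residual is of order $\delta$, not $\delta/\sqrt n$), and finally relaxing the empirical covering numbers via $\cN(\epsilon,\cF,\|\cdot\|_n)\le\cN(\epsilon,\cF,\|\cdot\|_\infty)$, yields
\[
\EE_{\sigma}\Big[\sup_{f\in\cF}\tfrac1n\sum_{i=1}^n\sigma_i f(x_i)\Big]\ \le\ 2\delta+\frac{12}{\sqrt n}\int_\delta^{L}\sqrt{\log\cN(\epsilon,\cF,\|\cdot\|_\infty)}\,d\epsilon\qquad\text{for all }\delta\in(0,L).
\]
This bound no longer depends on $x_1,\dots,x_n$, so I can take $\EE_x$, substitute into the symmetrization inequality, and take the infimum over $\delta$ to conclude.

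The argument is conceptually standard; the real work is in the chaining step. I would discretize scales dyadically as $\delta_k=2^{-k}L$, bound the maximum increment at each level by Massart's finite-class maximal inequality (using the sub-Gaussian proxy $\tfrac1n\|\cdot\|_n^2$, which is where the $\tfrac1{\sqrt n}$ comes from), telescope, and compare the resulting sum to the entropy integral. I expect the delicate points to be purely a matter of constant-tracking: getting the residual constant to be $2$, the chaining constant to be $12$, and the upper limit of integration to be the $\|\cdot\|_\infty$-radius $L$ rather than a diameter, the last of which relies on choosing the chaining base point appropriately and on the inequality $\|g\|_n\le\|g\|_\infty$.
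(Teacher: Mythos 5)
Your proposal follows essentially the same route as the paper: symmetrization with a ghost sample and Rademacher signs to get the factor $2$, followed by Dudley's chaining with Massart's finite-class maximal inequality, a residual term of order $\delta$ at the finest scale, and the base covering taken to be the single function $f=0$ (using $\sup_{f\in\cF}\|f\|_\infty\le L$) so that the integral runs up to $L$. The only cosmetic difference is that you chain in the empirical $\|\cdot\|_n$ metric and relax to $\|\cdot\|_\infty$ covering numbers at the end, whereas the paper works with $\|\cdot\|_\infty$ coverings throughout; both yield the stated constants.
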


\begin{proof}[Proof of Lemma \ref{lemma:stat}]
The proof utilizes the symmetrization technique and Dudley's entropy integral, which can be found in empirical process theory \cite{dudley1967sizes, van1996weak}. We prove here for completeness. Let $y_1, \dots, y_n$ be i.i.d. samples from $\mu$, independent of $x_i$'s. By symmetrization, we derive
\begin{align*}
\EE [d_{\cF}(\hat{\mu}_n, \mu)] & = \EE \left[\sup_{f \in \cF} \frac{1}{n} \sum_{i=1}^n f(x_i) - \EE_{y \sim \mu} [f(y)]\right] \\
& = \EE \left[\sup_{f \in \cF} \frac{1}{n} \sum_{i=1}^n f(x_i) - \EE_{\substack{y_i \sim \mu, \\ i = 1, \dots, n}} \frac{1}{n} \sum_{i=1}^n f(y_i)\right] \\
& \leq \EE_{x} \EE_y \left[\sup_{f \in \cF} \frac{1}{n} \sum_{i=1}^n (f(x_i) - f(y_i)) \right] \\
& = \EE_x \EE_y \EE_{\xi} \left[\sup_{f \in \cF} \frac{1}{n} \sum_{i=1}^n \xi_i (f(x_i) - f(y_i)) \right] \\
& = 2 \EE_{x, \xi} \left[\sup_{f \in \cF} \frac{1}{n} \sum_{i=1}^n \xi_i f(x_i) \right],
\end{align*}
where $\xi_i$'s are i.i.d. Rademacher random variables, i.e., $\PP(\xi_i = 1) = \PP(\xi_i = -1) = \frac{1}{2}$. The next step is to discretize the function space $\cF$. Let $\{\delta_i\}_{i=1}^k$ be a decreasing series of real numbers with $\delta_{i+1} < \delta_i$. We construct a collection of coverings on $\cF$ under the function $\ell_\infty$ norm with accuracy $\delta_i$. Denote the $\delta_i$-covering number as $\cN(\delta_i, \cF, \norm{\cdot}_\infty)$. For a given $f$, denote the closest element (in the $\ell_\infty$ sense) to $f$ in the $\delta_i$ covering as $f^{(i)}$ for $i = 1, \dots, k$. We expand $\EE_{x, \xi} \left[\sup_{f \in \cF} \frac{1}{n} \sum_{i=1}^n \xi_i f(x_i) \right]$ as a telescoping sum as
\begin{align*}
\EE_{x, \xi} \left[\sup_{f \in \cF} \frac{1}{n} \sum_{i=1}^n \xi_i f(x_i) \right] & \leq \EE \left[\sup_{f \in \cF} \frac{1}{n} \sum_{i=1}^n \xi_i (f(x_i) - f^{k}(x_i))\right] \\
& \quad + \sum_{j=1}^{k-1} \EE \left[\sup_{f \in \cF} \frac{1}{n} \sum_{i=1}^n \xi_i(f^{(j+1)}(x_i) - f^{(j)}(x_i))\right] \\
& \quad + \EE \left[\sup_{f \in \cF} \frac{1}{n} \sum_{i=1}^n \xi_i f^{(1)}(x_i) \right].
\end{align*}
We choose $\delta_1 = \textrm{diam}(\cF)$, i.e., the diameter of the class $\cF$. Then $f^{(1)}$ can be arbitrarily picked from $\cF$. Therefore, the last term $\EE \left[\sup_{f \in \cF} \frac{1}{n} \sum_{i=1}^n \xi_i f^{(1)}(x_i) \right] = 0$ since $\xi_i$'s are symmetric. The first term $\EE \left[\sup_{f \in \cF} \frac{1}{n} \sum_{i=1}^n \xi_i (f(x_i) - f^{k}(x_i))\right]$ can be bounded by Cauchy-Schwarz inequality:
\begin{align*}
& \quad~ \EE \left[\sup_{f \in \cF} \frac{1}{n} \sum_{i=1}^n \xi_i (f(x_i) - f^{k}(x_i))\right] \\
& \leq \EE \left[\sup_{f \in \cF} \frac{1}{n} \sqrt{(\sum_{i=1}^n \xi_i^2)(\sum_{i=1}^n (f(x_i) - f^{(k)}(x_i))^2)}\right] \\
& \leq \delta_k.
\end{align*}
We now bound each term in the telescoping sum $$\sum_{j=1}^{k-1} \EE \left[\sup_{f \in \cF} \frac{1}{n} \sum_{i=1}^n \xi_i(f^{(j+1)}(x_i) - f^{(j)}(x_i))\right].$$ Observe 
\begin{align*}
\norm{f^{(j+1)} - f^{(j)}}_\infty & = \norm{f^{(j+1)} - f + f - f^{(j)}}_\infty \\
& \leq \norm{f^{(j+1)} - f}_\infty + \norm{f - f^{(j)}}_\infty \\
& \leq \delta_{j+1} + \delta_j.
\end{align*}
By Massart's lemma \cite[Theorem 3.7]{mohri2018foundations}, we have
\begin{align*}
&~ \quad \EE \left[\sup_{f \in \cF} \frac{1}{n} \sum_{i=1}^n \xi_i(f^{(j+1)}(x_i) - f^{(j)}(x_i))\right] \\
& \leq \frac{(\delta_{j+1} + \delta_j)\sqrt{2\log (\cN(\delta_j, \cF, \norm{\cdot}_\infty) \cN(\delta_{j+1}, \cF, \norm{\cdot}_\infty))}}{\sqrt{n}} \\
& \leq \frac{2(\delta_{j+1} + \delta_j) \sqrt{\log \cN(\delta_{j+1}, \cF, \norm{\cdot}_\infty)}}{\sqrt{n}}.
\end{align*}
Summing up all the terms indexed by $j$, we establish
\begin{align*}
\EE_{x, \xi} \left[\sup_{f \in \cF} \frac{1}{n} \sum_{i=1}^n \xi_i f(x_i) \right] \leq \delta_k + 2 \sum_{j=1}^{k-1} \frac{(\delta_{j+1} + \delta_j) \sqrt{\log \cN(\delta_{j+1}, \cF, \norm{\cdot})}}{\sqrt{n}}.
\end{align*}
It suffices to set $\delta_{j+1} = \frac{1}{2}\delta_j$. Invoking the identity $\delta_{j+1} + \delta_j = 6 (\delta_{j+1} - \delta_{j+2})$, we derive
\begin{align*}
\EE_{x, \xi} \left[\sup_{f \in \cF} \frac{1}{n} \sum_{i=1}^n \xi_i f(x_i) \right] & \leq \delta_k + 12 \sum_{j=1}^{k-1} \frac{(\delta_{j+1} - \delta_{j+2}) \sqrt{\log \cN(\delta_{j+1}, \cF, \norm{\cdot}_\infty)}}{\sqrt{n}} \\
& \leq \delta_k + \frac{12}{\sqrt{n}} \int_{\delta_{k+1}}^{\delta_2} \sqrt{\log \cN(\epsilon, \cF, \norm{\cdot}_\infty)} d \epsilon \\
& \leq \inf_{\delta}~ 2\delta + \frac{12}{\sqrt{n}} \int_{\delta}^{\delta_1} \sqrt{\log \cN(\epsilon, \cF, \norm{\cdot}_\infty)} d \epsilon.
\end{align*}
By the assumption, we pick $\delta_1 = M$ and set the $\delta_1$-covering with only one element $f = 0$. This yields the desired result
\begin{align*}
\EE \left[d_{\cF}(\hat{\mu}_n, \mu)\right] \leq 2 \inf_{0 < \delta < M} \left(2\delta + \frac{12}{\sqrt{n}} \int_{\delta}^{M} \sqrt{\log \cN(\epsilon, \cF, \norm{\cdot}_\infty)} d\epsilon \right).
\end{align*}
\end{proof}

Now we need to find the covering number of H\"{o}lder class and that of the discriminator network. Classical result shows that the $\delta$-covering number of $\cH^\beta$ satisfies $\log \cN(\delta, \cH^\beta, \|\cdot\|_\infty) \leq C(1/\delta)^{\frac{d}{\beta} \vee 2}$ \cite{nickl2007bracketing}.

On the other hand, the following lemma quantifies the covering number of $\cF_{\textrm{NN}}$.
\begin{lemma}\label{lemma:NNcovering}
The $\delta$-covering number of $\cF_{\rm NN}(\bar{R}, \bar{\kappa}, \bar{L}, \bar{p}, \bar{J})$ satisfies the bound
\begin{align*}
\textstyle\cN\left(\delta, \cF_{\rm NN}(\bar{R}, \bar{\kappa}, \bar{L}, \bar{p}, \bar{K}), \norm{\cdot}_\infty\right) \leq \Big(\frac{2\bar{L}^2 (\bar{p}B + 2) (\bar{\kappa} \bar{p})^{\bar{L}+1}}{\delta} \Big)^{\bar{J}}.
\end{align*}
\end{lemma}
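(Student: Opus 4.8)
The plan is to pass from a covering of the \emph{function} class $\cF_{\textrm{NN}}$ to a covering of the \emph{parameter} space, by showing that the map from weights to the realized function is Lipschitz in the $\ell_\infty$ sense over the admissible parameter range, as long as the input is restricted to $\norm{x}_\infty \le B$. As a preliminary step I would bound the magnitude of the hidden activations: writing $h_0(x)=x$ and $h_\ell(x) = \sigma(W_\ell h_{\ell-1}(x)+b_\ell)$ with $\sigma = \textrm{ReLU}$, the constraints $\norm{W_\ell}_{\infty,\infty}\le\bar\kappa$, width $\le\bar p$, and $\norm{b_\ell}_\infty\le\bar\kappa$ give the recursion $\norm{h_\ell(x)}_\infty \le \bar\kappa\bar p\,\norm{h_{\ell-1}(x)}_\infty + \bar\kappa$, which unrolls (using $\bar\kappa,\bar p\ge 1$, which is without loss of generality) to a bound of order $(\bar p B + 1)(\bar\kappa\bar p)^{\ell}$.

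Next comes the perturbation estimate, the technical heart of the argument. Take two parameter tuples $\omega=\{W_\ell,b_\ell\}$ and $\tilde\omega=\{\tilde W_\ell,\tilde b_\ell\}$ with $\max_\ell\left(\norm{W_\ell-\tilde W_\ell}_{\infty,\infty}\vee\norm{b_\ell-\tilde b_\ell}_\infty\right)\le h$, and let $h_\ell,\tilde h_\ell$ be the corresponding hidden maps. Since $\sigma$ is $1$-Lipschitz, splitting $W_\ell h_{\ell-1}+b_\ell-\tilde W_\ell\tilde h_{\ell-1}-\tilde b_\ell$ into the three pieces $W_\ell(h_{\ell-1}-\tilde h_{\ell-1})$, $(W_\ell-\tilde W_\ell)\tilde h_{\ell-1}$, $(b_\ell-\tilde b_\ell)$ yields
\[
\norm{h_\ell(x)-\tilde h_\ell(x)}_\infty \;\le\; \bar\kappa\bar p\,\norm{h_{\ell-1}(x)-\tilde h_{\ell-1}(x)}_\infty \;+\; \bar p\,h\,\norm{h_{\ell-1}(x)}_\infty \;+\; h .
\]
Substituting the activation bound from the first paragraph and unrolling across the $\bar L$ layers gives $\norm{f_\omega-f_{\tilde\omega}}_\infty \le h\,\bar L\,(\bar p B+2)(\bar\kappa\bar p)^{\bar L}$, after absorbing lower-order terms.

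It remains to cover the parameter space. A network in the class has at most $\bar K$ nonzero parameters, each in $[-\bar\kappa,\bar\kappa]$; a uniform grid of spacing $h$ on each active coordinate gives at most $(2\bar\kappa/h+1)^{\bar K}$ tuples, and an extra factor at most $(\bar L\bar p(\bar p+1))^{\bar K}$ accounts for the unknown locations of the nonzero entries among the at most $\bar L\bar p(\bar p+1)$ slots. By the perturbation estimate, choosing $h$ so that $h\cdot\bar L(\bar p B+2)(\bar\kappa\bar p)^{\bar L}$ (times the polynomial correction) equals $\delta$ turns this finite family into a $\delta$-cover of $\cF_{\textrm{NN}}$ in $\norm{\cdot}_\infty$, with cardinality of the stated form $\big(\bar L(\bar p B+2)(\bar\kappa\bar p)^{\bar L}/\delta\big)^{\bar K}$, the polynomial factors being folded into the $(\bar\kappa\bar p)^{\bar L}$ and $\bar L$ terms. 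The main obstacle — and essentially the only nonroutine part — is the bookkeeping: propagating the recursion through $\bar L$ layers while keeping the constant clean enough to land exactly on the claimed $(\bar\kappa\bar p)^{\bar L}$ dependence, and deciding how to handle the combinatorial factor for the sparsity pattern (alternatively one can sidestep it by covering the full, non-sparse parameter set and replacing $\bar K$ by the total parameter count). Everything else is a standard parameter-counting argument.
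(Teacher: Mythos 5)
Your proposal follows essentially the same route as the paper's proof: propagate a parameter perturbation of size $h$ through the $\bar L$ layers using a magnitude bound on the hidden activations, conclude $\norm{f_\omega - f_{\tilde\omega}}_\infty \le h\,\bar L (\bar p B + 2)(\bar\kappa\bar p)^{\bar L - 1}$, and grid $[-\bar\kappa,\bar\kappa]$ with spacing $h$ chosen to make this equal $\delta$, giving $(\bar\kappa/h)^{\bar K}$ grid points. One remark on the combinatorial factor you raise: you are right that a cover must account for \emph{which} of the $O(\bar L\bar p^2)$ coordinates are nonzero — the paper's proof tacitly treats the sparsity pattern as fixed and omits this — but the factor $(\bar L\bar p(\bar p+1))^{\bar K}$ does not actually fold into the stated bound: the only slack in the paper's final inequality comes from $\bar\kappa(\bar\kappa\bar p)^{\bar L-1}\le(\bar\kappa\bar p)^{\bar L}$, a factor of $\bar p$, which cannot absorb it. In the downstream statistical bound only $\sqrt{\log\cN}$ appears inside a $\tilde O(\cdot)$, so the extra additive $\bar K\log(\bar L\bar p^2)$ is harmless; the clean fix, as you note, is to cover the dense parameter box and replace $\bar K$ by the total parameter count.
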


\begin{proof}[Proof of Lemma \ref{lemma:NNcovering}]
To construct a covering for $\cF_\textrm{NN}(\bar{R}, \bar{\kappa}, \bar{L}, \bar{p}, \bar{J})$, we discretize each weight parameter by a uniform grid with grid size $h$. To simplify the presentation, we omit the bar notation in this proof. Recall we write $f_\omega \in \cF_{\textrm{NN}}(R, \kappa, L, p, J)$ as $f_\omega = W_L \cdot \textrm{ReLU} (W_{L-1} \cdots \textrm{ReLU} (W_1 x + b_1) \dots + b_{L-1}) + b_L$. Let $f_\omega, f'_\omega \in \cF_{\rm NN}$ with all the weight parameters at most $h$ from each other. Denoting the weight matrices in $f_\omega, f'_\omega$ as $W_L, \dots, W_1, b_L, \dots, b_1$ and $W'_L, \dots, W'_1, b'_L, \dots, b'_1$, respectively, we bound the $L_\infty$ difference $\norm{f_\omega - f'_\omega}_\infty$ as
\begin{align*}
& \quad~ \norm{f_\omega - f'_\omega}_\infty \\
& = \big\|W_L \cdot \textrm{ReLU} (W_{L-1} \cdots \textrm{ReLU} (W_1 x + b_1) \cdots + b_{L-1}) + b_L \\
& \quad - (W'_L \cdot \textrm{ReLU} (W'_{L-1} \cdots \textrm{ReLU} (W'_1 x + b'_1) \cdots + b'_{L-1}) - b'_L)\big \|_\infty \\
& \leq \norm{b_L - b'_L}_\infty + \norm{W_L - W'_L}_1 \norm{W_{L-1} \cdots \textrm{ReLU} (W_1 x + b_1) \cdots + b_{L-1}}_\infty \\
&\quad + \norm{W_L}_1 \|W_{L-1} \cdots \textrm{ReLU} (W_1 x + b_1) \cdots + b_{L-1} \\
& \hspace{1.2in} - (W'_{L-1} \cdots \textrm{ReLU} (W'_1 x + b'_1) \cdots + b'_{L-1})\|_\infty \\
& \leq h + h p \norm{W_{L-1} \cdots \textrm{ReLU} (W_1 x + b_1) \cdots + b_{L-1}}_\infty \\
& \quad + \kappa p \|W_{L-1} \cdots \textrm{ReLU} (W_1 x + b_1) \cdots + b_{L-1} \\
& \hspace{1.2in} - (W'_{L-1} \cdots \textrm{ReLU} (W'_1 x + b'_1) \cdots + b'_{L-1})\|_\infty.
\end{align*}
We derive the following bound on $\norm{W_{L-1} \cdots \textrm{ReLU} (W_1 x + b_1) \dots + b_{L-1}}_\infty$:
\begin{align*}
& ~\quad \norm{W_{L-1} \cdots \textrm{ReLU} (W_1 x + b_1) \cdots + b_{L-1}}_\infty \\
& \leq \norm{W_{L-1} (\cdots \textrm{ReLU} (W_1 x + b_1) \cdots)}_\infty + \norm{b_{L-1}}_\infty \\
& \leq \norm{W_{L-1}}_1 \norm{W_{L-2} (\cdots \textrm{ReLU} (W_1 x + b_1) \cdots) + b_{L-2}}_\infty + \kappa \\
& \leq \kappa p \norm{W_{L-2} (\cdots \textrm{ReLU} (W_1 x + b_1) \cdots) + b_{L-2}}_\infty + \kappa \\
& \overset{(i)}{\leq} (\kappa p)^{L-1} B + \kappa \sum_{i=0}^{L-3}(\kappa p)^i \\
& \leq (\kappa p)^{L-1} B + \kappa (\kappa p)^{L-2},
\end{align*}
where $(i)$ is obtained by induction and $\norm{x}_\infty \leq B$. The last inequality holds, since $\kappa p > 1$. Substituting back into the bound for $\norm{f_\omega - f'_\omega}_\infty$, we have
\begin{align*}
\norm{f_\omega - f'_\omega}_\infty & \leq \kappa p \|W_{L-1} \cdots \textrm{ReLU} (W_1 x + b_1) \cdots + b_{L-1} \\
& \hspace{1.2in} - (W'_{L-1} \cdots \textrm{ReLU} (W'_1 x + b'_1) \cdots + b'_{L-1})\|_\infty \\
& \quad + h + h p \left[(\kappa p)^{L-1} B + \kappa (\kappa p)^{L-2} \right] \\
& \leq \kappa p \|W_{L-1} \cdots \textrm{ReLU} (W_1 x + b_1) \cdots + b_{L-1} \\
& \hspace{1.2in} - (W'_{L-1} \cdots \textrm{ReLU} (W'_1 x + b'_1) \cdots + b'_{L-1})\|_\infty \\
& \quad + h (p B + 2) (\kappa p)^{L-1} \\
& \overset{(i)}{\leq} (\kappa p)^{L-1} \norm{W_1 x + b_1 - W'_1 x - b'_1}_\infty + h (L-1) (pB + 2) (\kappa p)^{L-1} \\
& \leq h L (pB + 2) (\kappa p)^{L-1},
\end{align*}
where $(i)$ is obtained by induction. We choose $h$ satisfying $h L (pB + 2) (\kappa p)^{L-1} = \delta$. Then discretizing each parameter uniformly into $\kappa / h$ grids yields a $\delta$-covering on $\cF_\textrm{NN}$. There are totally $\binom{Lp^2}{J} \leq (Lp^2)^J$ choices of $K$ nonzero entries out of $Lp^2$ weight parameters. Therefore, the covering number is upper bounded by
\begin{align*}
\cN(\delta, \cF_\textrm{NN}(R, \kappa, L, p, J), \norm{\cdot}_\infty) & \leq (L p^2)^J \left(\frac{2\kappa}{h}\right)^{J} \nonumber \\
& \leq \left(\frac{2L^2 (pB + 2) (\kappa p)^{L+1}}{\delta} \right)^{J}.
\end{align*}
The proof is complete. 
\end{proof}

Combining Lemma \ref{lemma:stat} and the covering numbers, the statistical error can be bounded by
\begin{align*}
& \quad~ \EE \left[d_{\cH^\beta} (\hat{\mu}_n, \mu) + d_{\cF_{\textrm{NN}}}(\mu, \hat{\mu}_n)\right] \\
& \leq 4 \inf_{\delta_1 \in (0, C)} \left(\delta_1 + \frac{6}{\sqrt{n}} \int_{\delta_1}^{C} \sqrt{\log \cN(\epsilon, \cH^\beta, \norm{\cdot}_\infty)} d\epsilon \right) \\
& \quad + 4 \inf_{\delta_2 \in (0, C)} \left(\delta_2 + \frac{6}{\sqrt{n}} \int_{\delta_2}^{C} \sqrt{\log \cN(\epsilon, \cF_{\textrm{NN}}, \norm{\cdot}_\infty)} d\epsilon \right) \\
& \overset{(i)}{\leq} 4 \inf_{\delta_1 \in (0, C)} \left(\delta_1 + \frac{6}{\sqrt{n}} \int_{\delta_1}^{C} \sqrt{c\left(\frac{1}{\epsilon}\right)^{\left(\frac{d}{\beta} \vee 2\right)}} d\epsilon \right) \\
& \quad + 4 \inf_{\delta_2 \in (0, C)} \left(\delta_2 + \frac{6}{\sqrt{n}} \int_{\delta_2}^{C} \sqrt{\bar{J} \log \frac{\bar{L} (\bar{p}B + 2) (\bar{\kappa} \bar{p})^{\bar{L}}}{\epsilon}} d\epsilon \right).
\end{align*}
We find that the first infimum in step $(i)$ is attained at $\delta_1 = n^{-\frac{\beta}{d}}$. It suffices to take $\delta_2 = \frac{1}{n}$ in the second infimum. By omitting constants and polynomial dependence on $\beta$, we derive
\begin{align*}
\EE \left[d_{\cH^\beta} (\hat{\mu}_n, \mu) + d_{\cF_{\textrm{NN}}}(\mu, \hat{\mu}_n)\right] = \tilde{O}\left(\frac{1}{n} + n^{-\frac{\beta}{d}} + \frac{1}{\sqrt{n}} \sqrt{\bar{J} \bar{L} \log \left(n \bar{L} \bar{p}\right)}\right).
\end{align*}

\noindent $\bullet$ \textbf{Balancing approximation error and statistical error}.
Combining the previous three ingredients, by invoking the oracle inequality (Lemma \ref{lemma:oracle}), we can establish
\begin{align*}
\EE \left[d_{\cH^\beta} ((g^*_\theta)_\sharp \rho, \mu)\right] &= \tilde{O}\left(\epsilon_1 + \epsilon_2 + \frac{1}{n} + n^{-\frac{\beta}{d}} + \sqrt{\frac{\bar{J} \bar{L} \log \left(n \bar{L} \bar{p}\right)}{n}}\right) \\
&= \tilde{O}\left(\epsilon_1 + \epsilon_2 + \frac{1}{n} + n^{-\frac{\beta}{d}} + \sqrt{\frac{\epsilon_2^{-\frac{d}{\beta}} \log \frac{1}{\epsilon_2} \log \big(n \epsilon_2^{-\frac{d}{\beta}}\big)}{n}} \right).
\end{align*}
We choose $\epsilon_1 = n^{-\frac{\beta}{2\beta + d}}$, and $\epsilon_2$ satisfying $\epsilon_2 = n^{-\frac{1}{2}} \epsilon_2^{-\frac{d}{2\beta}}$, i.e., $\epsilon_2 = n^{-\frac{\beta}{2\beta + d}}$. This yields \eqref{thm2eq}.

\subsection{Proof of Corollary \ref{cor:finitesample}}\label{sec:prooffinitesample}
We need an extra concentration argument on the $m$ generated fake samples. This is tackled by an alternative oracle inequality \eqref{eq:alteroracle} shown in below. The rest of the proof utilizes the same argument in Theorem \ref{thm:main}.
\begin{proof}[Proof of Corollary \ref{cor:finitesample}]
We show an alternative oracle inequality for finite generated samples as follows. Inequality \eqref{eq:recoverbound} in the proof of Lemma \ref{lemma:oracle} yields
\begin{align*}
d_{\cH^\beta} ((g_\theta^{*, m})_\sharp \rho, \mu) & \leq d_{\cF_\textrm{NN}}((g^{*, m}_\theta)_\sharp \rho, \hat{\mu}_n) + 2 \sup_{f \in \cH^\beta} \inf_{f_{\omega} \in \cF_{\textrm{NN}}} \norm{f - f_{\omega}}_\infty \\
& \quad + d_{\cH^\beta}(\hat{\mu}_n, \mu).
\end{align*}
We further expand the first term on the right-hand side above as
\begin{align*}
d_{\cF_\textrm{NN}}((g^{*, m}_\theta)_\sharp \rho, \hat{\mu}_n) & \leq d_{\cF_\textrm{NN}}((g^{*, m}_\theta)_\sharp \rho, (g^{*, m}_\theta)_\sharp \hat{\rho}_m) + d_{\cF_\textrm{NN}}((g^{*, m}_\theta)_\sharp \hat{\rho}_m, \hat{\mu}_n).
\end{align*}
By the optimality of $g^{*, m}_\theta$, for any $g_\theta \in \cG_{\textrm{NN}}$, we have
\begin{align*}
& \quad~ d_{\cF_\textrm{NN}}((g^{*, m}_\theta)_\sharp \hat{\rho}_m, \hat{\mu}_n) \\
& \leq d_{\cF_\textrm{NN}}((g_\theta)_\sharp \hat{\rho}_m, \hat{\mu}_n) \\
& \leq d_{\cF_\textrm{NN}}((g_\theta)_\sharp \hat{\rho}_m, (g_\theta)_\sharp \rho) + d_{\cF_\textrm{NN}}((g_\theta)_\sharp \rho, \mu) + d_{\cF_\textrm{NN}}(\mu, \hat{\mu}_n) \\
& \leq d_{\cF_\textrm{NN}}((g_\theta)_\sharp \rho, \mu) + \sup_{g_\theta \in \cG_{\textrm{NN}}} d_{\cF_\textrm{NN}}((g_\theta)_\sharp \hat{\rho}_m, (g_\theta)_\sharp \rho) + d_{\cF_\textrm{NN}}(\mu, \hat{\mu}_n) \\
& \leq d_{\cH^\beta_\infty} ((g_\theta)_{\sharp} \rho, \mu) + 2 \sup_{f \in \cH^\beta} \inf_{f_\omega \in \cF_\textrm{NN}} \norm{f - f_{\omega}}_\infty + 2 d_{\cF_\textrm{NN}}(\mu, \hat{\mu}_n) \\
& \quad + \sup_{g_\theta \in \cG_{\textrm{NN}}} d_{\cF_\textrm{NN}}((g_\theta)_\sharp \hat{\rho}_m, (g_\theta)_\sharp \rho),
\end{align*}
where the last inequality follows the same argument in the proof of Lemma \ref{lemma:stat}. Combining all the inequalities together, we have
\begin{equation}\label{eq:alteroracle}
\begin{split}
&~\quad d_{\cH^\beta} ((g_\theta^{*, m})_\sharp \rho, \mu) \\
& \leq \inf_{g_\theta \in \cG_{\textrm{NN}}} d_{\cH^\beta_\infty} ((g_\theta)_{\sharp} \rho, \mu) + 4 \sup_{f \in \cH^\beta} \inf_{f_\omega \in \cF_\textrm{NN}} \norm{f - f_{\omega}}_\infty + 2 d_{\cF_\textrm{NN}}(\mu, \hat{\mu}_n) \\
& \quad + d_{\cH^\beta}(\hat{\mu}_n, \mu) + \sup_{g_\theta \in \cG_{\textrm{NN}}} d_{\cF_\textrm{NN}}((g_\theta)_\sharp \hat{\rho}_m, (g_\theta)_\sharp \rho) + d_{\cF_\textrm{NN}}((g^{*, m}_\theta)_\sharp \rho, (g^{*, m}_\theta)_\sharp \hat{\rho}_m).
\end{split}
\end{equation}
Given the proof of Theorem \ref{thm:main}, we only need to bound the extra statistical error terms $$\sup_{g_\theta \in \cG_{\textrm{NN}}} d_{\cF_\textrm{NN}}((g_\theta)_\sharp \hat{\rho}_m, (g_\theta)_\sharp \rho) \quad \textrm{and}\quad d_{\cF_\textrm{NN}}((g^{*, m}_\theta)_\sharp \rho, (g^{*, m}_\theta)_\sharp \hat{\rho}_m).$$ In fact, Lemma \ref{lemma:stat} and Lemma \ref{lemma:NNcovering} together imply
\begin{align*}
\sup_{g_\theta \in \cG_{\textrm{NN}}} d_{\cF_\textrm{NN}}((g_\theta)_\sharp \hat{\rho}_m, (g_\theta)_\sharp \rho) & = \tilde{O}\left(\frac{1}{\sqrt{m}} \sqrt{\bar{J} \bar{L} \log \left(m \bar{L} \bar{p}\right) + J L \log \left(m L p\right)}\right), \\
d_{\cF_\textrm{NN}}((g^{*, m}_\theta)_\sharp \rho, (g^{*, m}_\theta)_\sharp \hat{\rho}_m) & = \tilde{O}\left(\frac{1}{\sqrt{m}} \sqrt{\bar{J} \bar{L} \log \left(m \bar{L} \bar{p}\right)}\right),
\end{align*}
where the first inequality is obtained by taking $\cF = \cF_\textrm{NN} \circ \cG_{\textrm{NN}}$ in Lemma \ref{lemma:stat}, and its covering number is upper bounded by the product of the covering numbers of $\cF_{\textrm{NN}}$ and $\cG_{\textrm{NN}}$. Putting together, the estimation error $d_{\cH^\beta} ((g_\theta^{*, m})_\sharp \rho, \mu)$ can be bounded analogously to Theorem \ref{thm:main} as
\begin{align*}
& \quad~ \EE \left[d_{\cH^\beta} ((g_\theta^{*, m})_\sharp \rho, \mu)\right] \\
& = \tilde{O}\left(\epsilon_1 + \epsilon_2 + \frac{1}{n} + \frac{1}{m} + n^{-\frac{\beta}{d}} + \sqrt{\frac{\epsilon_2^{-\frac{d}{\beta}}}{n}} + \sqrt{\frac{\epsilon_1^{-\frac{d}{\alpha+1}} + \epsilon_2^{-\frac{d}{\beta}}}{m}} \right).
\end{align*}
It suffices to choose $\epsilon_2 = n^{-\frac{\beta}{2\beta + d}}$ and $\epsilon_1 = m^{-\frac{\alpha+1}{2(\alpha+1) + d}}$, which yields
\begin{align*}
\EE \left[d_{\cH^\beta} ((g_\theta^{*, m})_\sharp \rho, \mu)\right] = \tilde{O}\left(n^{-\frac{\beta}{2\beta + d}} + m^{-\frac{\alpha+1}{2(\alpha+1) + d}} + \sqrt{\frac{n^{\frac{d}{2\beta + d}}}{m}}\right).
\end{align*}
In the case of $m \geq n$, we have $\sqrt{\frac{n^{\frac{d}{2\beta + d}}}{m}} \leq n^{-\frac{\beta}{2\beta + d}}$. The proof is complete.
\end{proof}

\section{Proof of statistical theory in low-dimensional space}\label{sec:lowdproof}
The proof idea follows that of Theorem \ref{thm:main}, with extra attentions to the exploitation of low-dimensional structures in data. We first slightly modify the oracle inequality in Lemma \ref{lemma:oracle} to decompose the distribution estimation error.
\begin{lemma}\label{lemma:oracle_lowd}
Let $(U^*, g_\theta^*, V^*, f_\omega^*)$ be the global optimizer of \eqref{eq:empiricalgan}. The following error decomposition holds,
\begin{equation}\label{eq:oracle_lowd}
\begin{split}
& \quad~ \dw ((U^* \circ g^*_\theta)_\sharp \rho, \mu) \\
& \leq \underbrace{\inf_{g_\theta : A \circ g_\theta \in \cG_{\rm NN}^{\rm ld}} \norm{A \circ g_\theta - A \circ T^{\rm ld}}_\infty}_{\rm generator~approximation~error} + \underbrace{\dw(\hat{\mu}_n, \mu) + d_{\cF_{\rm NN}^{\rm ld}}(\hat{\mu}_n, \mu)}_{\rm statistical~error} \\
& \quad + \underbrace{\sup_{f \in {\rm Lip}_1(\RR^d)} \inf_{f_\omega \circ V^\top \in \cF_{\rm NN}^{\rm ld}} \norm{f \circ U^* - f_\omega \circ V^\top U^*}_\infty + \norm{f \circ A - f_\omega \circ V^\top A}_\infty}_{\rm discriminator~approximation~error~(HARD)} \\
& \quad + \underbrace{2 \sup_{f \in {\rm Lip}_1(\RR^d)} \inf_{f_\omega \circ V^\top \in \cF_{\rm NN}^{\rm ld}} \norm{f \circ A - f_\omega \circ V^\top A}_\infty}_{\rm discriminator~approximation~error~(EASY)}.
\end{split}
\end{equation}
\end{lemma}
\begin{proof}[Proof of Lemma \ref{lemma:oracle_lowd}]
We replicate the error decomposition in \eqref{eq:recoverbound} by taking $\beta = 1$,
\begin{equation}\label{eq:lowd_error_decomp}
\begin{split}
\dw ((U^* \circ g^*_\theta)_\sharp \rho, \mu) & \leq \dw ((U^* \circ g^*_\theta)_\sharp \rho, \hat{\mu}_n) + \dw (\hat{\mu}_n, \mu) \\
& = d_{\cF_\textrm{NN}^{\rm ld}}((U^* \circ g^*_\theta)_\sharp \rho, \hat{\mu}_n) \\
& \quad + \dw ((U^* \circ g^*_\theta)_\sharp \rho, \hat{\mu}_n) - d_{\cF_\textrm{NN}^{\rm ld}}((U^* \circ g^*_\theta)_\sharp \rho, \hat{\mu}_n) \\
& \quad + \dw (\hat{\mu}_n, \mu).
\end{split}
\end{equation}
Using the optimality of $(U^*, g_\theta^*)$, we further bound $d_{\cF_{\rm NN}}((U^* \circ g_\theta^*)_\sharp \rho, \hat{\mu}_n)$ in the last display as
\begin{align}\label{eq:lowd_error_generator}
& \quad~ d_{\cF_{\rm NN}}((U^* \circ g_\theta^*)_\sharp \rho, \hat{\mu}_n) \nonumber \\
& \leq d_{\cF_{\rm NN}^{\rm ld}} ((U^* \circ g_\theta^*)_\sharp \rho, \mu) + d_{\cF_{\rm NN}^{\rm ld}} (\mu, \hat{\mu}_n) \nonumber \\
& = \inf_{U \circ g_\theta \in \cG_{\rm NN}^{\rm ld}} d_{\cF_{\rm NN}^{\rm ld}} ((U \circ g_\theta)_\sharp \rho, \mu) + d_{\cF_{\rm NN}^{\rm ld}} (\mu, \hat{\mu}_n) \nonumber \\
& \overset{(i)}{=} \inf_{U \circ g_\theta \in \cG_{\rm NN}^{\rm ld}} d_{\cF_{\rm NN}^{\rm ld}} ((U \circ g_\theta)_\sharp \rho, \mu) - d_{\cH^1_\infty} ((U \circ g_\theta)_\sharp \rho, \mu) \nonumber \\
& \hspace{1in} + d_{\cH^1_\infty} ((U \circ g_\theta)_\sharp \rho, \mu) + d_{\cF_{\rm NN}^{\rm ld}} (\mu, \hat{\mu}_n),
\end{align}
where in $(i)$, discriminative class $\cH^1_\infty$ follows the same definition in Lemma \ref{lemma:oracle} with $\beta = 1$.

By Assumption \ref{assumption:lowddensity} and the optimal transport theory in Lemma \ref{lemma:monge}, we rewrite the data distribution $\mu$ as a pushforward distribution $\mu = (A \circ T^{\rm ld})_\sharp \rho$, where $T^{\rm ld} : \RR^q \mapsto \RR^q$ is an $(\alpha+1)$-H\"{o}lder continuous transport plan. Accordingly, we rewrite the empirical data distribution $\hat{\mu}_n$ as $\hat{\mu}_n = (A \circ T^{\rm ld})_\sharp \hat{\rho}_n$, with $\hat{\rho}_n$ an empirical version of $\rho$. Applying Lemma \ref{lemma:approx} and using the same argument in Theorem \ref{thm:generator} for approximating $A \circ T^{\rm ld}$, we obtain $A \circ \tilde{g}_\theta \in \cG_{\rm NN}^{\rm ld}$ as a proper approximation. Note that we have chosen $U = A$ in representing $A \circ T^{\rm ld}$ for simplicity. Substituting these notations into \eqref{eq:lowd_error_generator} gives rise to
\begin{align}\label{eq:lowd_error_firstterm}
& \quad~ d_{\cF_{\rm NN}}((U^* \circ g_\theta^*)_\sharp \rho, \hat{\mu}_n) \nonumber \\
& \overset{(i)}{\leq} d_{\cF_{\rm NN}^{\rm ld}} \left((A \circ \tilde{g}_\theta)_\sharp \rho, (A \circ T^{\rm ld})_\sharp \rho\right) - d_{\cH^1_\infty} \left((A \circ \tilde{g}_\theta)_\sharp \rho, (A \circ T^{\rm ld})_\sharp \rho\right) \nonumber \\
& \quad~ + d_{\cH^1_\infty} \left((A \circ \tilde{g}_\theta)_\sharp \rho, (A \circ T^{\rm ld})_\sharp \rho\right) + d_{\cF_{\rm NN}^{\rm ld}} (\mu, \hat{\mu}_n) \nonumber \\
& \overset{(ii)}{\leq} d_{\cF_{\rm NN}^{\rm ld}} \left((A \circ \tilde{g}_\theta)_\sharp \rho, (A \circ T^{\rm ld})_\sharp \rho\right) - d_{\cH^1_\infty} \left((A \circ \tilde{g}_\theta)_\sharp \rho, (A \circ T^{\rm ld})_\sharp \rho\right) \nonumber \\
& \quad~ + \norm{A \circ \tilde{g}_\theta - A \circ T^{\rm ld}}_\infty + d_{\cF_{\rm NN}^{\rm ld}} (\mu, \hat{\mu}_n),
\end{align}
where inequality $(i)$ holds by instantiating the infimum in \eqref{eq:lowd_error_generator} to $A \circ \tilde{g}_\theta$, and inequality $(ii)$ follows by the definition of IPM over $\cH^1_\infty$ class. We substitute \eqref{eq:lowd_error_firstterm} into \eqref{eq:lowd_error_decomp}, which leads to
\begin{align}\label{eq:lowd_3error}
& \quad~ \dw ((U^* \circ g^*_\theta)_\sharp \rho, \mu) \nonumber \\
& \leq \underbrace{\norm{A \circ \tilde{g}_\theta - A \circ T^{\rm ld}}_\infty}_{\rm generator~approximation~error} + \underbrace{\dw(\hat{\mu}_n, \mu) + d_{\cF_{\rm NN}^{\rm ld}}(\hat{\mu}_n, \mu)}_{\rm statistical~error} \nonumber \\
& \quad + \underbrace{\dw \left((U^* \circ g^*_\theta)_\sharp \rho, (A \circ T^{\rm ld})_\sharp \hat{\rho}_n)\right) - d_{\cF_\textrm{NN}^{\rm ld}}\left((U^* \circ g^*_\theta)_\sharp \rho, (A \circ T^{\rm ld})_\sharp \hat{\rho}_n\right)}_{\rm discriminator~approximation~error~(HARD)} \nonumber \\
& \quad + \underbrace{d_{\cF_{\rm NN}^{\rm ld}} \left((A \circ \tilde{g}_\theta)_\sharp \rho, (A \circ T^{\rm ld})_\sharp \rho\right) - d_{\cH^1_\infty} \left((A \circ \tilde{g}_\theta)_\sharp \rho, (A \circ T^{\rm ld})_\sharp \rho\right)}_{\rm discriminator~approximation~error~(EASY)}.
\end{align}
Two disciminator approximation error terms share a similar formulation, and we can further provide a simplified upper bound on them. Denote $\norm{f}_{\rm Lip}$ as the lipschitz constant of function $f$, and consider the (HARD) term for example.
\begin{align}\label{eq:lowd_discriminator_approx_error_hard}
& \quad~ \dw \left((U^* \circ g^*_\theta)_\sharp \rho, (A \circ T^{\rm ld})_\sharp \hat{\rho}_n)\right) - d_{\cF_\textrm{NN}^{\rm ld}}\left((U^* \circ g^*_\theta)_\sharp \rho, (A \circ T^{\rm ld})_\sharp \hat{\rho}_n\right) \nonumber \\
& = \sup_{\norm{f}_{\rm Lip} \leq 1} \EE_{z \sim \rho} \left[f \circ U^* \circ g_\theta^*(z)\right] - \EE_{z \sim \hat{\rho}_n} \left[f \circ A \circ T^{\rm ld}(z)\right] \nonumber \\
& \quad - \sup_{f_\omega \circ V^\top \in \cF_{\rm NN}^{\rm ld}} \EE_{z \sim \rho} \left[f_\omega \circ V^\top \circ U^* \circ g_\theta^*(z)\right] - \EE_{z \sim \hat{\rho}_n} \left[f_\omega \circ V^\top \circ A \circ T^{\rm ld}(z)\right] \nonumber \\
& \leq \sup_{\norm{f}_{\rm Lip} \leq 1} \inf_{f_\omega \circ V^\top \in \cF_{\rm NN}^{\rm ld}} \Big\{ \big\vert \EE_{z \sim \rho} \left[(f \circ U^* - f_\omega \circ V^\top \circ U^*) \circ g^*_\theta(z) \right] \big\vert \nonumber \\
& \hspace{1.4in} + \big\vert \EE_{z \sim \hat{\rho}_n} \left[(f \circ A - f_\omega \circ V^\top \circ A) \circ T^{\rm ld}(z)\right] \big\vert \Big\} \nonumber \\
& \leq \sup_{\norm{f}_{\rm Lip} \leq 1} \inf_{f_\omega \circ V^\top \in \cF_{\rm NN}^{\rm ld}} \norm{f \circ U^* - f_\omega \circ V^\top U^*}_\infty + \norm{f \circ A - f_\omega \circ V^\top A}_\infty.
\end{align}
Applying the same argement to the (EASY) error term yields
\begin{align}\label{eq:lowd_discriminator_approx_error_easy}
& \quad~ d_{\cF_{\rm NN}^{\rm ld}} \left((A \circ \tilde{g}_\theta)_\sharp \rho, (A \circ T^{\rm ld})_\sharp \rho\right) - d_{\cH^1_\infty} \left((A \circ \tilde{g}_\theta)_\sharp \rho, (A \circ T^{\rm ld})_\sharp \rho\right) \nonumber \\
& \leq 2 \sup_{\norm{f}_{\rm Lip} \leq 1} \inf_{f_\omega \circ V^\top \in \cF_{\rm NN}^{\rm ld}} \norm{f \circ A - f_\omega \circ V^\top A}_\infty.
\end{align}
Note that we already used the fact that $\cH^1_\infty$ is a subset of $\cH^1$. Plugging \eqref{eq:lowd_discriminator_approx_error_hard} and \eqref{eq:lowd_discriminator_approx_error_easy} into \eqref{eq:lowd_3error} and taking infimum over $\tilde{g}_\theta$, we obtain the desired oracle inequality,
\begin{align*}
& \quad~ \dw ((U^* \circ g^*_\theta)_\sharp \rho, \mu) \\
& \leq \underbrace{\inf_{g : A \circ g \in \cG_{\rm NN}^{\rm ld}} \norm{A \circ g - A \circ T^{\rm ld}}_\infty}_{\rm generator~approximation~error} + \underbrace{\dw(\hat{\mu}_n, \mu) + d_{\cF_{\rm NN}^{\rm ld}}(\hat{\mu}_n, \mu)}_{\rm statistical~error} \\
& \quad + \underbrace{\sup_{\norm{f}_{\rm Lip} \leq 1} \inf_{f_\omega \circ V^\top \in \cF_{\rm NN}^{\rm ld}} \norm{f \circ U^* - f_\omega \circ V^\top U^*}_\infty + \norm{f \circ A - f_\omega \circ V^\top A}_\infty}_{\rm discriminator~approximation~error~(HARD)} \\
& \quad + \underbrace{2 \sup_{\norm{f}_{\rm Lip} \leq 1} \inf_{f_\omega \circ V^\top \in \cF_{\rm NN}^{\rm ld}} \norm{f \circ A - f_\omega \circ V^\top A}_\infty}_{\rm discriminator~approximation~error~(EASY)}.
\end{align*}
The proof is complete. 
\end{proof}
In the sequel, we bound error terms in \eqref{eq:oracle_lowd} respectively. The generator approximation error can be reduced to approximating $T^{\rm ld}$. By some manipulation on the intrinsic structures of data distribution, we expect that the statistical error scales with the subspace dimension $q$. The main difficulty stems from bounding the discriminator approximation error. A quick comparison to Lemma \ref{lemma:oracle} indicates that the (EASY) error term may be bounded similarly as in Theorem \ref{thm:main}. In contrast, the (HARD) error term involves simultaneously approximating the discriminative function projected into the column space of $U^*$ and $A$. In general, such an approximation error is hardly small unless $U^*, A$ share approximately the same column space. Fortunately, this is indeed the case as shown in Lemma \ref{lemma:subspace_match} so that the (HARD) error term can be controlled.

\noindent $\bullet$ {\bf Bounding generator approximation error}.
Suppose that we require the generator approximation error to be bounded by $\epsilon_1 > 0$, i.e., $$\inf_{g : U \circ g_\omega \in \cG_{\rm NN}^{\rm ld}} \norm{U \circ g_\omega - A \circ T^{\rm ld}}_\infty \leq \epsilon_1.$$
It suffices to choose a proper generator architecture $\cG_{\rm NN}^{\rm ld}$ such that there exists $\tilde{g}_\omega$ satisfying $\norm{\tilde{g}_\omega - T^{\rm ld}}_\infty \leq \epsilon_1 / q$. To see this, we take $U = A$ and substitute $\tilde{g}_\omega$ into the generator approximation error,
\begin{align*}
\norm{A \circ \tilde{g}_\omega - A \circ T^{\rm ld}}_\infty & = \norm{\sum_{j=1}^q A_{:, j} [\tilde{g}_\omega - T^{\rm ld}]_j}_\infty \\
& \leq \sum_{j=1}^q \norm{A_{:, j}}_\infty \norm{\tilde{g}_\omega - T^{\rm ld}}_\infty \\
& \leq \epsilon_1,
\end{align*}
where the last inequality holds since $A$ has orthonormal columns. We can apply Lemma \ref{lemma:approx} and \ref{lemma:generator} for choosing proper network configuration of $\cG_{\rm NN}^{\rm ld}$ to ensure the existence of $\tilde{g}_\omega$. We recall that $T^{\rm ld}$ is a $\cH^{\alpha+1}$ continuous mapping in $\RR^q$ by Lemma \ref{lemma:monge}. Therefore, the resulting network architecture has the following configuration
\begin{equation}\label{eq:generator_ld_config}
\begin{split}
& \qquad R = B, \quad \kappa = O(1), \quad L = O\left(\log \frac{1}{\epsilon_1}\right), \\
& p = O\left(q \epsilon_1^{-\frac{q}{\alpha+1}}\right), \quad K = O\left(dq + \epsilon_1^{-\frac{q}{\alpha+1}} \log \frac{1}{\epsilon_1} \right).
\end{split}
\end{equation}
We will choose $\epsilon_1$ later in the last step of the proof to balance all the error terms.

\noindent $\bullet$ {\bf Bounding discriminator approximation error}. We first consider the (EASY) error term. Suppose that we require the (EASY) discriminator approximation error to be bounded by $\epsilon_2 > 0$. We check that once $f : \RR^d \mapsto \RR$ is $1$-Lipschitz and $A$ has orthonormal columns, then $f \circ A : \RR^q \mapsto \RR$ is also $1$-Lipschitz. To see this, for any $x, y \in \RR^d$, we have
\begin{align*}
\left\lvert f(A x) - f(A y) \right\rvert \leq \norm{Ax - A y}_2 \leq \norm{A}_2 \norm{x - y}_2 = \norm{x - y}_2.
\end{align*}
By taking $V = A$ in the (EASY) term, it suffices to ensure that $f_\omega$ can approximate any $1$-Lipschitz function in a compact subset of $[0, 1]^q$. Due to the additional $\bar{\gamma}$-Lipschitz continuity constraint in \eqref{eq:discriminator_lowd}, we need a stronger universal approximation theory of the discriminator. The following lemma shows that ReLU neural networks can accurately approximating $1$-Lipschitz functions in $L_\infty$-norm, while the Lipschitz continuity of the network remains independent of the approximation error.
\begin{lemma}\label{lemma:approx_lip}
For any $\epsilon_2 \in (0, 1)$, there exists a ReLU network architecture ${\rm NN}(\bar{R}, \bar{\kappa}, \bar{L}, \bar{p}, \bar{J})$, such that for any target $1$-Lipschitz function $f$ defined on $[0, 1]^q$ with $f(0) = 0$, the architecture yields an approximation $\hat{f}$ satisfying $\lVert f - \hat{f} \rVert_\infty \leq \epsilon_2$. Moreover, the Lipschitz continuity of $\hat{f}$ is bounded by
\begin{align*}
\left\lvert \hat{f}(x) - \hat{f}(y) \right\rvert \leq 10q \norm{x - y}_\infty \quad \text{for any} \quad x, y \in [0, 1]^q.
\end{align*}
The configuration of network architecture is
\begin{align*}
& \bar{R} = \sqrt{q}, \quad \bar{\kappa} = O(1), \quad \bar{L} = O\left(\log 1/\epsilon_2 + q \right), \\
& \hspace{0.12in} \bar{p} = O\left(\epsilon_2^{-q} \right), \quad \bar{J} = O\left(\epsilon_2^{-q} (\log 1/\epsilon_2 + q)\right).
\end{align*}
\end{lemma}
The proof is defered to Appendix \ref{pf:approx_lip}. Lemma \ref{lemma:approx_lip} improves the approximation guarantee in Lemma \ref{lemma:approx} with the additional Lipschitz continuity characterization, while the newtork size shares the same order of magnitude when specializing Lemma \ref{lemma:approx} to $d = q$ and $\beta = 1$. We take $\cF_{\rm NN}^{\rm ld}(\bar{R}, \bar{\kappa}, \bar{L}, \bar{p}, \bar{J}, \bar{\gamma})$ with $\bar{\gamma} = 10q$ and all the other parameters the same as in Lemma \ref{lemma:approx_lip}. Since the (EASY) error term is invariant with respect to translations on $f$, we can always assume $f(0) = 0$ without loss of generality. It then holds
\begin{align*}
{\rm (EASY)~Error~Term} \leq 2\epsilon_2.
\end{align*}

We next bound the (HARD) term. Recall that we need the column spaces of $U^*$ and $A$ to be approximately identical for controlling this error. Thanks to the choice of both the generator and discriminator class, we can show that the column spans of $U^*$ and $A$ match up to some error.
\begin{lemma}\label{lemma:subspace_match}
Given $\epsilon_1, \epsilon_2 \in (0, 1)$. Suppose Assumption \ref{assumption:lowddensity} and \ref{assumption:lowdmanifold} hold. Let the generator $\cG_{\rm NN}^{\rm ld}$ be chosen as \eqref{eq:generator_ld_config} and discriminator $\cF_{\rm NN}^{\rm ld}$ be chosen as in Lemma \ref{lemma:approx_lip} with $\bar{\gamma} = 10q$. For the global optimizer $(U^*, g_\theta^*)$, it holds
\begin{align}
& \norm{U^* - A}_{\rm F}^2 \nonumber \\
& \qquad \leq 4q \left(1 + 4\sqrt{q} \left(\min_i \EE_{z \sim \rho}\left[T^{\rm ld}_i(z)\right]\right)^{-1} \EE_{z \sim \rho} \left[\norm{g_\theta^*(z)}_2\right]\right)^2 \nonumber \\
& \hspace{0.6in} \cdot \left(\min_i \EE_{z \sim \rho} \left[T^{\rm ld}_i(z)\right]\right)^{-2} \epsilon^2, \nonumber
\end{align}
where $\epsilon = 10q \epsilon_1 + 3\epsilon_2$.
\end{lemma}
The full proof is deferred to Appendix \ref{pf:subspace_match}. We remark that $\EE_{z \sim \rho}[T_i^{\rm ld}(z)]$ is always lower bounded by a positive constant $\tau$ for any $i = 1, \dots, q$, since its density is positive on the support by Assumption \ref{assumption:lowddensity}. To establish Lemma \ref{lemma:subspace_match}, we leverage the optimality of $U^*,g_\theta^*$ and the corresponding discriminator network. We show by contraction that if the column spaces of $U^*$ and $A$ do not match closely, there exists a discriminator network capable of distinguishing the generated distribution and data distribution. 

Given Lemma \ref{lemma:subspace_match}, we are ready to derive an upper bound for the (HARD) discriminator approximation error term.
\begin{align}\label{eq:discriminator_hard}
& \quad~ \sup_{f \in {\rm Lip}_1(\RR^d)} \inf_{f_\omega \circ V^\top \in \cF_{\rm NN}^{\rm ld}} \norm{f \circ U^* - f_\omega \circ V^\top U^*}_\infty + \norm{f \circ A - f_\omega \circ V^\top A}_\infty \nonumber \\
& \overset{(i)}{\leq} \sup_{f \in {\rm Lip}_1(\RR^d)} \inf_{f_\omega \circ A^\top \in \cF_{\rm NN}^{\rm ld}} \norm{f \circ U^* - f_\omega \circ A^\top U^*}_\infty + \norm{f \circ A - f_\omega}_\infty \nonumber \\
& \overset{(ii)}{\leq} \sup_{f \in {\rm Lip}_1(\RR^d)} \inf_{f_\omega \circ A^\top \in \cF_{\rm NN}^{\rm ld}} \norm{f \circ U^* - f \circ A}_\infty + \norm{f_\omega - f_\omega \circ A^\top U^*)}_\infty \nonumber \\
& \hspace{1.5in} + 2 \norm{f \circ A - f_\omega}_\infty,
\end{align}
where $(i)$ is obtained by taking $V = A$, and inequality $(ii)$ is obtained by the triangle inequality
\begin{align*}
\norm{f \circ U^* - f_\omega \circ A^\top U^*}_\infty & \leq \norm{f \circ U^* - f \circ A}_\infty + \norm{f \circ A - f_\omega}_\infty \\
& \quad + \norm{f_\omega - f_\omega \circ A^\top U^*}_\infty.
\end{align*}
The first term on the right-hand side of \eqref{eq:discriminator_hard} can be bounded using the Lipschitz continuity of $f$, i.e.,
\begin{align*}
& \quad \norm{f \circ U^* - f \circ A}_\infty \\
& \leq \sup_{x \in [0, 1]^q} \norm{U^* - A^*}_{2} \norm{x}_2 \\
& \leq 2 q \big(1 + 4\sqrt{q} \left(\min_i \EE_{z \sim \rho}\left[T^{\rm ld}_i(z)\right]\right)^{-1} \EE_{z \sim \rho} \left[\norm{g_\theta^*(z)}_2\right]\big)\max_i \EE_{z \sim \rho}^{-1} \left[T^{\rm ld}_i(z)\right] \epsilon.
\end{align*}
A similar argument applies to
\begin{align*}
& \quad \norm{f_\omega - f_\omega \circ A^\top U^*}_\infty \\
& \leq \sup_{x \in [0, 1]^q} 10q \norm{I - A^\top U^*}_{2} \norm{x}_2 \\
& \leq \sup_{x \in [0, 1]^q} 10q^{3/2} \norm{A - U^*}_2 \\
& \leq 20 q^2 \big(1 + 4\sqrt{q} \left(\min_i \EE_{z \sim \rho}\left[T^{\rm ld}_i(z)\right]\right)^{-1} \EE_{z \sim \rho} \left[\norm{g_\theta^*(z)}_2\right]\big)\max_i \EE_{z \sim \rho}^{-1} \left[T^{\rm ld}_i(z)\right] \epsilon.
\end{align*}
The last term in the right-hand side of \eqref{eq:discriminator_hard} is the discriminator approximation error, which is bounded by $\epsilon_2$. As a result, the (HARD) error term is upper bounded by
\begin{align*}
& {\rm (HARD)~Error~Term}  \\
& ~ \leq 2 q \big(1 + 4\sqrt{q} \left(\min_i \EE_{z \sim \rho}\left[T^{\rm ld}_i(z)\right]\right)^{-1} \EE_{z \sim \rho} \left[\norm{g_\theta^*(z)}_2\right]\big)\max_i \EE_{z \sim \rho}^{-1} \left[T^{\rm ld}_i(z)\right] \epsilon \\
& ~ + 20 q^2 \big(1 + 4\sqrt{q} \left(\min_i \EE_{z \sim \rho}\left[T^{\rm ld}_i(z)\right]\right)^{-1} \EE_{z \sim \rho} \left[\norm{g_\theta^*(z)}_2\right]\big)\max_i \EE_{z \sim \rho}^{-1} \left[T^{\rm ld}_i(z)\right] \epsilon \\
& ~ + 2 \epsilon_2 \\
& = O(\epsilon_2 + q^3 \epsilon),
\end{align*}
where the last step is obtained by $\norm{g_\theta^*(z)}_2 \leq \sqrt{q}$ due to $g_\theta^*(z) \in [0, 1]^q$.

\noindent $\bullet$ {\bf Bounding statistical error}.
Similar to the statistical error in Lemma \ref{lemma:oracle}, we can bound it via finite-sample concentration. Yet we can pursue a faster convergence rate here by rewriting the data distribution as a pushforward of a low-dimensional distribution.
\begin{lemma}\label{lemma:staterror_lowd}
Suppose Assumption \ref{assumption:lowdmanifold} and \ref{assumption:lowddensity} hold. Statistical error terms in Lemma \ref{lemma:oracle_lowd} are bounded by
\begin{align}
\dw(\hat{\mu}_n, \mu) & = O\left(n^{-1/q} \log n\right), \nonumber \\
d_{\cF_{\rm NN}^{\rm ld}}(\hat{\mu}_n, \mu) & = O\left(\frac{1}{n} + \frac{1}{\sqrt{n}} \sqrt{\bar{J}\bar{L} \log (\bar{L} \bar{p} \bar{\kappa} n)} \right). \nonumber
\end{align}
\end{lemma}
From Lemma \ref{lemma:staterror_lowd}, we observe that the statistical error $\dw(\hat{\mu}_n, \mu)$ only depends on dimension $q$. To make sense the result, we rewrite the data distribution $\mu = A_\sharp (A^\top_\sharp \mu)$. In this way, we can translate the concentration of $\hat{\mu}_n$ to $\mu$ in $\RR^D$ into a counterpart in $\RR^q$. Recall that $A^\top_\sharp \mu$ is a distribution with a $\cH^\alpha(\RR^q)$ density by Assumption \ref{assumption:lowddensity}. Threfore, we can apply Lemma \ref{lemma:stat} to complete the proof. See detailed arguments in Appendix \ref{pf:staterror_lowd}.

\noindent $\bullet$ {\bf Balancing approximation error and statistical error}.
We collect all the error terms in the oracle inequality of Lemma \ref{lemma:oracle_lowd} and choose optimal scalings on $\epsilon_1$ and $\epsilon_2$. We list all the error upper bounds in the following for a quick reference.
\begin{enumerate}
\item Generator approximation error $O(\epsilon_1)$.
\item Statistical error $O\left(n^{-1/q}\log n + \frac{1}{\sqrt{n}} \sqrt{\bar{J}\bar{L} \log (\bar{L} \bar{p} \bar{\kappa} n)}\right)$.
\item (EASY) discriminator approximation error $O(\epsilon_2)$.
\item (HARD) discriminator approximation error $O(\epsilon_2 + q^3 \epsilon)$.
\end{enumerate}
Summing up four error bounds above yields
\begin{align*}
\dw((U^* \circ g_\theta^*)_\sharp \rho, \mu) = O \left(\epsilon_1 + \epsilon_2 + n^{-1/q}\log n + q^3 \epsilon + \frac{1}{\sqrt{n}} \sqrt{\bar{J}\bar{L} \log (\bar{L} \bar{p} \bar{\kappa} n)}\right).
\end{align*}
Substituting the configuration of $\cF_{\rm NN}^{\rm ld}$ in Lemma \ref{lemma:approx_lip} into the last display, we set $\epsilon_1 = \epsilon_2 = n^{-\frac{1}{2+q}}$. By collecting terms, we derive
\begin{align*}
\dw\left((U^* \circ g_\theta^*)_\sharp \rho, \mu\right) = \tilde{O} \left(n^{-\frac{1}{2+q}} \log^2 n \right).
\end{align*}
The corresponding configurations of generator $\cG_{\rm NN}^{\rm ld}$ and discriminator $\cF_{\rm NN}^{\rm ld}$ is obtained by substituting $\epsilon_1$ and $\epsilon_2$ in \eqref{eq:generator_ld_config} and Lemma \ref{lemma:approx_lip}, respectively.

\section{Conclusion and discussion}\label{sec:discuss}
We establish statistical convergence of distribution estimation using GANs. Specifically, with proper generator and discriminator network architecture, we show GANs are consistent estimator of data distribution in terms of the Wasserstein distance. Moreover, when data have intrinsic low-dimensional linear structures, we show GANs can capture the unknown linear structure and enjoy a faster statistical rate of estimation, which is free of the curse of dimensionality. Compared to existing works, our theory exploits the pushforward structure of GANs and network architectures are explicitly given without invertibility constraints.

In the sequel, we discuss several related topics and future directions.

\noindent {\bf Distribution estimation on manifold}. Low-dimensional manifolds are sensible tools to model data geometric structures. A manifold can be characterized by local neighborhoods (charts), which generalizes the mixture model in Section \ref{sec:mixture}. Our analysis does not cover manifold data due to the difficulty to accurately approximate target distributions using generator networks. Specifically, different from the mixture model with a pre-fixed partition of components, we often don't have specific information on how to properly construct charts on the manifold. It is rare the case that an artificially constructed collection of charts can preserve the regularity of target data distributions. This makes choosing a proper generator network difficult.

\noindent {\bf Convolutional filters and residual connections}. Convolutional filters \cite{krizhevsky2012imagenet} are widely used in GANs for image generating and processing. Empirical results show that convolutional filters can learn hidden representations aligned with various patterns in images \cite{zeiler2014visualizing, zhou2018interpreting}, e.g., textures and skeletons. An interesting question is to understand how convolutional filters capture the aforementioned low-dimensional structures in data sets.

\noindent {\bf Smoothness of data distributions and regularized distribution Estimation}. Theorem \ref{thm:main} indicates a convergence rate independent of the smoothness of the data distribution. The reason behind is that the empirical data distribution $\hat{\mu}_n$ cannot inherit the same smoothness as the underlying data distribution. This limitation exists in all previous works \cite{liang2017well, singh2018nonparametric,uppal2019nonparametric}.
It is interesting to investigate whether GANs can achieve a faster convergence rate (e.g., attain the minimax optimal rate).

From a theoretical perspective, \cite{liang2018well} suggested first obtaining a smooth kernel estimator from $\tilde{\mu}_n$, and then replacing $\hat{\mu}_n$ by $\tilde{\mu}_n$ to train GANs. In practice, kernel smoothing is hardly used in GANs. Instead, regularization (e.g., entropy regularization) and normalization (e.g., spectral normalization and batch-normalization) are widely applied as implicit regularizers to promote the smoothness of the learned distribution. Several empirical studies of GANs suggest that divergence-based and mutual information-based regularization can stabilize the training and improve the performance \cite{che2016mode, cao2018improving} of GANs. We leave the studies on statistical properties of regularized GANs for future investigation.

\noindent {\bf Computational concerns}. Our statistical guarantees hold for the global optimizer of \eqref{eq:empiricalgan}, whereas solving \eqref{eq:empiricalgan} is often difficult. In practice, it is observed that larger neural networks are easier to train and yield better statistical performance \cite{zhang2016understanding, jacot2018neural, du2018gradient_b, allen2018convergence, du2018gradient_a, li2018learning, arora2019fine, allen2019learning, du2019width}. This is referred to as overparameterization. Establishing a connection between computation and statistical properties of GANs is an important direction.

\bibliographystyle{ieeetr}
\bibliography{sn-bibliography}

\begin{thebibliography}{10}

\bibitem{goodfellow2014generative}
I.~Goodfellow, J.~Pouget-Abadie, M.~Mirza, B.~Xu, D.~Warde-Farley, S.~Ozair,
  A.~Courville, and Y.~Bengio, ``Generative adversarial nets,'' in {\em
  Advances in neural information processing systems}, pp.~2672--2680, 2014.

\bibitem{reed2016generative}
S.~Reed, Z.~Akata, X.~Yan, L.~Logeswaran, B.~Schiele, and H.~Lee, ``Generative
  adversarial text to image synthesis,'' {\em arXiv preprint arXiv:1605.05396},
  2016.

\bibitem{ledig2017photo}
C.~Ledig, L.~Theis, F.~Husz{\'a}r, J.~Caballero, A.~Cunningham, A.~Acosta,
  A.~Aitken, A.~Tejani, J.~Totz, Z.~Wang, {\em et~al.}, ``Photo-realistic
  single image super-resolution using a generative adversarial network,'' in
  {\em Proceedings of the IEEE conference on computer vision and pattern
  recognition}, pp.~4681--4690, 2017.

\bibitem{schawinski2017generative}
K.~Schawinski, C.~Zhang, H.~Zhang, L.~Fowler, and G.~K. Santhanam, ``Generative
  adversarial networks recover features in astrophysical images of galaxies
  beyond the deconvolution limit,'' {\em Monthly Notices of the Royal
  Astronomical Society: Letters}, vol.~467, no.~1, pp.~L110--L114, 2017.

\bibitem{brock2018large}
A.~Brock, J.~Donahue, and K.~Simonyan, ``Large scale gan training for high
  fidelity natural image synthesis,'' {\em arXiv preprint arXiv:1809.11096},
  2018.

\bibitem{volz2018evolving}
V.~Volz, J.~Schrum, J.~Liu, S.~M. Lucas, A.~Smith, and S.~Risi, ``Evolving
  mario levels in the latent space of a deep convolutional generative
  adversarial network,'' in {\em Proceedings of the Genetic and Evolutionary
  Computation Conference}, pp.~221--228, 2018.

\bibitem{radford2015unsupervised}
A.~Radford, L.~Metz, and S.~Chintala, ``Unsupervised representation learning
  with deep convolutional generative adversarial networks,'' {\em arXiv
  preprint arXiv:1511.06434}, 2015.

\bibitem{salimans2016improved}
T.~Salimans, I.~Goodfellow, W.~Zaremba, V.~Cheung, A.~Radford, and X.~Chen,
  ``Improved techniques for training gans,'' in {\em Advances in neural
  information processing systems}, pp.~2234--2242, 2016.

\bibitem{muller1997integral}
A.~M{\"u}ller, ``Integral probability metrics and their generating classes of
  functions,'' {\em Advances in Applied Probability}, vol.~29, no.~2,
  pp.~429--443, 1997.

\bibitem{arora2017generalization}
S.~Arora, R.~Ge, Y.~Liang, T.~Ma, and Y.~Zhang, ``Generalization and
  equilibrium in generative adversarial nets (gans),'' {\em arXiv preprint
  arXiv:1703.00573}, 2017.

\bibitem{nair2010rectified}
V.~Nair and G.~E. Hinton, ``Rectified linear units improve restricted boltzmann
  machines,'' in {\em Proceedings of the 27th international conference on
  machine learning (ICML-10)}, pp.~807--814, 2010.

\bibitem{glorot2011deep}
X.~Glorot, A.~Bordes, and Y.~Bengio, ``Deep sparse rectifier neural networks,''
  in {\em Proceedings of the fourteenth international conference on artificial
  intelligence and statistics}, pp.~315--323, 2011.

\bibitem{maas2013rectifier}
A.~L. Maas, A.~Y. Hannun, and A.~Y. Ng, ``Rectifier nonlinearities improve
  neural network acoustic models,'' in {\em ICML Workshop on Deep Learning for
  Audio, Speech, and Language Processing}, 2013.

\bibitem{Goodfellow-et-al-2016}
I.~Goodfellow, Y.~Bengio, and A.~Courville, {\em Deep Learning}.
\newblock Cambridge, MA, USA: MIT Press, 2016.

\bibitem{bai2018approximability}
Y.~Bai, T.~Ma, and A.~Risteski, ``Approximability of discriminators implies
  diversity in gans,'' {\em arXiv preprint arXiv:1806.10586}, 2018.

\bibitem{liang2018well}
T.~Liang, ``On how well generative adversarial networks learn densities:
  Nonparametric and parametric results,'' {\em arXiv preprint
  arXiv:1811.03179}, 2018.

\bibitem{schreuder2021statistical}
N.~Schreuder, V.-E. Brunel, and A.~Dalalyan, ``Statistical guarantees for
  generative models without domination,'' in {\em Algorithmic Learning Theory},
  pp.~1051--1071, PMLR, 2021.

\bibitem{Block2021ANE}
A.~Block, Z.~Jia, Y.~Polyanskiy, and A.~Rakhlin, ``Intrinsic dimension
  estimation,'' {\em arXiv preprint arXiv:2106.04018}, 2021.

\bibitem{villani2008optimal}
C.~Villani, {\em Optimal transport: old and new}, vol.~338.
\newblock New York, NY, USA: Springer Science \& Business Media, 2008.

\bibitem{lee2002unsupervised}
T.-W. Lee and M.~S. Lewicki, ``Unsupervised image classification, segmentation,
  and enhancement using ica mixture models,'' {\em IEEE Transactions on Image
  Processing}, vol.~11, no.~3, pp.~270--279, 2002.

\bibitem{chen2010predictive}
N.~Chen, J.~Zhu, and E.~Xing, ``Predictive subspace learning for multi-view
  data: a large margin approach,'' {\em Advances in neural information
  processing systems}, vol.~23, 2010.

\bibitem{fang2017robust}
X.~Fang, S.~Teng, Z.~Lai, Z.~He, S.~Xie, and W.~K. Wong, ``Robust latent
  subspace learning for image classification,'' {\em IEEE transactions on
  neural networks and learning systems}, vol.~29, no.~6, pp.~2502--2515, 2017.

\bibitem{caron2018deep}
M.~Caron, P.~Bojanowski, A.~Joulin, and M.~Douze, ``Deep clustering for
  unsupervised learning of visual features,'' in {\em Proceedings of the
  European conference on computer vision (ECCV)}, pp.~132--149, 2018.

\bibitem{santambrogio2010models}
F.~Santambrogio, ``Models and applications of optimal transport in economics,
  traffic and urban planning,'' {\em arXiv preprint arXiv:1009.3857}, 2010.

\bibitem{galichon2017survey}
A.~Galichon, ``A survey of some recent applications of optimal transport
  methods to econometrics,'' 2017.

\bibitem{ganin2014unsupervised}
Y.~Ganin and V.~Lempitsky, ``Unsupervised domain adaptation by
  backpropagation,'' {\em arXiv preprint arXiv:1409.7495}, 2014.

\bibitem{courty2016optimal}
N.~Courty, R.~Flamary, D.~Tuia, and A.~Rakotomamonjy, ``Optimal transport for
  domain adaptation,'' {\em IEEE transactions on pattern analysis and machine
  intelligence}, vol.~39, no.~9, pp.~1853--1865, 2016.

\bibitem{monge1784memoire}
G.~Monge, {\em M{\'e}moire sur le calcul int{\'e}gral des {\'e}quations aux
  diff{\'e}rences partielles}.
\newblock Paris, France: Imprimerie royale, 1784.

\bibitem{caffarelli1992regularity}
L.~A. Caffarelli, ``The regularity of mappings with a convex potential,'' {\em
  Journal of the American Mathematical Society}, vol.~5, no.~1, pp.~99--104,
  1992.

\bibitem{caffarelli1992boundary}
L.~A. Caffarelli, ``Boundary regularity of maps with convex potentials,'' {\em
  Communications on pure and applied mathematics}, vol.~45, no.~9,
  pp.~1141--1151, 1992.

\bibitem{caffarelli1996boundary}
L.~A. Caffarelli, ``Boundary regularity of maps with convex potentials--ii,''
  {\em Annals of mathematics}, pp.~453--496, 1996.

\bibitem{urbas1988regularity}
J.~I. Urbas, ``Regularity of generalized solutions of monge-ampere equations,''
  {\em Mathematische Zeitschrift}, vol.~197, no.~3, pp.~365--393, 1988.

\bibitem{urbas1997second}
J.~Urbas, ``On the second boundary value problem for equations of monge-ampere
  type,'' {\em Journal fur die Reine und Angewandte Mathematik}, vol.~487,
  pp.~115--124, 1997.

\bibitem{cybenko1989approximation}
G.~Cybenko, ``Approximation by superpositions of a sigmoidal function,'' {\em
  Mathematics of control, signals and systems}, vol.~2, no.~4, pp.~303--314,
  1989.

\bibitem{hornik1991approximation}
K.~Hornik, ``Approximation capabilities of multilayer feedforward networks,''
  {\em Neural networks}, vol.~4, no.~2, pp.~251--257, 1991.

\bibitem{chui1992approximation}
C.~K. Chui and X.~Li, ``Approximation by ridge functions and neural networks
  with one hidden layer,'' {\em Journal of Approximation Theory}, vol.~70,
  no.~2, pp.~131--141, 1992.

\bibitem{barron1993universal}
A.~R. Barron, ``Universal approximation bounds for superpositions of a
  sigmoidal function,'' {\em IEEE Transactions on Information theory}, vol.~39,
  no.~3, pp.~930--945, 1993.

\bibitem{mhaskar1996neural}
H.~N. Mhaskar, ``Neural networks for optimal approximation of smooth and
  analytic functions,'' {\em Neural computation}, vol.~8, no.~1, pp.~164--177,
  1996.

\bibitem{yarotsky2017error}
D.~Yarotsky, ``Error bounds for approximations with deep relu networks,'' {\em
  Neural Networks}, vol.~94, pp.~103--114, 2017.

\bibitem{wasserman2006all}
L.~Wasserman, {\em All of nonparametric statistics}.
\newblock New York, NY, USA: Springer Science \& Business Media, 2006.

\bibitem{tsybakov2008introduction}
A.~B. Tsybakov, {\em Introduction to nonparametric estimation}.
\newblock New York, NY, USA: Springer Science \& Business Media, 2008.

\bibitem{moser1965volume}
J.~Moser, ``On the volume elements on a manifold,'' {\em Transactions of the
  American Mathematical Society}, vol.~120, no.~2, pp.~286--294, 1965.

\bibitem{tang2022minimax}
R.~Tang and Y.~Yang, ``Minimax rate of distribution estimation on unknown
  submanifold under adversarial losses,'' {\em arXiv preprint
  arXiv:2202.09030}, 2022.

\bibitem{tenenbaum2000global}
J.~B. Tenenbaum, V.~De~Silva, and J.~C. Langford, ``A global geometric
  framework for nonlinear dimensionality reduction,'' {\em Science}, vol.~290,
  no.~5500, pp.~2319--2323, 2000.

\bibitem{roweis2000nonlinear}
S.~T. Roweis and L.~K. Saul, ``Nonlinear dimensionality reduction by locally
  linear embedding,'' {\em science}, vol.~290, no.~5500, pp.~2323--2326, 2000.

\bibitem{pope2021intrinsic}
P.~Pope, C.~Zhu, A.~Abdelkader, M.~Goldblum, and T.~Goldstein, ``The intrinsic
  dimension of images and its impact on learning,'' {\em arXiv preprint
  arXiv:2104.08894}, 2021.

\bibitem{virmaux2018lipschitz}
A.~Virmaux and K.~Scaman, ``Lipschitz regularity of deep neural networks:
  analysis and efficient estimation,'' {\em Advances in Neural Information
  Processing Systems}, vol.~31, 2018.

\bibitem{pauli2021training}
P.~Pauli, A.~Koch, J.~Berberich, P.~Kohler, and F.~Allg{\"o}wer, ``Training
  robust neural networks using lipschitz bounds,'' {\em IEEE Control Systems
  Letters}, vol.~6, pp.~121--126, 2021.

\bibitem{gouk2021regularisation}
H.~Gouk, E.~Frank, B.~Pfahringer, and M.~J. Cree, ``Regularisation of neural
  networks by enforcing lipschitz continuity,'' {\em Machine Learning},
  vol.~110, no.~2, pp.~393--416, 2021.

\bibitem{wang2020two}
J.~Wang, R.~Gao, and Y.~Xie, ``Two-sample test using projected wasserstein
  distance: Breaking the curse of dimensionality,'' {\em arXiv preprint
  arXiv:2010.11970}, 2020.

\bibitem{wang2022manifold}
J.~Wang, M.~Chen, T.~Zhao, W.~Liao, and Y.~Xie, ``A manifold two-sample test
  study: Integral probability metric with neural networks,'' {\em arXiv
  preprint arXiv:2205.02043}, 2022.

\bibitem{lu2017expressive}
Z.~Lu, H.~Pu, F.~Wang, Z.~Hu, and L.~Wang, ``The expressive power of neural
  networks: A view from the width,'' in {\em Advances in neural information
  processing systems}, pp.~6231--6239, 2017.

\bibitem{park2020minimum}
S.~Park, C.~Yun, J.~Lee, and J.~Shin, ``Minimum width for universal
  approximation,'' {\em arXiv preprint arXiv:2006.08859}, 2020.

\bibitem{lu2020universal}
Y.~Lu and J.~Lu, ``A universal approximation theorem of deep neural networks
  for expressing probability distributions,'' {\em Advances in neural
  information processing systems}, vol.~33, pp.~3094--3105, 2020.

\bibitem{huang2022error}
J.~Huang, Y.~Jiao, Z.~Li, S.~Liu, Y.~Wang, and Y.~Yang, ``An error analysis of
  generative adversarial networks for learning distributions,'' {\em Journal of
  Machine Learning Research}, vol.~23, no.~116, pp.~1--43, 2022.

\bibitem{zhang2017discrimination}
P.~Zhang, Q.~Liu, D.~Zhou, T.~Xu, and X.~He, ``On the
  discrimination-generalization tradeoff in gans,'' {\em arXiv preprint
  arXiv:1711.02771}, 2017.

\bibitem{jiang2018computation}
H.~Jiang, Z.~Chen, M.~Chen, F.~Liu, D.~Wang, and T.~Zhao, ``On computation and
  generalization of gans with spectrum control,'' {\em arXiv preprint
  arXiv:1812.10912}, 2018.

\bibitem{liang2017well}
T.~Liang, ``How well can generative adversarial networks learn densities: A
  nonparametric view,'' {\em arXiv preprint arXiv:1712.08244}, 2017.

\bibitem{luise2020generalization}
G.~Luise, M.~Pontil, and C.~Ciliberto, ``Generalization properties of optimal
  transport gans with latent distribution learning,'' {\em arXiv preprint
  arXiv:2007.14641}, 2020.

\bibitem{bartlett2017spectrally}
P.~L. Bartlett, D.~J. Foster, and M.~J. Telgarsky, ``Spectrally-normalized
  margin bounds for neural networks,'' in {\em Advances in Neural Information
  Processing Systems}, pp.~6240--6249, 2017.

\bibitem{chae2021likelihood}
M.~Chae, D.~Kim, Y.~Kim, and L.~Lin, ``A likelihood approach to nonparametric
  estimation of a singular distribution using deep generative models,'' {\em
  arXiv preprint arXiv:2105.04046}, 2021.

\bibitem{singh2018nonparametric}
S.~Singh, A.~Uppal, B.~Li, C.-L. Li, M.~Zaheer, and B.~P{\'o}czos,
  ``Nonparametric density estimation under adversarial losses,'' in {\em
  Advances in Neural Information Processing Systems}, pp.~10225--10236, 2018.

\bibitem{uppal2019nonparametric}
A.~Uppal, S.~Singh, and B.~P{\'o}czos, ``Nonparametric density estimation \&
  convergence of gans under besov ipm losses,'' {\em arXiv preprint
  arXiv:1902.03511}, 2019.

\bibitem{van1996weak}
A.~W. Van Der~Vaart and J.~A. Wellner, ``Weak convergence,'' in {\em Weak
  convergence and empirical processes}, pp.~16--28, New York, NY, USA:
  Springer, 1996.

\bibitem{gyorfi2006distribution}
L.~Gy{\"o}rfi, M.~Kohler, A.~Krzyzak, and H.~Walk, {\em A distribution-free
  theory of nonparametric regression}.
\newblock New York, NY, USA: Springer Science \& Business Media, 2006.

\bibitem{dudley1967sizes}
R.~M. Dudley, ``The sizes of compact subsets of hilbert space and continuity of
  gaussian processes,'' {\em Journal of Functional Analysis}, vol.~1, no.~3,
  pp.~290--330, 1967.

\bibitem{mohri2018foundations}
M.~Mohri, A.~Rostamizadeh, and A.~Talwalkar, {\em Foundations of machine
  learning}.
\newblock Cambridge, MA, USA: MIT press, 2018.

\bibitem{nickl2007bracketing}
R.~Nickl and B.~M. P{\"o}tscher, ``Bracketing metric entropy rates and
  empirical central limit theorems for function classes of besov-and
  sobolev-type,'' {\em Journal of Theoretical Probability}, vol.~20, no.~2,
  pp.~177--199, 2007.

\bibitem{krizhevsky2012imagenet}
A.~Krizhevsky, I.~Sutskever, and G.~E. Hinton, ``Imagenet classification with
  deep convolutional neural networks,'' in {\em Advances in neural information
  processing systems}, pp.~1097--1105, 2012.

\bibitem{zeiler2014visualizing}
M.~D. Zeiler and R.~Fergus, ``Visualizing and understanding convolutional
  networks,'' in {\em European conference on computer vision}, pp.~818--833,
  Springer, 2014.

\bibitem{zhou2018interpreting}
B.~Zhou, D.~Bau, A.~Oliva, and A.~Torralba, ``Interpreting deep visual
  representations via network dissection,'' {\em IEEE transactions on pattern
  analysis and machine intelligence}, vol.~41, no.~9, pp.~2131--2145, 2018.

\bibitem{che2016mode}
T.~Che, Y.~Li, A.~P. Jacob, Y.~Bengio, and W.~Li, ``Mode regularized generative
  adversarial networks,'' {\em arXiv preprint arXiv:1612.02136}, 2016.

\bibitem{cao2018improving}
Y.~Cao, G.~W. Ding, K.~Y.-C. Lui, and R.~Huang, ``Improving gan training via
  binarized representation entropy (bre) regularization,'' {\em arXiv preprint
  arXiv:1805.03644}, 2018.

\bibitem{zhang2016understanding}
C.~Zhang, S.~Bengio, M.~Hardt, B.~Recht, and O.~Vinyals, ``Understanding deep
  learning requires rethinking generalization,'' {\em arXiv preprint
  arXiv:1611.03530}, 2016.

\bibitem{jacot2018neural}
A.~Jacot, F.~Gabriel, and C.~Hongler, ``Neural tangent kernel: Convergence and
  generalization in neural networks,'' in {\em Advances in neural information
  processing systems}, pp.~8571--8580, 2018.

\bibitem{du2018gradient_b}
S.~S. Du, J.~D. Lee, H.~Li, L.~Wang, and X.~Zhai, ``Gradient descent finds
  global minima of deep neural networks,'' {\em arXiv preprint
  arXiv:1811.03804}, 2018.

\bibitem{allen2018convergence}
Z.~Allen-Zhu, Y.~Li, and Z.~Song, ``A convergence theory for deep learning via
  over-parameterization,'' {\em arXiv preprint arXiv:1811.03962}, 2018.

\bibitem{du2018gradient_a}
S.~S. Du, X.~Zhai, B.~Poczos, and A.~Singh, ``Gradient descent provably
  optimizes over-parameterized neural networks,'' {\em arXiv preprint
  arXiv:1810.02054}, 2018.

\bibitem{li2018learning}
Y.~Li and Y.~Liang, ``Learning overparameterized neural networks via stochastic
  gradient descent on structured data,'' in {\em Advances in Neural Information
  Processing Systems}, pp.~8157--8166, 2018.

\bibitem{arora2019fine}
S.~Arora, S.~S. Du, W.~Hu, Z.~Li, and R.~Wang, ``Fine-grained analysis of
  optimization and generalization for overparameterized two-layer neural
  networks,'' {\em arXiv preprint arXiv:1901.08584}, 2019.

\bibitem{allen2019learning}
Z.~Allen-Zhu, Y.~Li, and Y.~Liang, ``Learning and generalization in
  overparameterized neural networks, going beyond two layers,'' in {\em
  Advances in neural information processing systems}, pp.~6155--6166, 2019.

\bibitem{du2019width}
S.~S. Du and W.~Hu, ``Width provably matters in optimization for deep linear
  neural networks,'' {\em arXiv preprint arXiv:1901.08572}, 2019.

\bibitem{chen2019efficient}
M.~Chen, H.~Jiang, W.~Liao, and T.~Zhao, ``Efficient approximation of deep relu
  networks for functions on low dimensional manifolds,'' in {\em Advances in
  Neural Information Processing Systems}, pp.~8172--8182, 2019.

\bibitem{weed2019sharp}
J.~Weed and F.~Bach, ``Sharp asymptotic and finite-sample rates of convergence
  of empirical measures in wasserstein distance,'' {\em Bernoulli}, vol.~25,
  no.~4A, pp.~2620--2648, 2019.

\bibitem{wainwright2019high}
M.~J. Wainwright, {\em High-dimensional statistics: A non-asymptotic
  viewpoint}, vol.~48.
\newblock Cambridge University Press, 2019.

\end{thebibliography}

\appendix
\section{Proof of Theorem \ref{thm:mixture}}\label{pf:mixture}
The proof is built upon the framework for establishing Theorem \ref{thm:main}. We restate the oracle inequality in Lemma \ref{lemma:oracle} (with $\beta = 1$):
\begin{align*}
\dw\left((g_\theta^*)_\sharp \rho, \mu \right) \leq \inf_{g_\theta \in \cG_{\rm NN}^{\rm mix}} d_{\cH^1_\infty}((g_\theta)_{\sharp}\rho, \mu) + 4 \sup_{\norm{f}_{\rm Lip} \leq 1} \inf_{f_\omega \in \cF_{\rm NN}^{\rm mix}} \norm{f - f_{\omega}}_\infty + \dw(\hat{\mu}_n, \mu) + d_{\cF_{\rm NN}^{\rm mix}}(\mu, \hat{\mu}_n).
\end{align*}
The remaining proof consists of two major parts: 1) bounding generator approximation error $\inf_{g_\theta \in \cG_{\rm NN}^{\rm mix}} d_{\cH^1_\infty}((g_\theta)_{\sharp}\rho, \mu)$; 2) fast convergence of $\EE[\dw(\hat{\mu}_n, \mu) + d_{\cF_{\rm NN}^{\rm mix}}(\mu, \hat{\mu}_n)]$. Note that the discriminator approximation error $\sup_{\norm{f}_{\rm Lip} \leq 1} \inf_{f_\omega \in \cF_{\rm NN}^{\rm mix}} \norm{f - f_{\omega}}_\infty$ is a direct consequence of Lemma \ref{lemma:approx_lip} (replacing $q$ by $d$).

\noindent $\bullet$ {\bf Bounding generator approximation error}. Given $\epsilon_1 > 0$, we constructively show the existence of a generator network architecture giving rise to an $\epsilon_1$ approximation of $\mu$.
\begin{lemma}[Restatement of Proposition \ref{prop:mixture_approximation}]\label{lemma:mixture_generator}
Suppose Assumption \ref{assumption:mixture} holds. Given any $\epsilon_1 > 0$, we choose network architecture ${\rm NN}(R, \kappa, L, p, J, d_{\rm in} = q+1, d_{\rm out} = d)$ with
\begin{align*}
R = 1, \kappa = \max\{C_\alpha, 1\}, L = O\left(\log \frac{1}{\epsilon_1} \right), p = O\left(K d \epsilon_1^{-\frac{q}{\alpha}}\right), J = O\left(K d \epsilon_1^{-\frac{q}{\alpha}} \log \frac{1}{\epsilon_1} \right).
\end{align*}
Then there exists $g_\theta \in \cG_{\rm NN}$ such that
\begin{align*}
d_{\cH^1_\infty}\left((g_\theta)_\sharp \rho, \mu\right) \leq \epsilon_1.
\end{align*}
\end{lemma}
\begin{proof}[Proof of Lemma \ref{lemma:mixture_generator}]
We adopt a two-step construction: 1) we use the first coordinate $z_1$ of $z \sim \rho$ to generate a latent variable approximately distributed like $\PP(z_1 = k) = p_k$ for $k = 1, \dots, K$; 2) we use the remaining $q$ coordinates of $\rho$ to generate data approximating each component in the mixture.

\noindent $\star$ {\it Generating latent variable}. Let $m_k = \sum_{i=1}^k p_k$ for $k =1, \dots, K$ and $m_0 = 0$. We define a trapezoid function on $[m_k, m_{k+1}]$ as
\begin{align*}
T_k(a) = \begin{cases}
1 & \text{for}~ a \in [m_{k-1} + \epsilon_1/(12K), m_{k} - \epsilon_1/(12K)], ~\text{if}~ p_{k} > \epsilon_1/(6K) \\
\frac{12K}{\epsilon_1} (a - m_{k-1}) & \text{for}~ a \in [m_{k-1}, m_{k-1} + \epsilon_1/(12K)], ~\text{if}~ p_{k} > \epsilon_1/(6K) \\
1 - \frac{12K}{\epsilon_1} (m_{k} - a) & \text{for}~ a \in [m_{k} - \epsilon_1/(12K), m_{k}], ~\text{if}~ p_{k} > \epsilon_1/(6K) \\
0 & \text{otherwise}
\end{cases}.
\end{align*}
See Figure \ref{fig:latent} for an illustration.
\begin{figure}[!htb]
\centering
\includegraphics[width = 0.45\textwidth]{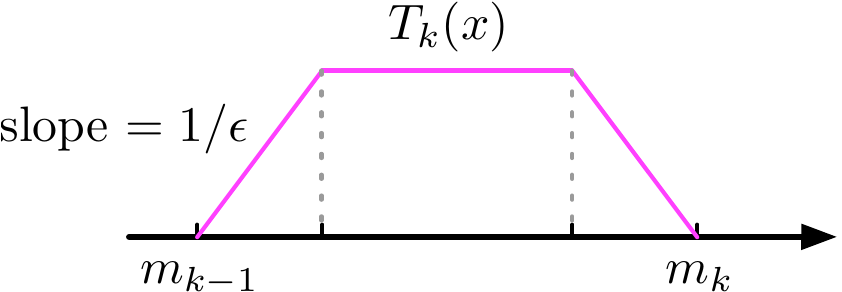}
\caption{Trapezoid function $T_k(x)$ on interval $[m_{k-1}, m_{k}]$ if $p_{k} > \epsilon_1/(6K)$.}
\label{fig:latent}
\end{figure}

Passing the first coordinate $z_1$ through $T_m$ yields an approximate binary random variable, with $\PP(T_k(z_1) = 0) = 1 - p_{k}$ and $\PP(T_k(z_1) = 1) = p_k - \epsilon_1/(6K)$.

\noindent $\star$ {\it Generating each component}. Since each component can be represented by $\mu_k = (g_k)_\sharp {\rm Unif}([0, 1]^q)$, it suffices to approximate $g_k$ using neural networks. Given the regularity of $g_k$ in Assumption \ref{assumption:mixture}, invoking Theorem \ref{thm:generator} (built upon Lemma \ref{lemma:approx}), we obtain that there exists a neural network ${\rm NN}_{g_k}(R, \kappa, L, p, J, d_{\rm in} = q, d_{\rm out} = d)$ that yields an approximation $\hat{g}_k$ of $g_k$ with $$\sup_{z \in [0, 1]^q} \norm{\hat{g}_k(z) - g_k(z)}_\infty \leq \epsilon_1/(3K).$$ Such a network has
\begin{align}\label{eq:gk_NN_size}
R = 1, \ \kappa = \max\{C_\alpha, 1\}, \ L = O\left(\log \frac{1}{\epsilon_1}\right), \ p = O\left(d \epsilon_1^{-q/\alpha_k}\right), \ J = O\left(d \epsilon_1^{-q/\alpha_k} \log \frac{1}{\epsilon_1}\right).
\end{align}

\noindent $\star$ {\it Putting together}. For any $z \sim {\rm Unif}([0, 1]^{q+1})$, we obtain a generated sample by
\begin{align}\label{eq:mixture_generate}
\tilde{x} = \sum_{k=1}^K T_k(z_1) \hat{g}_k(z_{\backslash 1}),
\end{align}
where $z_{\backslash 1} = [z_2, \dots, z_{q+1}]^\top$. However, \eqref{eq:mixture_generate} cannot be exactly implemented by a neural network due to the multiplication operation. We adopt Proposition 3 in \cite{yarotsky2017error} (see also Corollary 1 in \cite{chen2019efficient}) for approximating multiplication. Specifically, we denote $\hat{\times}(\cdot, \cdot)$ as an $\frac{\epsilon_1}{3K}$-approximation of multiplication and applying it entrywise yields
\begin{align*}
\hat{x} = \sum_{k=1}^K \hat{\times}\left(T_k(z_1), \hat{g}_k(z_{\backslash 1})\right).
\end{align*}
We use an abstract notation $g_\theta$ to denote the mapping from $z$ to $\hat{x}$. Now we bound the distribution approximation error
\begin{align*}
d_{\cH^1_\infty} \left((g_\theta)_\sharp \rho, \mu\right) & = \sup_{f \in \cH^1_\infty} \EE_{z \sim \rho} [f(g_\theta(z))] - \EE_{x \sim \mu} [f(x)] \\
& = \sup_{f \in \cH^1_\infty} \EE_{z \sim \rho} [f(g_\theta(z))] - \EE_{z \sim \rho} \left[f\left(\sum_{k=1}^K g_k(z_{\backslash 1}) \mathds{1}\left\{z_1 \in [m_{k-1}, m_k]\right\}\right)\right] \\
& = \sup_{f \in \cH^1_\infty} \EE_{z \sim \rho} \left[f\left(\sum_{k=1}^K \hat{\times}\left(\hat{g}_k(z_{\backslash 1}), T_k(z_1)\right)\right) - f\left(\sum_{k=1}^K g_k(z_{\backslash 1}) \mathds{1}\left\{z_1 \in [m_{k-1}, m_k]\right\}\right) \right] \\
& \overset{(i)}{\leq} \EE_{z \sim \rho} \left[\norm{\sum_{k=1}^K \hat{\times}\left(\hat{g}_k(z_{\backslash 1}), T_k(z_1)\right) - g_k(z_{\backslash 1}) \mathds{1}\left\{z_1 \in [m_{k-1}, m_k]\right\}}_\infty\right] \\
& \leq \sum_{k=1}^K \EE_{z \sim \rho} \left[\norm{\hat{\times}\left(\hat{g}_k(z_{\backslash 1}), T_k(z_1)\right) - g_k(z_{\backslash 1}) \mathds{1}\left\{z_1 \in [m_{k-1}, m_k]\right\}}_\infty\right] \\
& \leq \underbrace{\sum_{k=1}^K \EE_{z \sim \rho} \left[\norm{\hat{\times}\left(\hat{g}_k(z_{\backslash 1}), T_k(z_1)\right) - \hat{g}_k(z_{\backslash 1}) T_k(z_1)}_\infty\right]}_{(A)} \\
& \quad + \underbrace{\sum_{k=1}^K \EE_{z \sim \rho} \left[\norm{\hat{g}_k(z_{\backslash 1}) T_k(z_1) - g_k(z_{\backslash 1}) \mathds{1}\left\{z_1 \in [m_{k-1}, m_k]\right\}}_\infty\right]}_{(B)}.
\end{align*}
Term $(A)$ is bounded by $\epsilon_1/3$ since $\hat{\times}$ is an $\frac{\epsilon}{3K}$-approximation of multiplication. Term $(B)$ can be estimated as follows.
\begin{align*}
(B) & = \sum_{k=1}^K \EE_{z \sim \rho} \left[\norm{\hat{g}_k(z_{\backslash 1}) T_k(z_1) - g_k(z_{\backslash 1}) \mathds{1}\left\{z_1 \in [m_{k-1}, m_k]\right\}}_\infty \mathds{1}\{T_k(z_1) = 1\}\right] \\
& \quad + \sum_{k=1}^K \EE_{z \sim \rho} \left[\norm{\hat{g}_k(z_{\backslash 1}) T_k(z_1) - g_k(z_{\backslash 1}) \mathds{1}\left\{z_1 \in [m_{k-1}, m_k]\right\}}_\infty \mathds{1}\{T_k(z_1) < 1\}\right] \\
& \overset{(i)}{\leq} \sum_{k=1}^K \EE_{z \sim \rho} \left[\norm{\hat{g}_k(z_{\backslash 1}) - g_k(z_{\backslash 1})}_\infty \right] + 2 \sum_{k=1}^K \EE_{z \sim \rho} \left[\mathds{1}\{T_k(z_1) < 1\}\right] \\
& \leq \epsilon_1/3 + \epsilon_1/3.
\end{align*}
Adding $(A)$ and $(B)$, we obtain $$d_{\cH^1_\infty} \left((g_\theta)_\sharp \rho, \mu\right) \leq \epsilon_1.$$

The remaining step is to determine the network size for implementing $g_\theta$. Note that $g_\theta$ can be exactly represented by a network. In particular, $g_\theta$ consists of $K$ parallel sub-networks, each computing $\hat{\times}(T_k(z_1), \hat{g}_k(z_{\backslash 1}))$. By Corollary 1 in \cite{chen2019efficient}, $\hat{\times}$ can be implemented by a $O(\log 1/\epsilon_1)$-depth constant-width network. $T_k$ is a piecewise linear function with at most $4$ break points. Thus can be realized by a single-layer constant-width network. We remark that using a single layer network for implementing $T_k$ requires the weight parameter to be as large as $12/\epsilon_1$. In order to have constant bounded weight parameters, we utilize the same trick in the proof of Lemma \ref{lemma:approx_lip} (Appendix \ref{pf:approx_lip}). As a consequence, we need a $O(\log 1/\epsilon_1)$-depth constant-width for realizing $T_k$. Lastly, the network size for $\hat{g}_k$ is given in \eqref{eq:gk_NN_size}. Putting together, we have $g_\theta$ implementable by $\cG_{\rm NN}^{\rm mix}$ with
\begin{align}\label{eq:mix_generator_size}
R = 1, \ \kappa = \max\{C_\alpha, 1\}, \ L = O\left(\log \frac{1}{\epsilon_1}\right), \ p = O\left(K d \epsilon_1^{-q/\alpha}\right), \ J = O\left(K d \epsilon_1^{-q/\alpha} \log \frac{1}{\epsilon_1}\right).
\end{align}
Proposition \ref{prop:mixture_approximation} holds by observing $\norm{x}_2 \leq \sqrt{d} \norm{x}_\infty$ for any $x \in \RR^d$.
\end{proof}

\noindent $\bullet$ {\bf Bounding statistical error}. Given $\epsilon_2 > 0$, Lemma \ref{lemma:approx_lip} suggests that we can choose $\cF_{\rm NN}^{\rm mix}$ with
\begin{equation}\label{eq:mix_discriminator_size}
\begin{split}
& \bar{R} = \sqrt{d},\quad \bar{\kappa} = O(1), \quad \bar{\gamma} = 10d, \quad \bar{L} = O\left(\log \frac{1}{\epsilon_2} + d \right), \\
& ~\quad~ \bar{p} = O\left(\epsilon_2^{-d}\right),\quad \bar{J} = O\left(\epsilon_2^{-d}\left(\log 1/\epsilon_2 + d\right) \right)
\end{split}
\end{equation}
such that any $1$-Lipschitz function $f$ on $[0, 1]^d$ can be approximated by $\hat{f} \in \cF_{\rm NN}^{\rm mix}$ up to error $\epsilon_2$. Since functions in $\cF_{\rm NN}^{\rm mix}$ is $10d$-Lipschitz continuous, we have
\begin{align*}
d_{\cF_{\rm NN}^{\rm mix}} (\hat{\mu}_n, \mu) \leq 10d \dw(\hat{\mu}_n, \mu).
\end{align*}
We invoke the following fast convergence of empirical data distribution to its population counterpart in terms of Wasserstein-1 distance.
\begin{lemma}[Theorem 1 in \cite{weed2019sharp}]\label{lemma:fast_w1}
For any $\delta > 0$, we have
\begin{align*}
\EE[\dw(\hat{\mu}_n, \mu)] \leq C'_\delta n^{-\frac{1}{d^*(\mu) + \delta}},
\end{align*}
where $d^*(\mu)$ is the upper Wasserstein dimension of distribution $\mu$ (Definition 4 in \cite{weed2019sharp}) and $C'_\delta$ is independent of $n$.
\end{lemma}
To apply Lemma \ref{lemma:fast_w1}, we need to find $d^*(\mu)$ under Assumption \ref{assumption:mixture}. It suffices to upper bound $d^*(\mu)$ using
\begin{align}\label{eq:wasserstein_dimension}
d^*(\mu) \leq \limsup_{\epsilon \to 0} \frac{\log \cN(\epsilon, \cX, \norm{\cdot}_2)}{-\log \epsilon},
\end{align}
where $\cN$ is the covering number of $\cX$. We construct a covering of $\cX$ utilizing $g_k$'s to bound the covering number. For $\epsilon < 1$, let $\{u_i\}_{i=1}^{\cN(\epsilon/C_\alpha, [0, 1]^q, \norm{\cdot}_2)}$ be an $(\epsilon/C_\alpha)$-covering of $[0, 1]^q$. We claim that $\{x_{k, i} = g_k(u_i)\}$ forms an $\epsilon$-covering of $\cX$. To see this, let $x \in \cX$ be arbitrary. There exists at least one $g_k$, such that $g_k(u) = x$ for some $u \in [0, 1]^q$. We can find $u_i$ satisfying $\norm{u_i - u}_2 \leq \epsilon$. Then we evaluate
\begin{align*}
\norm{g_k(u_i) - x}_2 = \norm{g_k(u_i) - g_k(u)}_2 \overset{(i)}{=} \norm{\nabla g_k(v) (u_i - u)}_2 \overset{(ii)}{\leq} C_\alpha \norm{u_i - u}_2 \leq \epsilon.
\end{align*}
Equality $(i)$ follows from first-order Taylor expansion. Inequality $(ii)$ follows from $\norm{g_k}_{\cH^{\alpha_k}} \leq C_\alpha$. The covering number of a unit cube $\cN(\epsilon/C_\alpha, [0, 1]^q, \norm{\cdot}_2)$ can be obtained by a volume ratio argument \cite[Lemma 5.2]{wainwright2019high},
\begin{align*}
\cN(\epsilon/C_\alpha, [0, 1]^q, \norm{\cdot}_2) \leq c \left(1 + \frac{2C_\alpha}{\epsilon}\right)^q \quad \text{for~some~absolute~constant}~c.
\end{align*}
Using the last display, the cardinality of $\{x_{k, i} = g_k(u_i)\}$ is bounded by $c K \left(1 + \frac{2C_\alpha}{\epsilon}\right)^q$. Substituting into \eqref{eq:wasserstein_dimension}, we have
\begin{align*}
d^*(\mu) \leq \limsup_{\epsilon \to 0} \frac{q \log cK(1+2C_\alpha/\epsilon)}{-\log \epsilon} = q.
\end{align*}
Consequently, the statistical error is bounded by
\begin{align*}
\EE[\dw(\hat{\mu}_n, \mu) + d_{\cF_{\rm NN}^{\rm mix}}(\hat{\mu}_n, \mu)] \leq (10d+1) \EE[\dw(\hat{\mu}_n, \mu)] \leq (10d+1) C'_\delta n^{-\frac{1}{q + \delta}} \quad \text{for~any~constant}~\delta > 0.
\end{align*}

\noindent $\bullet$ {\bf Balancing error terms}. Summing up generator/discriminator approximation error and statistical error in the oracle inequality, we derive
\begin{align*}
\EE\left[\dw\left((g_\theta^*)_\sharp \rho, \mu \right)\right] \leq \epsilon_1 + 4\epsilon_2 + (10d+1) C'_\delta n^{-\frac{1}{q+\delta}}.
\end{align*}
It suffices to choose $\epsilon_1 = \epsilon_2 = n^{-\frac{1}{q}}$, which gives rise to
\begin{align*}
\EE\left[\dw\left((g_\theta^*)_\sharp \rho, \mu \right)\right] \leq C_\delta d n^{-\frac{1}{q + \delta}} \quad \text{for~any~constant}~\delta > 0,
\end{align*}
where $C_\delta$ is independent of $n$. Substituting $\epsilon_1, \epsilon_2$ into the generator and discriminator sizes in \eqref{eq:mix_generator_size} and \eqref{eq:mix_discriminator_size}, respectively, we complete the proof.

\section{Detailed proofs in Section \ref{sec:lowdproof}}
\subsection{Proof of Lemma \ref{lemma:approx_lip}}\label{pf:approx_lip}
\begin{proof}[Proof of Lemma \ref{lemma:approx_lip}]
The proof consists of two steps: 1) construction of a piecewise linear function for approximating $1$-Lipschitz functions, which can be implemented by a ReLU neural network; 2) establishing the global Lipschitz continuity of the neural network, in addition to the $L_\infty$ approximation error guarantee.

\noindent {\bf Step 1)}. Given a positive integer $N > 0$, we evenly choose $(N+1)^q$ points in the hypercube $[0, 1]^q$, denoted as $m/N$ with $m = [m_1, \dots, m_q]^\top \in \{0, \dots, N\}^q$. We define a univariate trapezoid function (see graphical illustration in Figure \ref{fig:trapezoid})
\begin{align*}
\phi(a) = \begin{cases}
1, & \vert a\vert < 1 \\
2 - \vert a\vert, & \vert a\vert \in [1, 2] \\
0, & \vert a\vert > 2 \\
\end{cases}.
\end{align*}
Then for any $x \in [0, 1]^q$, we define a partition of unity based on a product of trapezoid functions indexed by $m$,
\begin{align*}
\xi_m(x) = \prod_{k=1}^q \phi\left(3N\left(x_k - \frac{m_k}{N}\right)\right).
\end{align*}
\begin{figure}[!htb]
\centering
\includegraphics[width = 0.675\textwidth]{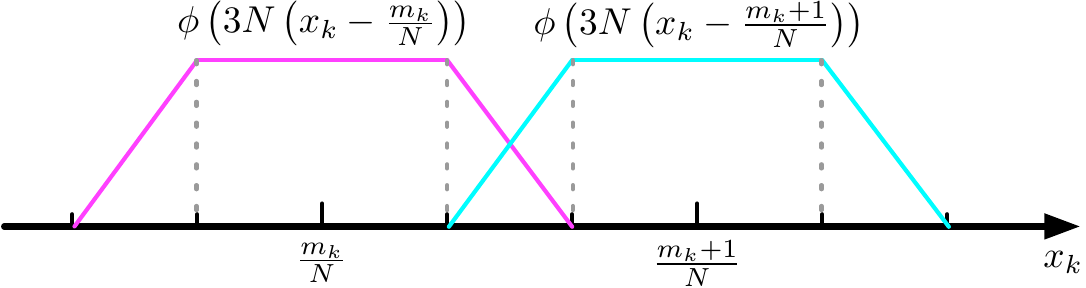}
\caption{Trapezoid function in one dimension.}
\label{fig:trapezoid}
\end{figure}

For any target $1$-Lipschitz function $f$, it is more convenient to write its Lipschitz continuity with respect to the $\ell_\infty$ norm, i.e.,
\begin{align}\label{eq:lip_linfty}
\vert f(x) - f(y)\vert \leq \norm{x - y}_2 \leq \sqrt{q} \norm{x - y}_\infty.
\end{align}
We now define a collection of piecewise constant functions 
\begin{align*}
P_m(x) = f(m) \quad \text{for}\quad m \in \{0, \dots, N\}^q.
\end{align*}
We claim that $\tilde{f}(x) = \sum_{m} \xi_m(x) P_m(x)$ is an approximation of $f$, with an approximation error evaluated as
\begin{align*}
\sup_{x \in [0, 1]^q} \left\vert \tilde{f}(x) - f(x)\right\vert & = \sup_{x \in [0, 1]^q} \left\vert \sum_{m} \xi_m(x) \left(P_m(x) - f(x)\right)\right\vert \\
& \leq \sup_{x \in [0, 1]^q} \sum_{m: \vert x_k - m_k/N\vert \leq \frac{2}{3N}} \left\vert P_m(x) - f(x) \right\vert \\
& = \sup_{x \in [0, 1]^q} \sum_{m: \vert x_k - m_k/N\vert \leq \frac{2}{3N}} \left\vert f(m) - f(x) \right\vert \\
& \leq \sqrt{q} 2^{q+1} \frac{1}{3N},
\end{align*}
where the last inequality follows from the Lipschitz continuity in \eqref{eq:lip_linfty} and the fact that there are at most $2^q$ terms in the summation.

We use a ReLU network to implement $\tilde{f}$. It turns out that we only need to implement the multiplication operation in $\xi_m$. For scalars $a, b \in [0, 1]$, we rewrite $ab$ as $\left(\frac{a+b}{2}\right)^2 - \left(\frac{\vert a-b\vert}{2}\right)^2$. We know neural networks can approximate a univariate quadratic function on $[0, 1]$ as
\begin{align}\label{eq:h_hat}
a^2 \approx \hat{h}_K(a) = a - \sum_{k=1}^K \frac{1}{2^{2k}} g_k(a), \quad \textrm{with} \quad g_k = \underbrace{g \circ \cdots \circ g}_{k ~\textrm{compositions}},
\end{align}
where $g(a) = 2 {\rm ReLU}(a) - 4{\rm ReLU} (a - 0.5) + 2 {\rm ReLU}(a - 1)$. The $L_\infty$ approximation error of $\hat{h}_K$ is $2^{-(2K + 2)}$ (A proof can be found in \cite[Proposition 2]{yarotsky2017error} or \cite[Lemma 1]{chen2019efficient}).
We approximate $\xi_m$ recursively using univariate quadratic functions. Specifically, we construct
\begin{align}\label{eq:xi_hat}
\xi_m(x) \approx \hat{\xi}_m(x) = \hat{\times}\left(\phi(3N(x_q - m_q/N)), \hat{\times}\left(\phi(3N(x_{q-1} - m_{q-1}/N)), \dots \right)\right),
\end{align}
where $\hat{\times}(a, b) = \hat{h}_K((a+b)/2) - \hat{h}_K(\vert a-b\vert/2)$ for $a, b \in [0, 1]$. Then the network for approximating $f$ is obtained as
\begin{align}\label{eq:f_hat}
f(x) \approx \hat{f}(x) = \sum_m \hat{\xi}_m(x) f(m).
\end{align}
We bound $L_\infty$ approximation error of $\hat{f}$ as
\begin{align*}
\norm{\hat{f} - f}_\infty & \leq \norm{\hat{f} - \tilde{f}}_\infty + \norm{\tilde{f} - f}_\infty \\
& \leq \sup_{x \in [0, 1]^q} \left\vert \sum_{m} \left(\hat{\xi}_m(x) - \xi_{m}(x) \right) P_m(x)\right\vert + \sqrt{q} 2^{q+1} \frac{1}{3N} \\
& \leq \norm{f}_\infty \sup_{x \in [0, 1]^q} \left\vert \sum_{m} \left(\hat{\xi}_m(x) - \xi_{m}(x) \right) \right\vert + \sqrt{q} 2^{q+1} \frac{1}{3N} \\
& \leq 2^q \norm{f}_\infty \norm{\hat{\xi}_m - \xi_m}_\infty + \sqrt{q} 2^{q+1} \frac{1}{3N} \\
& \leq q 2^q \norm{f}_\infty 2^{-2K - 1} + \sqrt{q} 2^{q+1} \frac{1}{3N},
\end{align*}
where the last inequality follows from recursively decomposing $\norm{\hat{\xi}_m - \xi_m}_\infty$ into $q$ terms as
\begin{align*}
\norm{\hat{\xi}_m - \xi_m}_\infty & \leq \Big\| \hat{\times}\left(\phi(3N(x_q - m_q/N)), \hat{\times}\left(\phi(3N(x_{q-1} - m_{q-1}/N)), \dots \right)\right) \\
& \quad - \phi(3N(x_q - m_q/N)) \cdot \hat{\times}\left(\phi(3N(x_{q-1} - m_{q-1}/N)), \dots \right) \Big\|_\infty \\
& \quad + \dots \\
& \quad + \phi(3N(x_q - m_q/N)) \cdots \phi(3N(x_3 - m_3/N)) \\
& \quad \cdot \Big\| \hat{\times} \left(\phi(3N(x_2 - m_2/N)), \phi(3N(x_1 - m_1/N))\right) \\
& \qquad - \phi(3N(x_2 - m_2/N)) \phi(3N(x_1 - m_1/N) \Big\|_\infty
\end{align*}
and observing
\begin{align*}
\vert \hat{\times}(a, b) - ab\vert & \leq \left\vert \hat{h}_K((a+b)/2) - (a+b)^2/4\right\vert + \left\vert \hat{h}_K(\vert a-b\vert/2) - (a-b)^2/4\right\vert \\
& \leq 2 \cdot 2^{-2K - 2} = 2^{-2K - 1}
\end{align*}
for any $a, b \in [0, 1]$.

\noindent {\bf Step 2)}. The following lemma establishes the Lipschitz continuity of $\hat{f}$ with respect to the $\ell_\infty$ norm.
\begin{lemma}\label{lemma:hatf_lip}
Let $\hat{f}$ be defined in \eqref{eq:f_hat}. Then for any $x, y \in [0, 1]^q$, it holds
\begin{align}
\left\vert \hat{f}(x) - \hat{f}(y)\right\vert \leq 3q \left(3 + 2(N \norm{f}_\infty + 1) \cdot q 2^{-K+q-1} \frac{1 - \left(q2^{-K}\right)^q}{1 - q2^{-K}} \right) \norm{x - y}_\infty. \nonumber
\end{align}
\end{lemma}
The proof is deferred to Appendix \ref{pf:hatf_lip}. Given Lemma \ref{lemma:hatf_lip}, we choose $N = \left\lceil \frac{\sqrt{q} 2^{q+1}}{\epsilon_2} \right\rceil$ and $K$ satisfying
\begin{align*}
2(N \norm{f}_\infty + 1) \cdot q 2^{-K+q-1} \frac{1 - \left(q2^{-K}\right)^q}{1 - q2^{-K}} \leq \frac{1}{3},
\end{align*}
which implies $K = \left\lceil \log \frac{12q^{3/2} (\norm{f}_\infty + 1)}{\epsilon_2} + 2q \right\rceil$. As a result, we check the $L_\infty$ approximation error of $\hat{f}$ as
\begin{align*}
\norm{\hat{f} - f}_\infty & \leq q 2^q \norm{f}_\infty 2^{-2K - 1} + \sqrt{q} 2^{q+1} \frac{1}{3N} \\
& \leq \frac{1}{9 q^2 2^{3q+5} (\norm{f}_\infty + 1)}\epsilon_2^2 + \frac{1}{3}\epsilon_2 \\
& \leq \epsilon_2.
\end{align*}
Meanwhile, with the choice of $K$ and $N$, Lemma \ref{lemma:hatf_lip} implies that for any $x, y \in [0, 1]^q$, it holds
\begin{align*}
\left\vert \hat{f}(x) - \hat{f}(y) \right\vert & \leq 3q \left(3 + 2(N \norm{f}_\infty + 1) \cdot q 2^{-K+q-1} \frac{1 - \left(q2^{-K}\right)^q}{1 - q2^{-K}} \right) \norm{x - y}_\infty \\
& \leq 10q \norm{x - y}_\infty.
\end{align*}
The remaining step is to characterize the size of the ReLU network for implementing $\hat{f}$. Construction \eqref{eq:f_hat} suggests that the network consists of $(N+1)^q$ parallel subnetworks. In each subnetwork, we need to implement $\hat{\xi}_m$ defined in \eqref{eq:xi_hat}, where the subnetwork architecture consists of $K$ layers and the width is bounded by a constant (since $\hat{h}_K$ is realizable by a width-$3$ network). Putting together all the parallel subnetworks, we conclude that the whole network architecture consists of $K$ layers and the width is bounded by $O((N+1)^q)$. Substituting our choice of $N$ and $K$ into the network size, we obtain $L = O\left(\log \frac{1}{\epsilon_2} + q \right)$ and $p = O(\epsilon_2^{-q})$. The total number of neurons and nonzero weight parameters in the network is $J = O(Lp)$.

The last step is to ensure that each weight parameter in $\hat{f}$ is bounded by a constant. The only caveat stems from the trapezoid function in $\xi_m$, which is rescaled by $3N$ (see equation \eqref{eq:xi_hat}). We use a deep network to implement $\phi(3N(x_k - \frac{m_k}{N}))$. Consider a basic step function $s(x) = 2{\rm ReLU}(x) - 2{\rm ReLU} (x - 1)$, whose $j$-th order composition is
\begin{align*}
s_j = s \circ \cdots \circ s = 
\begin{cases}
0, & x < 0 \\
2^j x, & x \in [0, 1/2^{j-1}] \\
2, & x > 1/2^j
\end{cases}.
\end{align*}
Setting $j = \left\lceil \log (3N) \right\rceil + 1$, we observe that $s_j$ has a slope of at least $6N$. We use $s_j / 2$ to realize the left linear segment in $\phi(3N(x_k - \frac{m_k}{N}))$. For the right linear segment, we can use $1 - s_j/2$ instead. In this way, we increment the network architecture for implementing $\hat{f}$ by a depth of $\left\lceil \log (3N) \right\rceil + 1 = O\left(\log 1/\epsilon_2 + q \right)$ and a width of $4$, while each weight parameter in the network is bounded by a constant. To summarize the network architecture, we have
\begin{align*}
& L = O\left(\log 1/\epsilon_2 + q \right), \quad p = O\left(\epsilon_2^{-q}\right), \quad J = O\left(\epsilon_2^{-q} (\log 1/\epsilon_2 + q)\right), \\
& \hspace{1.2in} \kappa = O(1), \quad R = \sqrt{q}.
\end{align*}
The bound on $R$ is obtained by combining Lipschitz continuity \eqref{eq:lip_linfty} with $f(0) = 0$.

\end{proof}

\subsection{Proof of Lemma \ref{lemma:subspace_match}}\label{pf:subspace_match}
\begin{proof}[Proof of Lemma \ref{lemma:subspace_match}]
Given the choice of generator and discriminator network classes, we show that at a global optimizer $(U^*, g_\theta^*)$, it holds
\begin{align}\label{eq:dw_upperbound}
\dw\left((U^* \circ g_\theta^*)_\sharp \rho, \mu \right) & \leq \left(1 + 4\sqrt{q} \left(\min_i \EE_{z \sim \rho}\left[T^{\rm ld}_i(z)\right]\right)^{-1} \EE_{z \sim \rho} \left[\norm{g_\theta^*(z)}_2\right]\right) \nonumber \\
& \qquad \cdot (\bar{\gamma} \epsilon_1 + 3\epsilon_2).
\end{align}
Suppose for the purpose of contradiction, we have
\begin{align}\label{eq:contradict_assumption}
\dw\left((U^* \circ g_\theta^*)_\sharp \rho, \mu \right) & > \left(1 + 4\sqrt{q} \left(\min_i \EE_{z \sim \rho}\left[T^{\rm ld}_i(z)\right]\right)^{-1} \EE_{z \sim \rho} \left[\norm{g_\theta^*(z)}_2\right]\right) \nonumber \\
& \qquad \cdot (\bar{\gamma} \epsilon_1 + 3\epsilon_2).
\end{align}
We will prove that there exists $(V, f_\omega)$ such that
\begin{align}\label{eq:equilibrium_lowerbound}
d_{\cF_{\rm NN}^{\rm ld}}\left((U^* \circ g_\theta^*)_\sharp \rho, \mu\right) & \geq \EE_{z \sim \rho} \left[f_\omega (V^\top U^* g_\theta^*(z)) \right] - \EE_{x \sim \mu} [f_\omega (V^\top x)] \nonumber \\
& > \bar{\gamma} \epsilon_1.
\end{align}

On the other hand, by choosing $U^* = A$ and $g_\theta$ with $\norm{T^{\rm ld} - g_\theta}_\infty \leq \epsilon_1/q$, we have
\begin{align}\label{eq:equilibrium_upperbound}
d_{\cF_{\rm NN}^{\rm ld}}\left((U^* \circ g_\theta)_\sharp \rho, \mu\right) \leq \bar{\gamma} \epsilon_1,
\end{align}
since discriminator is $\bar{\gamma}$-Lipschitz with respect to the $L_\infty$ norm. Putting \eqref{eq:equilibrium_lowerbound} and \eqref{eq:equilibrium_upperbound} together, we conclude that $(U^*, g_\theta^*)$ cannot be a global optimizer. Therefore, \eqref{eq:dw_upperbound} holds true. It remains to establish \eqref{eq:equilibrium_lowerbound}. Since the discriminator network can approximate any $1$-Lipschitz function by Lemma \ref{lemma:approx_lip}, it is convenient to show the following sufficient condition for \eqref{eq:equilibrium_lowerbound},
\begin{align}\label{eq:equilibrium_contradict}
\sup_V~ \dw\left((V^\top U^* \circ g_\theta^*)_\sharp \rho, V^\top_\sharp \mu \right) > \bar{\gamma}\epsilon_1 + 3\epsilon_2. 
\end{align}
In fact, \eqref{eq:equilibrium_contradict} implies that for any $\delta \in (0, \epsilon_2)$, there exists a discriminative function $f_0$ and matrix $V_0$ such that $\EE_{z \sim \rho} [f_0(V_0^\top U^* g_\theta^*(z))] - \EE_{x \sim \mu} [f_0(V_0^\top x)] > \bar{\gamma} \epsilon_1 + 3\epsilon_2 + 2d_{\cF_{\rm NN}^{\rm ld}}(\hat{\mu}_n, \mu) - \delta$. By choosing $f_\omega$ as an $\epsilon_2$-approximation of $f_0$ and $V = V_0$, we obtain
\begin{align*}
& \quad \EE_{z \sim \rho} \left[f_\omega (V_0^\top U^* g_\theta^*(z)) \right] - \EE_{x \sim \mu} [f_\omega (V_0^\top x)] \\
& = \EE_{z \sim \rho} \left[f_\omega (V_0^\top U^* g_\theta^*(z)) \right] \nonumber \\
& \quad - \EE_{x \sim \mu} [f_\omega (V_0^\top x)] - \EE_{z \sim \rho} [f_0(V_0^\top U^* g_\theta^*(z))] - \EE_{x \sim \mu} [f_0(V_0^\top x)] \\
& \quad + \EE_{z \sim \rho} [f_0(V_0^\top U^* g_\theta^*(z))] - \EE_{x \sim \mu} [f_0(V_0^\top x)] \\
& > \bar{\gamma}\epsilon_1 + 3\epsilon_2 - \delta - 2 \norm{f_\omega - f_0}_\infty \\
& > \bar{\gamma}\epsilon_1,
\end{align*}
which establishes \eqref{eq:equilibrium_lowerbound}.

To ease the presentation, we recall that $\epsilon = \bar{\gamma} \epsilon_1 + 3 \epsilon_2$. We now consider two complementary cases for establishing \eqref{eq:equilibrium_contradict},
\begin{itemize}
\item {\bf (Case 1)} $\frac{1}{q} \left\vert \tr\left(A^\top U^* \right) \right\vert < 1 - 2 \left(\min_i \EE_{z \sim \rho}\left[T^{\rm ld}_i(z)\right]\right)^{-2} \epsilon^2$; 
\item {\bf (Case 2)} $\frac{1}{q} \left\vert \tr\left(A^\top U^* \right) \right\vert \geq 1 - 2 \left(\min_i \EE_{z \sim \rho}\left[T^{\rm ld}_i(z)\right]\right)^{-2} \epsilon^2$,
\end{itemize}
where $T_i^{\rm ld}$ denotes the $i$-th coordinate mapping. Note that {\bf (Case 2)} says that the column spaces of $A, U^{\star}$ are nearly identical. We tackle the two cases separately. To further ease the analysis, we assume without loss of generality that $a_{i}^\top u^\star_{i} \geq 0$ for $i = 1, \dots, q$, where $a_{i}$ and $u^\star_{i}$ are column vectors of $A$ and $U^{\star}$, respectively. Otherwise we can replace $a_i$ with $-a_i$ and $T^{\rm ld}_i$ with $-T^{\rm ld}_i$ simultaneously. As a result, we may remove the absolute values in {\bf (Case 1)} and {\bf (Case 2)} for simplicity.

\noindent $\bullet$ {\bf (Case 1)} We show that there exists an index $I$ such that the corresponding column vectors $a_{I}$ and $u_{I}^*$ are sufficiently mis-aligned in direction. Specifically, given $\frac{1}{q} \tr\left(A^\top U^{\star} \right) < 1 - 2 \EE^{-2}_{z \sim \rho}\left[\min_i T^{\rm ld}_i(z)\right] \epsilon^2$, we expand the expression as
\begin{align*}
\frac{1}{q} \tr\left(A^\top U^{\star} \right) = \frac{1}{q} \sum_{i=1}^q a_{i}^\top u^{\star}_i < 1 - 2 \left(\min_i \EE_{z \sim \rho}\left[T^{\rm ld}_i(z)\right]\right)^{-2} \epsilon^2.
\end{align*}
Since $a_{i}^\top u^{\star}_i \in [0, 1]$ for $i = 1, \dots, q$, by the Pigeonhole principle, we deduce that there exists an index $I$ with
\begin{align}\label{eq:case1_misalign}
a_{I}^\top u^{\star}_I < 1 - 2 \left(\min_i \EE_{z \sim \rho}\left[T^{\rm ld}_i(z)\right]\right)^{-2}\epsilon^2.
\end{align}

Now we prove that the mis-alignment of $a_I$ and $u^*_I$ already results in a sufficient separation between the generated distribution and data distribution, in terms of projected Wasserstein distance. By definition, we have
\begin{align}
& \quad \dw\left(\rbr{V^\top U^* g_\theta^*}_\sharp \rho, V^\top_\sharp \mu \right) \nonumber \\
& = \sup_{f \in {\rm Lip}_1(\RR^q)} \EE_{z \sim \rho} \left[f\left(V^\top U^* g_\theta^*(z)\right)\right] - \EE_{z \sim \rho} \left[f\left(V^\top A T^{\rm ld}(z)\right)\right] \nonumber \\
& = \sup_{f \in {\rm Lip}_1(\RR^q)} \EE_{z \sim \rho} \left[f\left(V^\top \sum_{i=1}^q u^*_{i} (g_\theta^*)_i(z)\right)\right] - \EE_{z \sim \rho} \left[f\left(V^\top \sum_{i=1}^q a_i T^{\rm ld}_i(z)\right)\right]. \label{eq:multiD_case1_step1}
\end{align}
We choose the projection matrix $V$ to be a rank-$1$ matrix with only the $I$-th column nonzero, i.e.,
\begin{align*}
V = \begin{bmatrix}
\boldsymbol{0}_{d \times (I-1)},~~ & 
\frac{a_{I} - u^{\star}_I}{\norm{a_{I} - u^{\star}_I}_2},~~ & 
\boldsymbol{0}_{d \times (q-I)}
\end{bmatrix}.
\end{align*}
We further choose a specific testing function $f$ to derive a lower bound on \eqref{eq:multiD_case1_step1}. Let $f(x) = w^\top x$ be linear with $w_I = 1$ and $w_i = 0$ for $i \neq I$. Substituting our choice of $V$ and $f$ into \eqref{eq:multiD_case1_step1}, we obtain
\begin{align}
& \quad \dw\left(\rbr{V^\top U^* g_\theta^*}_\sharp \rho, V^\top_\sharp \mu \right) \nonumber \\
& \geq \EE_{z \sim \rho} \left[w^\top V^\top \sum_{i=1}^q u_i^* (g^*_\theta)_{i}(z)\right] - \EE_{z \sim \rho} \left[w^\top V^\top \sum_{i=1}^q a_{i} T^{\rm ld}_{i}(z)\right] \nonumber \\
& = \frac{1 - a_{I}^\top u^*_{I}}{\norm{a_{I} - u^*_{I}}_2}~\EE_{z \sim \rho} \left[(g^*_{\theta})_I(z) + T^{\rm ld}_I(z)\right] \nonumber \\
& = \frac{1}{2} \norm{a_{I} - u^*_{I}}_2 \EE_{z \sim \rho} \left[(g^*_{\theta})_I(z) + T^{\rm ld}_I(z)\right]. \label{eq:1Dreduction}
\end{align}
Using \eqref{eq:case1_misalign}, we lower bound 
\begin{align*}
\norm{a_{I} - u^*_{I}}_2 = \sqrt{2 - 2a_{I}^\top u^*_{I}} > 2 \left(\min_i \EE_{z \sim \rho}\left[T^{\rm ld}_i(z)\right]\right)^{-1}\epsilon.
\end{align*}
Substituting into \eqref{eq:1Dreduction}, we conclude
\begin{align*}
\dw\left(\rbr{V^\top U^* g_\theta^*}_\sharp \rho, V^\top_\sharp \mu \right) & > \epsilon \cdot \left(\min_i \EE_{z \sim \rho}\left[T^{\rm ld}_i(z)\right]\right)^{-1} \EE_{z \sim \rho} [(g^*_{\theta})_I(z) + T^{\rm ld}_I(z)] \\
& > \epsilon.
\end{align*}

\noindent $\bullet$ {\bf (Case 2)} The assertion of {\bf (Case 2)} translates to several useful spectral norm bounds. We first observe
\begin{align}\label{eq:case2_AU_spectral}
\norm{A - U^*}_2^2 \leq \norm{A - U^*}_{\rm F}^2 & = \tr \left((A - U^*)^\top (A - U^*)\right) \nonumber \\
& = \tr \left(2I - A^\top U^* - (U^*)^\top A \right) \nonumber \\
& \leq 4q \left(\min_i \EE_{z \sim \rho}\left[T^{\rm ld}_i(z)\right]\right)^{-2} \epsilon^2.
\end{align}
Taking square root on both sides of \eqref{eq:case2_AU_spectral}, we have $\norm{A - U^*}_2 \leq 2\sqrt{q} \left(\min_i \EE_{z \sim \rho}\left[T^{\rm ld}_i(z)\right]\right)^{-1} \epsilon$. In addition, since $A$ has orthonormal columns, we have
\begin{align}\label{eq:case2_IAU_spectral}
\norm{I - A_0^\top A_\star}_2 = \norm{A_0^\top (A_0 - A_\star)}_2 & \leq \norm{A_0}_2 \norm{A_0 - A_\star}_2 \nonumber \\
& \leq 2\sqrt{q} \left(\min_i \EE_{z \sim \rho}\left[T^{\rm ld}_i(z)\right]\right)^{-1} \epsilon.
\end{align}

We use a similar proof strategy as in {\bf (Case 1)} by choosing a specific projection matrix $V = A$, and evaluate the Wasserstein distance
\begin{align}
& \quad \dw\left(\rbr{V^\top U^* g_\theta^*}_\sharp \rho, V^\top_\sharp \mu \right) \nonumber \\ 
& = \sup_{f \in {\rm Lip}_1(\RR^q)} \EE_{z \sim \rho} \left[f(T^{\rm ld}(z))\right] - \EE_{z \sim \rho} \left[f(A^\top U^* g_\theta^*(z))\right] \nonumber \\
& = \sup_{f \in {\rm Lip}_1(\RR^q)} \EE_{z \sim \rho} \left[f(T^{\rm ld}(z))\right] - \EE_{z \sim \rho} \left[f(g_\theta^*(z))\right] \nonumber \\
& \quad + \EE_{z \sim \rho} \left[f(g_\theta^*(z))\right] - \EE_{z \sim \rho} \left[f(A^\top U^* g_\theta^*(z))\right] \nonumber \\
& \geq \sup_{f \in {\rm Lip}_1(\RR^q)} \EE_{z \sim \rho} \left[f(T^{\rm ld}(z))\right] - \EE_{z \sim \rho} \left[f(g_\theta^*(z))\right] - \EE_{z \sim \rho} \left[\norm{(I - A^\top U^*) g_\theta^*(z)}_2\right] \nonumber \\
& \geq \sup_{f \in {\rm Lip}_1(\RR^q)} \EE_{z \sim \rho} \left[f(T^{\rm ld}(z))\right] - \EE_{z \sim \rho} \left[f(g_\theta^*(z))\right] - \norm{I - A^\top U^*}_2 \EE_{z \sim \rho} \left[\norm{g_\theta^*(z)}_2\right] \nonumber \\
& = \underbrace{\dw(T^{\rm ld}_\sharp \rho, (g_\theta^*)_\sharp \rho)}_{(\spadesuit)} - \underbrace{\norm{I - A^\top U^*}_2 \EE_{z \sim \rho} \left[\norm{g_\theta^*(z)}_2\right]}_{(\clubsuit)}. \label{eq:case2_bound}
\end{align}
Invoking inequality \eqref{eq:case2_IAU_spectral}, $(\clubsuit)$ assumes the upper bound
\begin{align}
(\clubsuit) \leq 2 \sqrt{q}\left(\min_i \EE_{z \sim \rho}\left[T^{\rm ld}_i(z)\right]\right)^{-1} \EE\left[\norm{g_\theta^*(z)}_2\right] \epsilon. \nonumber
\end{align}
To lower bound $(\spadesuit)$, we prove a lower bound on $\dw\left((A T^{\rm ld})_\sharp \rho, (U^* g_{\theta}^*)_\sharp \rho \right)$. The triangle inequality implies
\begin{align*}
\dw\left((A T^{\rm ld})_\sharp \rho, (U^* g_{\theta}^*)_\sharp \rho \right) \leq \dw\left((A T^{\rm ld})_\sharp \rho, (A g_\theta^*)_\sharp \rho \right) + \dw\left((A g_\theta^*)_\sharp \rho, (U^* g_{\theta}^*)_\sharp \rho \right).
\end{align*}
We bound the second term in the right-hand side above as
\begin{align}
\dw\left((A g_\theta^*)_\sharp \rho, (U^* g_\theta^*)_\sharp \rho \right) & = \sup_{f \in {\rm Lip}_1(\RR^d)} \EE_{z \sim \rho} \left[f\left(A g_\theta^*(z)\right)\right] - \EE_{z \sim \rho} [f(U^* g_\theta^*(z))] \nonumber \\
& \overset{(i)}{\leq} \EE_{z \sim \rho} \left[\norm{A g_\theta^*(z) - U^* g_\theta^*(z)}_2\right] \nonumber \\
& \leq \norm{A - U^*}_2 \EE_{z \sim \rho} \left[\norm{g_\theta^*(z)}_2\right] \nonumber \\
& \overset{(ii)}{\leq} 2\sqrt{q} \left(\min_i \EE_{z \sim \rho}\left[T^{\rm ld}_i(z)\right]\right)^{-1} \EE_{z \sim \rho} \left[\norm{g_\theta^*(z)}_2\right] \epsilon, \nonumber
\end{align}
where inequality $(i)$ invokes the Lipschitz continuity of $f$ and inequality $(ii)$ invokes \eqref{eq:case2_AU_spectral}. Recall that in \eqref{eq:contradict_assumption}, we assume
$$\dw\left((A T^{\rm ld})_\sharp \rho, (U^* g_{\theta}^*)_\sharp \rho \right) > \left(1 + 4\sqrt{q} \left(\min_i \EE_{z \sim \rho}\left[T^{\rm ld}_i(z)\right]\right)^{-1} \EE_{z \sim \rho} \left[\norm{g_\theta^*(z)}_2\right]\right) \epsilon.$$
Thus, we have
\begin{align}
\dw\left((U^* T^{\rm ld})_\sharp \rho, (U^* g_{\theta}^*)_\sharp \rho \right) & > \left(1 + 2\sqrt{q} \left(\min_i \EE_{z \sim \rho}\left[T^{\rm ld}_i(z)\right]\right)^{-1} \EE_{z \sim \rho} \left[\norm{g_\theta^*(z)}_2\right]\right) \epsilon, \nonumber
\end{align}
which implies
\begin{align}
(\spadesuit) = \dw(T^{\rm ld}_\sharp \rho, (g_\theta^*)_\sharp \rho) & > \left(1 + 2\sqrt{q} \left(\min_i \EE_{z \sim \rho}\left[T^{\rm ld}_i(z)\right]\right)^{-1} \EE_{z \sim \rho} \left[\norm{g_\theta^*(z)}_2\right]\right) \epsilon. \nonumber
\end{align}
Combining the bounds of $(\spadesuit)$ and $(\clubsuit)$ and substituting into \eqref{eq:case2_bound}, we obtain
\begin{align*}
\dw\left(\rbr{V^\top U^* g_\theta^*}_\sharp \rho, V^\top_\sharp \mu \right) \geq (\spadesuit) - (\clubsuit) > \epsilon,
\end{align*}
which checks {\bf (Case 2)}. Putting {\bf (Case 1)} and {\bf (Case 2)} together, we establish inequality \eqref{eq:equilibrium_contradict}. Consequently, \eqref{eq:equilibrium_lowerbound} holds true and therefore, \eqref{eq:dw_upperbound} is valid for a global optimizer $(U^*, g_\theta^*)$.

Next, we show given \eqref{eq:dw_upperbound}, the column space of $A$ and $U^*$ are approximately equal. In particular, we show the following bound
\begin{align*}
\frac{1}{2q} \norm{A- U^*}^2_{\rm F} & \leq 2 \cdot \left(1 + 4\sqrt{q} \left(\min_i \EE_{z \sim \rho}\left[T^{\rm ld}_i(z)\right]\right)^{-1} \EE_{z \sim \rho} \left[\norm{g_\theta^*(z)}_2\right]\right)^2 \\
& \qquad \cdot \left(\min_i \EE_{z \sim \rho} \left[T^{\rm ld}_i(z)\right]\right)^{-2} \epsilon^2.
\end{align*}
Suppose not. We expand the squared Frobenius norm $\norm{A - U^*}_{\rm F}^2$ as
\begin{align*}
\frac{1}{2q} \norm{A - U^*}_{\rm F}^2 & = \frac{1}{2q} \tr\left((A - U^*)^\top (A - U^*)\right) \\
& = \frac{1}{2q} \tr\left(2I - A^\top U^* - (U^*)^\top A\right) \\
& = 1 - \frac{1}{2q} \tr\left(A^\top U^* + (U^*)^\top A\right) \\
& = 1 - \frac{1}{q} \tr\left(A^\top U^*\right).
\end{align*}
From the last display above, we deduce 
\begin{align*}
\frac{1}{q} \tr\left(A^\top U^*\right) & < 1 - 2 \cdot \left(1 + 4\sqrt{q} \left(\min_i \EE_{z \sim \rho}\left[T^{\rm ld}_i(z)\right]\right)^{-1} \EE_{z \sim \rho} \left[\norm{g_\theta^*(z)}_2\right]\right)^2 \\
& \qquad \cdot \left(\min_i \EE_{z \sim \rho} \left[T^{\rm ld}_i(z)\right]\right)^{-2} \epsilon^2.
\end{align*}
We consider distinguish $(U^* \circ g_\theta^*)_\sharp \rho$ and $\mu$ by a linear testing function $f(x) = \frac{(a_{I} - u^*_{I})^\top}{\norm{a_{I} - u^*_{I}}_2}x$, where the index $I$ verifies
\begin{align*}
a_{I}^\top u^*_I & < 1 - 2 \cdot \left(1 + 4\sqrt{q} \left(\min_i \EE_{z \sim \rho}\left[T^{\rm ld}_i(z)\right]\right)^{-1} \EE_{z \sim \rho} \left[\norm{g_\theta^*(z)}_2\right]\right)^2 \\
& \qquad \cdot \left(\min_i \EE_{z \sim \rho} \left[T^{\rm ld}_i(z)\right]\right)^{-2} \epsilon^2.
\end{align*}
Repeating the same argument in {\bf (Case 1)}, we deduce
\begin{align*}
\dw\left((U^* \circ g^*_\theta)_\sharp \rho, \mu\right) > \left(1 + 4\sqrt{q} \left(\min_i \EE_{z \sim \rho}\left[T^{\rm ld}_i(z)\right]\right)^{-1} \EE_{z \sim \rho} \left[\norm{g_\theta^*(z)}_2\right]\right) \epsilon,
\end{align*}
which contradicts \eqref{eq:equilibrium_upperbound}. The proof is complete. 
\end{proof}

\subsection{Proof of Lemma \ref{lemma:staterror_lowd}}\label{pf:staterror_lowd}
\begin{proof}[Proof of Lemma \ref{lemma:staterror_lowd}]
We bound $\dw(\hat{\mu}_n, \mu)$ first. Denote $\nu = A_\sharp^\top \mu$ and $\hat{\nu}_n = A_\sharp^\top \hat{\mu}_n$. By Assumption \ref{assumption:lowdmanifold}, we write $\dw(\hat{\mu}_n, \mu)$ as
\begin{align}
\dw(\hat{\mu}_n, \mu) & = \dw(A_\sharp \hat{\nu}_n, A_\sharp \nu) \nonumber \\
& = \sup_{\norm{f}_{\rm Lip} \leq 1} \EE_{x \sim A_\sharp \hat{\nu}_n} [f(x)] - \EE_{y \sim A_\sharp \nu} [f(y)] \nonumber \\
& = \sup_{\norm{f}_{\rm Lip} \leq 1} \EE_{x \sim \hat{\nu}_n} [f(Ax)] - \EE_{y \sim \nu} [f(Ay)] \nonumber \\
& \overset{(i)}{=} \sup_{g = f \circ A} \EE_{x \sim \hat{\nu}_n} [g(x)] - \EE_{y \sim \nu} [g(y)] \nonumber \\
& \leq \dw(\hat{\nu}_n, \nu).
\end{align}
where in $(i)$, the composite function $g = f \circ A: \RR^q \mapsto \RR$ is Lipschitz continuous, whose Lipschitz constant is bounded by $1$. Applying Lemma \ref{lemma:stat}, with the function class being $1$-Lipschitz functions on $[0, 1]^q$, we derive
\begin{align}
\dw(\hat{\nu}_n, \nu) & \leq 4 \inf_{\delta \in (0, \sqrt{q})} \left(\delta + \frac{6}{\sqrt{n}} \int_\delta^{\sqrt{q}} \sqrt{\log \cN(\tau, \cH^1([0, 1]^q), \norm{\cdot}_\infty)} d\tau \right) \nonumber \\
& \overset{(i)}{\leq} 4 \inf_{\delta} \left(\delta + \frac{6}{\sqrt{n}} \int_{\delta}^{\sqrt{q}} \tau^{-q/2} d\tau \right)\nonumber \\
& \overset{(ii)}{\leq} O\left(n^{-1/q}\log n\right),
\end{align}
where in $(i)$, we substitute a covering number bound $\log \cN(\tau, \cH^1([0, 1]^q), \norm{\cdot}_\infty) = O\left((1/\tau)^{q}\right)$, and in $(ii)$, we take $\delta = n^{-1/q}$ and distinguish two cases depending on $q$:
\begin{itemize}
\item ($q = 2$). Inequality $(i)$ can be simplified as
\begin{align*}
\dw(\hat{\nu}_n, \nu) & \leq \frac{4}{\sqrt{n}} + \frac{24}{\sqrt{n}} \log (\sqrt{qn}) \\
& = O \left(n^{-1/q} \log n\right).
\end{align*}
\item ($q > 2$). Inequality $(i)$ can be computed as
\begin{align*}
\dw(\hat{\nu}_n, \nu) & \leq 4n^{-1/q} + \frac{24}{\sqrt{n}} \frac{1}{1-q/2} \left((\sqrt{q})^{-q/2 + 1} - \left(n^{-1/q}\right)^{-q/2 + 1} \right) \\
& = O\left(n^{-1/q} + n^{-1/2} \right).  
\end{align*}
\end{itemize}
Applying Lemma \ref{lemma:stat} again, with the function class being $\cF_{\rm NN}^{\rm ld}$, we further have
\begin{align}
d_{\cF_{\rm NN}^{\rm ld}}(\hat{\mu}_n, \mu) & \leq 4 \inf_{\delta \in (0, \sqrt{q})} \left(\delta + \frac{6}{\sqrt{n}} \int_\delta^{\sqrt{q}} \sqrt{\log \cN(\tau, \cF_{\rm NN}^{\rm ld}, \norm{\cdot}_\infty)} d\tau \right) \nonumber \\
& \overset{(i)}{\leq} 4 \inf_{\delta} \left(\delta + \frac{6}{\sqrt{n}} \int_{\delta}^{\sqrt{q}} \sqrt{\bar{K} \log \frac{2\bar{L}^2 (\bar{p} + 2) (\bar{\kappa} \bar{p})^{\bar{L}+1}}{\tau}} d\tau \right)\nonumber \\
& \overset{(ii)}{=} O\left(\frac{1}{n} + \frac{1}{\sqrt{n}} \sqrt{\bar{K}\bar{L} \log (\bar{L} \bar{p} n)} \right),
\end{align}
where in $(i)$, we invoke Lemma \ref{lemma:NNcovering} instantiated to $\cF_{\rm NN}^{\rm ld}$, and in $(ii)$, we set $\delta = \frac{1}{n}$.

\end{proof}

\section{Proof of Lemma \ref{lemma:hatf_lip}}\label{pf:hatf_lip}
\begin{proof}[Proof of Lemma \ref{lemma:hatf_lip}]
We begin by considering two points $x, y \in [0, 1]^q$ differing in only one coordinate. Without loss of generality, we assume $x_1 - y_1 \geq 0$, while $x_j - y_j = 0$ for $j = 2, \dots, q$. We have two base cases:
\begin{itemize}
\item {\bf (Base case 1)} there exists $m_1^* \in \{0, \dots, N\}$ such that $x_1, y_1 \in \left[\frac{3m^*_1 -1}{3N}, \frac{3m^*_1 + 1}{3N} \right]$;
\smallskip
\item {\bf (Base case 2)} there exists $m^*_1 \in \{0, \dots, N\}$ such that $x_1, y_1 \in \left[\frac{3m^*_1 - 2}{3N}, \frac{3m^*_1 - 1}{3N} \right]$.
\end{itemize}
In both base cases, $x_1$ and $y_1$ are close enough within distance $2/3N$. Later, we will reduce general positions of $x_1, y_1 \in [0, 1]$ to a collection of base bases. A graphical illustration of base cases are given in Figure \ref{fig:basecase}.
\begin{figure}[!htb]
\centering
\includegraphics[width = 0.8\textwidth]{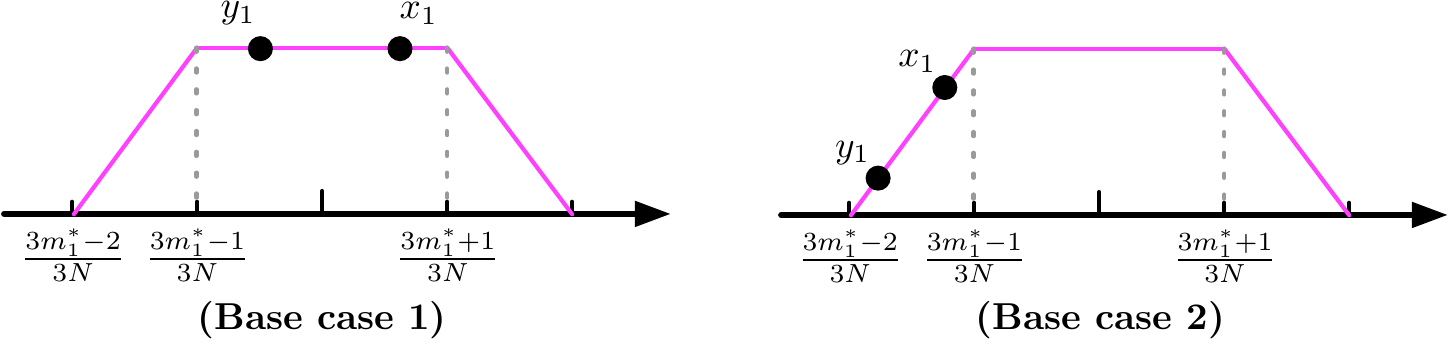}
\caption{Illustration of {\bf (Base case 1)} and {\bf (Base case 2)}.}
\label{fig:basecase}
\end{figure}

In {\bf (Base case 1)}, we have $\phi(3N(x_1 - m^*_1/N)) = \phi(3N(y_1 - m^*_1/N)) = 1$ and $\phi(3N(x_1 - m_1/N)) = \phi(3N(y_1 - m_1/N)) = 0$ for any $m_1 \neq m_1^*$. Therefore, the equality $\hat{\xi}_m(x) = \hat{\xi}_m(y)$ holds true for any $m \in \{0, \dots, N\}^q$. Consequently, we deduce $\hat{f}(x) - \hat{f}(y) = 0$.

In {\bf (Base case 2)}, the analysis is more complicated. We first observe that $\phi(3N(x_1 - m^*_1/N)) = 3Nx_1 - 3m^*_1 + 2$ and $\phi(3N(x_1 - (m^*_1-1)/N)) = -3Nx_1 + 3m^*_1 - 1$ are both nonzero, while $\phi(3N(x_1 - m_1/N)) = 0$ for any $m_1 \not\in \{m_1^*-1, m_1^*\}$ (the same holds for $y_1$). Denote $m_{\backslash 1} = [m_2, \dots, m_q]^\top$ as all the entries in $m$ except the first entry $m_1$. We rewrite $\hat{f}(x)$ as
\begin{align*}
\hat{f}(x) & = \sum_m \hat{\xi}_m(x) P_m(x) \\
& = \sum_{m = [m^*_1, m_{\backslash 1}^\top]^\top} \hat{\xi}_{m}(x) P_m(x) + \sum_{m = [m^*_1-1, m_{\backslash 1}^\top]^\top} \hat{\xi}_{m}(x) P_m(x).
\end{align*}
The second equality above holds, since $\hat{\xi}_m(x) = 0$ whenever $m_1 \not\in \{m_1^*-1, m_1^*\}$.
Furthermore, we have
\begin{align}\label{eq:xy_diff}
\left\vert \hat{f}(x) - \hat{f}(y)\right\vert & = \bigg\vert \sum_{m_{\backslash 1} : m = [m^*_1, m_{\backslash 1}^\top]^\top} \left(\hat{\xi}_{m}(x) - \hat{\xi}_m(y) \right) f(m) \nonumber \\
& \quad + \sum_{m_{\backslash 1} : m = [m^*_1-1, m_{\backslash 1}^\top]^\top} \left(\hat{\xi}_{m}(x) - \hat{\xi}_m(y) \right) f(m) \bigg\vert.
\end{align}
In order to bound the right-hand side of \eqref{eq:xy_diff}, we establish several regularity properties of $\hat{\xi}_m$ based on Lemma \ref{lemma:slope_hK}. The first result proves the monotonicity of $\hat{\xi}_m$.
\begin{lemma}\label{lemma:monotone_xi}
Let $\hat{\xi}_m$ be defined in \eqref{eq:xi_hat}. Consider two points $x = [x_1, \dots, x_i, \dots, x_q]^\top$ and $x' = [x_1, \dots, x_i', \dots, x_q]^\top$ only differing in the $i$-th coordinate. Denote $m_i^*$ satisfying $x_i, x'_i \in \left[\frac{3m^*_i -2}{3N}, \frac{3m^*_i - 1}{3N}\right]$. Then it holds
\begin{align*}
\left(\hat{\xi}_m(x) - \hat{\xi}_m(x')\right) (x_i - x_i') & \geq 0 \quad \textrm{for} \quad m = [m_1, \dots, m^*_i, \dots, m_q]^\top \quad \textrm{and} \\
\left(\hat{\xi}_m(x) - \hat{\xi}_m(x')\right) (x_i - x_i') & \leq 0 \quad \textrm{for} \quad m = [m_1, \dots, m^*_i-1, \dots, m_q]^\top.
\end{align*}
\end{lemma}
The proof is deferred to Appendix \ref{pf:monotone_xi}. Next, we show first-order continuity of $\hat{\xi}_m$.
\begin{lemma}\label{lemma:lip_xi}
Let $\hat{\xi}_m$ be defined in \eqref{eq:xi_hat}. Consider two points $x = [x_1, \dots, x_i, \dots, x_q]^\top$ and $x' = [x_1, \dots, x_i', \dots, x_q]^\top$ only differing in the $i$-th coordinate. Then for any $m$, it holds
\begin{align*}
& 3N \prod_{j\neq i} \max \left\{\phi(3N(x_j - m_j/N)) - \frac{1}{2^K}, 0\right\} \vert x_i - x_i'\vert \leq \left\vert \hat{\xi}_m(x) - \hat{\xi}_m(x')\right\vert \\
& \hspace{1.7in} \leq 3N \prod_{j \neq i} \left(\phi(3N(x_j - m_j/N)) + \frac{1}{2^{K}}\right) \vert x_i - x_i'\vert.
\end{align*}
\end{lemma}
The proof is deferred to Appendix \ref{pf:lip_xi}. Using Lemma \ref{lemma:monotone_xi} and \ref{lemma:lip_xi}, we are able to bound the right-hand side of \eqref{eq:xy_diff}. We partition all the values of $m_{\backslash 1}$ into two complementary disjoint sets:
\begin{align*}
\cA_{\leq 0} & = \left\{m_{\backslash 1} : f\left(\frac{[m^*_1, m_{\backslash 1}^\top]^\top}{N}\right) f\left(\frac{[m^*_1 -1, m_{\backslash 1}^\top]^\top}{N}\right) \leq 0 \right\} \quad \text{and} \\
\cA_{> 0} & = \left\{m_{\backslash 1} : f\left(\frac{[m^*_1, m_{\backslash 1}^\top]^\top}{N}\right) f\left(\frac{[m^*_1 -1, m_{\backslash 1}^\top]^\top}{N}\right) > 0 \right\}.
\end{align*}
In $\cA_{\leq 0}$, by the Lipschitz continuity of $f$, we have $$\left\vert f\left(\frac{[m^*_1, m_{\backslash 1}^\top]^\top}{N}\right) - f\left(\frac{[m^*_1-1, m_{\backslash 1}^\top]^\top}{N}\right)\right\vert \leq 1/N.$$ If either $\left\vert f\left([m^*_1, m_{\backslash 1}^\top]^\top/N\right)\right\vert > \frac{1}{N}$ or $\left\vert f\left([m^*_1-1, m_{\backslash 1}^\top]^\top/N\right)\right\vert > \frac{1}{N}$, then $f\big([m^*_1, m_{\backslash 1}^\top]^\top/N\big)$ and $f\big([m^*_1, m_{\backslash 1}^\top]^\top/N\big)$ should be both positive or negative. Their product must be positive. As a result, we deduce that in $\cA_{\leq 0}$, 
\begin{align*}
\left\vert f\left(\frac{[m^*_1, m_{\backslash 1}^\top]^\top}{N}\right)\right\vert \leq \frac{1}{N} \quad \text{and} \quad \left\vert f\left(\frac{[m^*_1-1, m_{\backslash 1}^\top]^\top}{N}\right)\right\vert \leq \frac{1}{N}
\end{align*}
hold simultaneously. 

In $\cA_{> 0}$, $f\left([m^*_1, m_{\backslash 1}^\top]^\top/N\right)$ and $f\left([m^*_1, m_{\backslash 1}^\top]^\top/N\right)$ are both positive or negative. We rewrite \eqref{eq:xy_diff} according to the partition of $\cA_{\leq 0}$ and $\cA_{> 0}$ on $m_{\backslash 1}$:
\begin{align}\label{eq:xy_diff_AB}
\left\vert \hat{f}(x) - \hat{f}(y)\right\vert = (\spadesuit) + (\clubsuit),
\end{align}
where
\begin{align}
& (\spadesuit) = \bigg\vert \sum_{m_{\backslash 1} \in \cA_{\leq 0}: m = [m^*_1, m_{\backslash 1}^\top]^\top} \left(\hat{\xi}_m(x) - \hat{\xi}_m(y)\right)f(m/N) \nonumber \\
& \hspace{0.7in} + \sum_{m_{\backslash 1} \in \cA_{\leq 0}: m = [m^*_1-1, m_{\backslash 1}^\top]^\top} \left(\hat{\xi}_m(x) - \hat{\xi}_m(y)\right)f(m/N) \bigg\vert, \nonumber \\
& (\clubsuit) = \bigg\vert \sum_{m_{\backslash 1} \in \cA_{> 0}: m = [m^*_1, m_{\backslash 1}^\top]^\top} \left(\hat{\xi}_m(x) - \hat{\xi}_m(y)\right)f(m/N) \nonumber \\
& \hspace{0.7in} + \sum_{m_{\backslash 1} \in \cA_{> 0}: m = [m^*_1-1, m_{\backslash 1}^\top]^\top} \left(\hat{\xi}_m(x) - \hat{\xi}_m(y)\right)f(m/N) \bigg\vert. \nonumber
\end{align}
For term $(\spadesuit)$, we bound it by
\begin{align}\label{eq:A_bound}
& \quad (\spadesuit) \nonumber \\
& \leq \left\vert \sum_{m_{\backslash 1} \in \cA_{\leq 0}: m = [m^*_1, m_{\backslash 1}^\top]^\top} \left(\hat{\xi}_m(x) - \hat{\xi}_m(y)\right)\frac{1}{N}\right\vert \nonumber \\
& \quad + \left\vert \sum_{m_{\backslash 1} \in \cA_{\leq 0}: m = [m^*_1-1, m_{\backslash 1}^\top]^\top} \left(\hat{\xi}_m(x) - \hat{\xi}_m(y)\right)\frac{1}{N} \right\vert \nonumber \\
& \overset{(i)}{\leq} \frac{1}{N} \sum_{m_{\backslash 1}: \hat{\xi}_m(x) \neq 0, \hat{\xi}_m(y) \neq 0} 6N \vert x_1 - y_1\vert \prod_{k\geq 2} \min \left\{\phi(3N(x_k - m_k/N)) + \frac{1}{2^{K}}, 1\right\} \nonumber \\
& \leq 6\vert x_1 - y_1\vert \sum_{m_{\backslash 1}: \hat{\xi}_m(x) \neq 0, \hat{\xi}_m(y) \neq 0} \prod_{k \geq 2} \left(\phi(3N(x_k - m_k/N)) + \frac{1}{2^{K}}\right) \nonumber \\
& \overset{(ii)}{\leq} 6\vert x_1 - y_1\vert \sum_{m_{\backslash 1}: \hat{\xi}_m(x) \neq 0, \hat{\xi}_m(y) \neq 0} \left[\prod_{k \geq 2} \phi(3N(x_k - m_k/N)) + \sum_{j=1}^q 2^{-jK} \binom{q}{j} \right] \nonumber \\
& \overset{(iii)}{\leq} 6 \left(1 + q2^{-K+q-1} \frac{1 - \left(q2^{-K}\right)^q}{1 - q2^{-K}}\right) \vert x_1 - y_1\vert,
\end{align}
where inequality $(i)$ invokes Lemma \ref{lemma:lip_xi} and neglects terms involving $\hat{\xi}_m(x) = \hat{\xi}_m(y) = 0$ and inequality $(ii)$ expands the product $\prod_{k \geq 2} \left(\phi(3N(x_k - m_k/N)) + \frac{1}{2^{K}}\right)$ by noting $\phi(3N(x_k - m_k/N)) \leq 1$. To see inequality $(iii)$, we first observe that there are at most $2^{q-1}$ terms in the summation, due to the definition of $\phi$. Then we bound $\sum_{m_{\backslash 1}:\hat{\xi}_m(x)\neq 0, \hat{\xi}_m(y) \neq 0} \sum_{j=1}^q 2^{-jK} \binom{q}{j}$ as
\begin{align*}
\sum_{m_{\backslash 1}:\hat{\xi}_m(x)\neq 0, \hat{\xi}_m(y) \neq 0} \sum_{j=1}^q 2^{-jK} \binom{q}{j} & \leq \sum_{m_{\backslash 1}:\hat{\xi}_m(x)\neq 0, \hat{\xi}_m(y) \neq 0} \sum_{j=1}^q 2^{-jK} q^j \\
& \leq 2^{q-1} q 2^{-K} \frac{1 - \left(q2^{-K}\right)^q}{1-q2^{-K}} \\
& = q 2^{-K + q - 1} \frac{1 - \left(q2^{-K}\right)^q}{1-q2^{-K}}.
\end{align*}
Meanwhile, $\prod_{k \geq 2} \phi(3N(x_k - m_k/N))$ is indeed a partition of unity on a $(d-1)$-dimensional unit cude. Therefore, we have
\begin{align*}
\sum_{m_{\backslash 1}: \hat{\xi}_m(x) \neq 0, \hat{\xi}_m(y) \neq 0} \prod_{k \geq 2} \phi(3N(x_k - m_k/N)) = 1.
\end{align*}

For term $(\clubsuit)$, we leverage the cancellation in the two summations. We assume without loss of generality, $f\rbr{[m_1^*, m_{\backslash 1}^\top]^\top / N} > 0$ and $f\rbr{[m_1^*-1, m_{\backslash 1}^\top]^\top / N} > 0$ for $m_{\backslash 1} \in \cA_{>0}$. Otherwise, replacing $f$ by $-f$ won't change term $(\clubsuit)$. Therefore, we derive
\begin{align}\label{eq:B_bound}
& \quad (\clubsuit) \nonumber \\
& \overset{(i)}{\leq} \Bigg\vert \sum_{m_{\backslash 1} \in \cA_{> 0}: m = [m^*_1, m_{\backslash 1}^\top]^\top} \left(\hat{\xi}_m(x) - \hat{\xi}_m(y) \right) f(m/N) \nonumber \\
& \quad - \sum_{m_{\backslash 1} \in \cA_{> 0}: m = [m^*_1-1, m_{\backslash 1}^\top]^\top} \left\vert \hat{\xi}_m(x) - \hat{\xi}_m(y) \right\vert f(m/N) \Bigg\vert \nonumber \\
& \overset{(ii)}{\leq} 3N \vert x_1 - y_1 \vert \nonumber \\
& \quad \cdot \sum_{m_{\backslash 1}: \hat{\xi}_m(x) \neq 0, \hat{\xi}_m(y) \neq 0} \Bigg\vert \prod_{k\geq 2}\left(\phi(3N(x_k - m_k/N)) + \frac{1}{2^{K}} \right) f\left([m^*_1, m_{\backslash 1}^\top]^\top/N\right) \nonumber \\
& \hspace{0.5in} - \prod_{k\geq 2}\max \left\{\phi(3N(x_k - m_k/N)) - \frac{1}{2^{K}}, 0 \right\} f\left([m^*_1-1, m_{\backslash 1}^\top]^\top/N\right) \Bigg\vert \nonumber \\
& \overset{(iii)}{\leq} 3N \vert x_1 - y_1\vert \sum_{m_{\backslash 1}: \hat{\xi}_m(x) \neq 0, \hat{\xi}_m(y) \neq 0} \Bigg(\prod_{k\geq 2} \phi(3N(x_k - m_k/N)) \nonumber \\
& \hspace{1in} \cdot \left\vert f([m^*_1, m_{\backslash 1}^\top]^\top/N) - f([m^*_1-1, m_{\backslash 1}^\top]^\top/N) \right\vert \Bigg) \nonumber \\
& \quad + 6N \vert x_1 - y_1\vert \norm{f}_\infty \cdot q 2^{-K+q-1} \frac{1 - \left(q2^{-K}\right)^q}{1 - q2^{-K}} \nonumber \\
& \leq 3\left(1 + 2N \norm{f}_\infty\cdot q 2^{-K+q-1} \frac{1 - \left(q2^{-K}\right)^q}{1 - q2^{-K}}\right) \vert x_1 - y_1\vert, 
\end{align}
where inequality $(i)$ uses the monotonicity of $\hat{\xi}_m$ in Lemma \ref{lemma:monotone_xi}, inequality $(ii)$ invokes Lemma \ref{lemma:lip_xi}, and inequality $(iii)$ follows from the same argument of $(iii)$ in \eqref{eq:A_bound}. Combining \eqref{eq:A_bound}, \eqref{eq:B_bound} and substituting into \eqref{eq:xy_diff_AB}, we obtain 
\begin{align}\label{eq:lip_base}
\left\vert \hat{f}(x) - \hat{f}(y)\right\vert \leq 3\left(3 + 2(N \norm{f}_\infty + 1) \cdot q 2^{-K+q-1} \frac{1 - \left(q2^{-K}\right)^q}{1 - q2^{-K}} \right) \vert x_1 - y_1\vert.
\end{align}

Given two base cases, we proceed to show Lipschitz continuity of $\hat{f}$. We first partition $[0, 1]$ into two types of sub-intervals,
\begin{center}
{\it (Type 1)} $\left[\frac{3k-1}{3N}, \frac{3k+1}{3N}\right] \bigcap \left[0, 1\right]$ $\quad$ and $\quad$ {\it (Type 2)} $\left[\frac{3k+1}{3N}, \frac{3k+2}{3N}\right] \bigcap \left[0, 1\right]$,
\end{center}
where $k \leq N$ is an integer. We observe that on a {\it (Type 1)} sub-interval, {\bf (Base case 1)} applies; while on a {\it (Type 2)} sub-interval, {\bf (Base case 2)} applies. Depending on the location of $x_1$ and $y_1$, we discuss four situations.

\noindent {\bf (Situation 1)}: $x_1$ belongs to a {\it (Type 1)} sub-interval and $y_1$ belongs to a {\it (Type 1)} sub-interval. If the two sub-intervals coincide, we obtain {\bf (Base case 1)}. There is nothing to show. Otherwise, we denote integer $k_x$ such that $x_1 \in \left[\frac{3k_x-1}{3N}, \frac{3k_x+1}{3N}\right]\bigcap \left[0, 1\right]$ and integer $k_y < k_x$ such that $y_1 \in \left[\frac{3k_y-1}{3N}, \frac{3k_y+1}{3N}\right]\bigcap \left[0, 1\right]$. See Figure \ref{fig:situation1} for an illustration.
\begin{figure}[!htb]
\centering
\includegraphics[width = 0.9\textwidth]{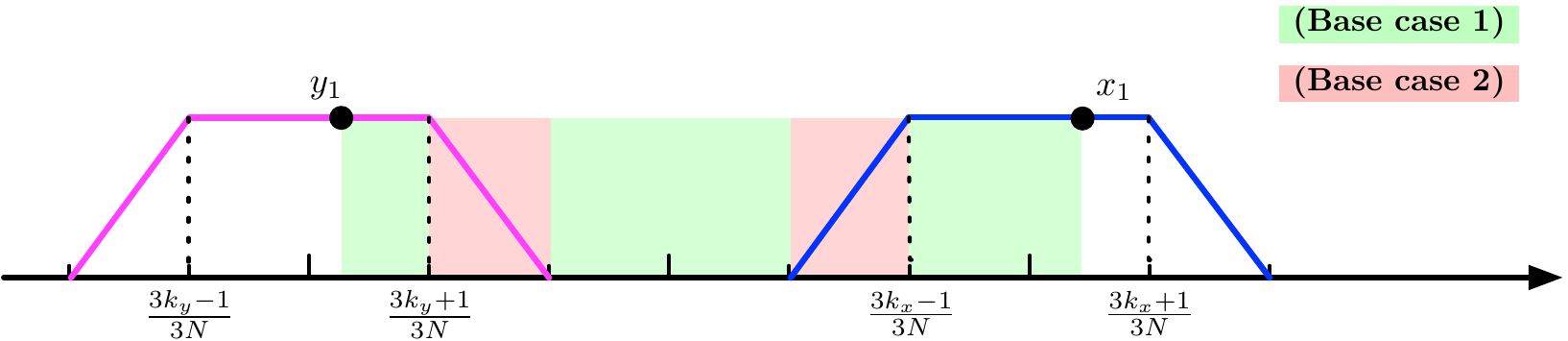}
\caption{Demonstration of {\bf (Situation 1)} with $k_x = k_y + 2$. We can decompose such an situation into a serial of alternating {\bf (Base case 1)} (green) and {\bf (Base case 2)} (red). The function value difference can be obtained by aggregating differences in each base case.}
\label{fig:situation1}
\end{figure}

We can derive
\begin{align*}
\left\vert \hat{f}(x) - \hat{f}(y)\right\vert & \leq \left\vert \hat{f}(x) - \hat{f}\rbr{\left[\frac{3k_x-1}{3N}, x_{\backslash 1}^\top \right]^\top}\right\vert \\
& \quad + \left\vert \hat{f}\rbr{\left[\frac{3k_x-1}{3N}, x_{\backslash 1}^\top \right]^\top} - \hat{f}\rbr{\left[\frac{3k_y+1}{3N}, y_{\backslash 1}^\top \right]^\top}\right\vert \\
& \quad + \left\vert \hat{f}\rbr{\left[\frac{3k_y+1}{3N}, y_{\backslash 1}^\top \right]^\top} - \hat{f}(y) \right\vert \\
& \overset{(i)}{=} \left\vert \hat{f}\rbr{\left[\frac{3k_x-1}{3N}, x_{\backslash 1}^\top \right]^\top} - \hat{f}\rbr{\left[\frac{3k_y+1}{3N}, y_{\backslash 1}^\top \right]^\top}\right\vert,
\end{align*}
where inequality $(i)$ follows from {\bf (Base case 1)}. If $k_x = k_y + 1$, then we can apply {\bf (Base case 2)} to show
\begin{align*}
\left\vert \hat{f}(x) - \hat{f}(y)\right\vert & \leq \left\vert \hat{f}\rbr{\left[\frac{3k_x-1}{3N}, x_{\backslash 1}^\top \right]^\top} - \hat{f}\rbr{\left[\frac{3k_y+1}{3N}, y_{\backslash 1}^\top \right]^\top}\right\vert \\
& \leq 3\left(3 + 2(N \norm{f}_\infty + 1) \cdot q 2^{-K+q-1} \frac{1 - \left(q2^{-K}\right)^q}{1 - q2^{-K}} \right) \frac{1}{3N} \\
& \leq 3\left(3 + 2(N \norm{f}_\infty + 1) \cdot q 2^{-K+q-1} \frac{1 - \left(q2^{-K}\right)^q}{1 - q2^{-K}} \right) \vert x_1 - y_1\vert.
\end{align*}
Otherwise, we have
\begin{align*}
& \quad \left\vert \hat{f}(x) - \hat{f}(y)\right\vert \\
& \leq \left\vert \hat{f}\rbr{\left[\frac{3k_x-1}{3N}, x_{\backslash 1}^\top \right]^\top} - \hat{f}\rbr{\left[\frac{3k_y+1}{3N}, y_{\backslash 1}^\top \right]^\top}\right\vert \\
& \leq \left\vert \hat{f}\rbr{\left[\frac{3k_x-1}{3N}, x_{\backslash 1}^\top \right]^\top} - \hat{f}\rbr{\left[\frac{3(k_x-1)+1}{3N}, x_{\backslash 1}^\top \right]^\top}\right\vert \\
& \quad + \left\vert \hat{f}\rbr{\left[\frac{3(k_x-1)+1}{3N}, x_{\backslash 1}^\top \right]^\top} - \hat{f}\rbr{\left[\frac{3(k_y+1)-1}{3N}, y_{\backslash 1}^\top \right]^\top}\right\vert \\
& \quad + \left\vert \hat{f}\rbr{\left[\frac{3(k_y+1)-1}{3N}, y_{\backslash 1}^\top \right]^\top} - \hat{f}\rbr{\left[\frac{3k_y+1}{3N}, y_{\backslash 1}^\top \right]^\top}\right\vert \\
& \overset{(i)}{\leq} 3\left(3 + 2(N \norm{f}_\infty + 1) \cdot q 2^{-K+q-1} \frac{1 - \left(q2^{-K}\right)^q}{1 - q2^{-K}} \right) \frac{2}{3N} \\
& \quad + \left\vert \hat{f}\rbr{\left[\frac{3(k_x-1)+1}{3N}, x_{\backslash 1}^\top \right]^\top} - \hat{f}\rbr{\left[\frac{3(k_y+1)-1}{3N}, y_{\backslash 1}^\top \right]^\top}\right\vert,
\end{align*}
where inequality $(i)$ is obtained by applying {\bf (Base case 2)} twice. To complete the argument, we can replace $k_x = k_x - 1$ and $k_y = k_y + 1$ and repeat the derivation to accumulate all the differences yielded on a {\it (Type 2)} sub-interval, until $k_x - i = k_y + i + 1$ or $k_x - i = k_y + i$ for some integer $i$. Consequently, noting that the total length of {\it (Type 2)} interval between $x_1$ and $y_1$ is always smaller than $\vert x_1 - y_1\vert$, we deduce
\begin{align*}
\left\vert \hat{f}(x) - \hat{f}(y)\right\vert \leq 3\left(3 + 2(N \norm{f}_\infty + 1) \cdot q 2^{-K+q-1} \frac{1 - \left(q2^{-K}\right)^q}{1 - q2^{-K}} \right) \vert x_1 - y_1\vert.
\end{align*}

\noindent {\bf (Situation 2)}: $x_1$ belongs to a {\it (Type 1)} sub-interval and $y_1$ belongs to a {\it (Type 2)} sub-interval. We aim to reduce this situation to {\bf (Situation 1)}. Following the same notation, we denote $x_1 \in \left[\frac{3k_x-1}{3N}, \frac{3k_x+1}{3N}\right]\bigcap \left[0, 1\right]$ for some integer $k_x$ and $y_1 \in \left[\frac{3k_y+1}{3N}, \frac{3k_y+2}{3N}\right]\bigcap \left[0, 1\right]$ for $k_y < k_x$. Triangle inequality yields
\begin{align*}
& \quad \left\vert \hat{f}(x) - \hat{f}(y)\right\vert \\
& \leq \left\vert \hat{f}(x) - \hat{f}\rbr{\left[\frac{3(k_y+1)-1}{3N}, y_{\backslash 1}^\top \right]^\top}\right\vert + \left\vert \hat{f}\rbr{\left[\frac{3(k_y+1)-1}{3N}, x_{\backslash 1}^\top \right]^\top} - \hat{f}(y)\right\vert \\
& \leq \left\vert \hat{f}(x) - \hat{f}\rbr{\left[\frac{3(k_y+1)-1}{3N}, y_{\backslash 1}^\top \right]^\top}\right\vert \\
& \quad + 3\left(3 + 2(N \norm{f}_\infty + 1) \cdot q 2^{-K+q-1} \frac{1 - \left(q2^{-K}\right)^q}{1 - q2^{-K}} \right) \left\vert \frac{3(k_y+1)-1}{3N} - y_1\right\vert.
\end{align*}
We now observe that term $\left\vert \hat{f}(x) - \hat{f}\rbr{\left[\frac{3(k_y+1)-1}{3N}, y_{\backslash 1}^\top \right]^\top}\right\vert$ falls into {\bf (Situation 1)}. A straightforward adaptation of the argument in {\bf (Situation 1)} gives rise to
\begin{align*}
\left\vert \hat{f}(x) - \hat{f}(y)\right\vert \leq 3\left(3 + 2(N \norm{f}_\infty + 1) \cdot q 2^{-K+q-1} \frac{1 - \left(q2^{-K}\right)^q}{1 - q2^{-K}} \right) \vert x_1 - y_1\vert.
\end{align*}

\noindent {\bf (Situation 3)}: $x_1$ belongs to a {\it (Type 2)} sub-interval and $y_1$ belongs to a {\it (Type 1)} sub-interval. The analysis is analogous to {\bf (Situation 2)} by switching $x_1$ and $y_1$. Denoting $x_1 \in \left[\frac{3k_x+1}{3N}, \frac{3k_x+2}{3N}\right]\bigcap \left[0, 1\right]$ for some integer $k_x$ and $y_1 \in \left[\frac{3k_y-1}{3N}, \frac{3k_y+1}{3N}\right]\bigcap \left[0, 1\right]$ for $k_y \leq k_x$, we derive
\begin{align*}
& \quad \left\vert \hat{f}(x) - \hat{f}(y)\right\vert \\
& \leq \left\vert \hat{f}(x) - \hat{f}\rbr{\left[\frac{3k_x+1}{3N}, x_{\backslash 1}^\top \right]^\top}\right\vert + \left\vert \hat{f}\rbr{\left[\frac{3k_x+1}{3N}, x_{\backslash 1}^\top \right]^\top} - \hat{f}(y)\right\vert \\
& \leq \left\vert \hat{f}\rbr{\left[\frac{3k_x+1}{3N}, x_{\backslash 1}^\top \right]^\top} - \hat{f}(y)\right\vert \\
& \quad + 3\left(3 + 2(N \norm{f}_\infty + 1) \cdot q 2^{-K+q-1} \frac{1 - \left(q2^{-K}\right)^q}{1 - q2^{-K}} \right) \left\vert x_1 - \frac{3k_x+1}{3N}\right\vert.
\end{align*}
Note that $\left\vert \hat{f}\rbr{\left[\frac{3k_x+1}{3N}, x_{\backslash 1}^\top \right]^\top} - \hat{f}(y)\right\vert$ falls into {\bf (Situation 1)}. Therefore, the desired Lipschitz continuity \eqref{eq:lip_base} holds.

\noindent {\bf (Situation 4)}: $x_1$ belongs to a {\it (Type 2)} sub-interval and $y_1$ belongs to a {\it (Type 2)} sub-interval. If the two sub-intervals coincide, this recovers {\bf (Base case 2)}, and there is nothing to show. Otherwise, applying the analysis in {\bf (Situation 2)} and {\bf (Situation 3)} consecutively to move $x_1$ first into a {\it (Type 1)} sub-interval and then $y_1$, we reduce this situation to {\bf (Situation 1)} again. Therefore, Lipschitz continuity in \eqref{eq:lip_base} still holds true. 

Combining all four situations, for any $x, y$ only differing in the first coordinate, it holds
\begin{align*}
\left\vert \hat{f}(x) - \hat{f}(y)\right\vert \leq 3\left(3 + 2(N \norm{f}_\infty + 1) \cdot q 2^{-K+q-1} \frac{1 - \left(q2^{-K}\right)^q}{1 - q2^{-K}} \right) \vert x_1 - y_1\vert.
\end{align*}
The proof is complete for general $x, y$ by aggregating coordinate-wise differences and the fact $\sum_{i=1}^q \vert x_i - y_i\vert = \norm{x - y}_1 \leq q\norm{x - y}_\infty$.
\end{proof}


\subsection{Proofs of supporting results for Lemma \ref{lemma:hatf_lip}}\label{pf:hatf_lip_omitted}
Before we present omitted proofs in Lemma \ref{lemma:hatf_lip}, we study the regularity of the approximated square function $\hat{h}_K$, which will be frequently used in proving Lemma \ref{lemma:monotone_xi} and Lemma \ref{lemma:lip_xi}.

\subsubsection{Regularity of $\hat{h}_K$}
Given a function $g: [0, 1] \mapsto \RR$, for any $x \in (0, 1)$, we define upper and lower slopes at $x$, denoted by $\upperslope_g$ and $\lowerslope_g$, respectively, as
\begin{align*}
\upperslope_{g}(x) & = \upperlim_{\Delta \rightarrow 0}~ \frac{g(x + \Delta) - g(x)}{\Delta}, \\
\lowerslope_{g}(x) & = \lowerlim_{\Delta \rightarrow 0}~ \frac{g(x + \Delta) - g(x)}{\Delta}.
\end{align*}
The definition above coincides with upper and lower derivatives of a univariate function. We use ``slope'' instead of derivatives as we will instantiate the definition to the piecewise linear function $\hat{h}_K$. We show several useful properties. 
\begin{lemma}\label{lemma:slope_hK}
For a given positive integer $K$, let $\hat{h}_K$ be defined on $[0, 1]$ as in \eqref{eq:h_hat}. Then the following identities hold.
\begin{enumerate}
\item For any $x \in (0, 1)$, we have
\begin{align}\label{eq:hK_upperlower_slope}
\upperslope_{\hat{h}_K}(x) & = \lim_{\Delta \rightarrow 0^+} \frac{\hat{h}_K(x + \Delta) - \hat{h}_K(x)}{\Delta} \quad \textrm{and} \\
\lowerslope_{\hat{h}_K}(x) & = \lim_{\Delta \rightarrow 0^+} \frac{\hat{h}_K(x) - \hat{h}_K(x-\Delta)}{\Delta}.
\end{align}
\item Given an integer $i$, we denote $\cB_K(i) = [b_1, \dots, b_K]^\top \in \{0, 1\}^K$ as the $K$-bit binary encoding of $i$, that is, $i = \sum_{k=1}^K b_k 2^{k-1}$. Then for any $x \in (0, 1)$, we have
\begin{align}
& \overline{\slope}_{\hat{h}_K}(x) = 1 + \sum_{k=1}^K \left(2\left[\cB_K\left(\left\lfloor 2^K\cdot x \right\rfloor\right)\right]_{K-k+1}-1\right) 2^{-k} \quad \textrm{and} \label{eq:upper_slope_value} \\
& \underline{\slope}_{\hat{h}_K}(x)  = 1 + \sum_{k=1}^K \left(2\left[\cB_K\left(\left\lceil 2^K\cdot x \right\rceil - 1\right)\right]_{K-k+1}-1\right) 2^{-k}. \label{eq:lower_slope_value}
\end{align}
\end{enumerate}
\end{lemma}
\begin{proof}[Proof of Lemma \ref{lemma:slope_hK}]
By construction, $g_k$ is a piecewise linear function. Each of its linear segment is supported on a sub-interval $[i/2^k, (i+1)/2^k]$ for $i = 1, \dots, 2^k-1$. Therefore, it can be checked that $\hat{h}_K$ is also a piecewise linear function, since it is a linear combination of $g_k$'s. Furthermore, the $i$-th linear segment in $g_k$ has a slope $(-1)^{i} 2^{k}$, i.e., $g_k' = (-1)^{i} 2^{k}$ on open interval $\left(i/2^k, (i+1)/2^k\right)$. As a result, $\hat{h}_K$ is differentiable on $\left(i/2^K, (i+1)/2^K\right)$ and its derivative satisfies
\begin{align}\label{eq:hK_derivative}
\hat{h}_K'(x) & = 1 - \sum_{k=1}^K \frac{1}{2^{2k}} g_k'(x) \nonumber \\
& = 1 - \sum_{k=1}^K (-1)^{\left\lfloor i/2^{K - k}\right\rfloor} \frac{1}{2^{k}} \quad \textrm{for any}~ x \in \left(i/2^K, (i+1)/2^K\right).
\end{align}
We observe $i = \lfloor x \cdot 2^K \rfloor$ for $x \in \left(i/2^K, (i+1)/2^K\right)$, which implies $x \cdot 2^K - 1 < i \leq x \cdot 2^K$. For any $k = 1, \dots, K$, we have
\begin{align*}
\frac{x \cdot 2^K - 1}{2^{K-k}} < i/2^{K - k} \leq \frac{x \cdot 2^K}{2^{K-k}} \quad \Longrightarrow \quad x \cdot 2^k - 1 < i/2^{K - k} \leq x \cdot 2^k.
\end{align*}
Thus, we deduce $\left\lfloor i/2^{K - k}\right\rfloor = \left\lfloor x \cdot 2^k \right\rfloor$ and \eqref{eq:hK_derivative} can be simplified as
\begin{align}\label{eq:hK_derivative_simp}
\hat{h}_K'(x) = 1 - \sum_{k=1}^K (-1)^{\left\lfloor x \cdot 2^k \right\rfloor} \frac{1}{2^{k}}.
\end{align} 
We claim 
\begin{align}\label{eq:claim_binary}
(-1)^{\left\lfloor x \cdot 2^k \right\rfloor} = -2b_{K-k+1} + 1 \quad \textrm{for} \quad k = 1, \dots, K,
\end{align}
where $b_j$ is the $j$-th entry of the $K$-bit binary encoding $\cB_K\left(\left\lfloor x \cdot 2^K \right\rfloor \right)$. In other words, the parity of $\left\lfloor x \cdot 2^{k} \right\rfloor$ is encoded by $b_{K-k+1}$. In particular, if $\left\lfloor x \cdot 2^{k} \right\rfloor$ is odd (resp. even), $b_{K-k+1} = 1$ (resp. $b_{K-k+1} = 0$).

To show the claim in \eqref{eq:claim_binary}, we first prove 
\begin{align}
\left\lfloor x \cdot 2^{k} \right\rfloor = \sum_{j=1}^k 2^{k-j} b_{K-j+1} \nonumber
\end{align}
for any $k = 1, \dots, K$. Indeed, we can show the following sandwich inequality
\begin{align}\label{eq:binary_sandwich}
\left\lfloor\frac{\left\lfloor x \cdot 2^K \right\rfloor}{2^{K-k}}\right\rfloor \overset{(i)}{\leq} \left\lfloor x \cdot 2^{k} \right\rfloor \overset{(ii)}{\leq} \frac{\left\lfloor x \cdot 2^K \right\rfloor}{2^{K-k}}.
\end{align}
Inequality $(i)$ holds, since $\left\lfloor\frac{\left\lfloor x \cdot 2^K \right\rfloor}{2^{K-k}}\right\rfloor \leq \left\lfloor\frac{x \cdot 2^K}{2^{K-k}}\right\rfloor = \left\lfloor x \cdot 2^{k} \right\rfloor$; inequality $(ii)$ holds, since $2^{K - k} \left\lfloor x \cdot 2^{k} \right\rfloor = \left\lfloor 2^{K - k} \left\lfloor x \cdot 2^{k} \right\rfloor \right\rfloor \leq \left\lfloor x \cdot 2^{K} \right\rfloor$. Substituting $\left \lfloor x \cdot 2^K \right\rfloor = \sum_{j=1}^K b_j 2^{j-1}$ into \eqref{eq:binary_sandwich}, we derive
\begin{align*}
& \left\lfloor \sum_{j > K - k} b_j 2^{j - 1 - K + k} + \sum_{j \leq K-k} b_j 2^{j - 1 - K + k} \right\rfloor \leq \left\lfloor x \cdot 2^{k} \right\rfloor \\
& \hspace{2in} \leq \sum_{j > K - k} b_j 2^{j - 1 - K + k} + \underbrace{\sum_{j \leq K-k} b_j 2^{j - 1 - K + k}}_{(\spadesuit) < 1}.
\end{align*}
Due to $(\spadesuit) < 1$, we conclude $\left\lfloor x \cdot 2^{k} \right\rfloor = \sum_{j=1}^k 2^{k-j} b_{K-j+1}$. Consequently, we deduce $\left\lfloor x \cdot 2^{k} \right\rfloor \equiv b_{K - k+1} \pmod{2}$, which verifies the claim by noting $(-1)^{\left\lfloor x \cdot 2^k \right\rfloor} = (-1)^{b_{K - k + 1}} = -2b_{K-k+1} + 1$.

Substituting \eqref{eq:claim_binary} into \eqref{eq:hK_derivative_simp}, for $x \in \left(i/2^K, (i+1)/2^K\right)$, we obtain
\begin{align}\label{eq:hK_derivative_segment}
\hat{h}_K'(x) = 1 + \sum_{k=1}^K (2b_{K - k + 1} - 1) 2^{-k} \quad \textrm{with} \quad \cB_K\left(\left\lfloor x \cdot 2^K \right\rfloor\right) = [b_1, \dots, b_K]^\top.
\end{align}

To establish the first assertion in Lemma \ref{lemma:slope_hK}, we only need to consider end points of each linear segment of $\hat{h}_K$. Otherwise, when $x \in \left(i/2^K, (i+1)/2^K\right)$ for some $i = 0, \dots, 2^K -1$, \eqref{eq:hK_derivative_segment} shows $\hat{h}_K$ is differentiable at $x$, and therefore, \eqref{eq:hK_upperlower_slope} holds true. Consider an end point $x = i/2^K$ for some $i = 1, \dots, 2^K - 1$. We evaluate left and right derivatives of $\hat{h}_K$ at $x$. We denote left and right derivatives as $\partial^- \hat{h}_K$ and $\partial^+ \hat{h}_K$, respectively. Using \eqref{eq:hK_derivative_segment} again, we derive 
\begin{align*}
\partial^- \hat{h}_K(x) & = \lim_{\Delta \rightarrow 0^+} \frac{\hat{h}_K(x-\Delta) - \hat{h}_K(x)}{\Delta} \\
& = \lim_{y \rightarrow x^-} \hat{h}_K'(y) \\
& = 1 + \sum_{k=1}^K \left(2 [\cB_K(i-1)]_{K-k+1} - 1\right)2^{-k}, \\
\partial^+ \hat{h}_K(x) & = \lim_{\Delta \rightarrow 0^+} \frac{\hat{h}_K(x+\Delta) - \hat{h}_K(x)}{\Delta} \\
& = \lim_{y \rightarrow x^+} \hat{h}_K'(y) \\
& = 1 + \sum_{k=1}^K \left(2 [\cB_K(i)]_{K-k+1} - 1\right)2^{-k}.
\end{align*} 
We note $\partial^+ \hat{h}_K(x) \geq \partial^- \hat{h}_K(x)$, and therefore, for $x = i/2^K$, we obtain
\begin{align}\label{eq:endpoint_slope}
\begin{split}
\upperslope_{\hat{h}_K}(x) & = \upperlim_{\Delta \rightarrow 0} \frac{\hat{h}_K(x+\Delta) - \hat{h}_K(x)}{\Delta} = \partial^+ \hat{h}_K(x), \\
\lowerslope_{\hat{h}_K}(x) & = \lowerlim_{\Delta \rightarrow 0} \frac{\hat{h}_K(x+\Delta) - \hat{h}_K(x)}{\Delta} = \partial^- \hat{h}_K(x). \\
\end{split}
\end{align}
This establishes the first assertion in Lemma \ref{lemma:slope_hK}.

To show the second assertion, we also tackle separately when $x$ is an end point of a linear segment or inside a linear segment of $\hat{h}_K$. Suppose $x \in \left(i/2^K, (i+1)/2^K\right)$ for some $i = 0, \dots, 2^K - 1$. We check that $\left\lfloor x \cdot 2^K \right\rfloor = \left\lceil x \cdot 2^K \right\rceil - 1 = i$. It implies \eqref{eq:upper_slope_value} and \eqref{eq:lower_slope_value} are both equal to \eqref{eq:hK_derivative_segment}. On the other hand, suppose $x = i/2^K$ for some $i = 1, \dots, 2^K - 1$, we check $\left\lfloor x \cdot 2^K \right\rfloor = \left\lceil x \cdot 2^K \right\rceil = i$. Therefore, \eqref{eq:upper_slope_value} and \eqref{eq:lower_slope_value} coincide with $\partial^+ \hat{h}_K(x)$ and $\partial^- \hat{h}_K(x)$, respectively. In combination with \eqref{eq:endpoint_slope}, we verify that \eqref{eq:upper_slope_value} and \eqref{eq:lower_slope_value} hold for any $x \in (0, 1)$. The proof is complete. 
\end{proof}
For later convenience, we define slopes at end points $x = 0$ and $x = 1$ as
\begin{align*}
\upperslope_{\hat{h}_K}(1) & = 2, && \lowerslope_{\hat{h}_K}(1) = \lim_{x \rightarrow 1^-} \hat{h}_K'(x) = 2 - 2^{-K}, \\
\lowerslope_{\hat{h}_K}(0) & = 0, && \upperslope_{\hat{h}_K}(0) = \lim_{x \rightarrow 0^+} \hat{h}_K'(x) = 2^{-K}. \\
\end{align*}

\subsubsection{Proof of Lemma \ref{lemma:monotone_xi}}\label{pf:monotone_xi}
\begin{proof}[Proof of Lemma \ref{lemma:monotone_xi}]
We first show $\hat{\times}(x, a)$ is monotone in $x$ for any fixed $a$. Let $x_1 \leq x_2 \in [0, 1]$. (We slightly abuse the notation here. Note that $x_1, x_2$ are scalars.) By the construction of $\hat{\times}$, we have
\begin{align*}
\hat{\times}(x_2, a) - \hat{\times}(x_1, a) & = \underbrace{\hat{h}_K\left(\frac{x_2 + a}{2}\right) - \hat{h}_K\left(\frac{x_1 + a}{2}\right)}_{(A)} \\
& \quad - \underbrace{\left(\hat{h}_K\left(\frac{\vert x_2 - a\vert}{2}\right) - \hat{h}_K\left(\frac{\vert x_1 - a\vert}{2}\right)\right)}_{(B)}.
\end{align*}
By the triangle inequality, we observe
\begin{align*}
\left\vert \frac{\vert x_1-a\vert}{2} - \frac{\vert x_2-a\vert}{2} \right\vert \leq \left\vert \frac{\vert x_1 - a - x_2 + a\vert}{2}\right\vert = \left\vert \frac{x_1+a}{2} - \frac{x_2+a}{2}\right\vert,
\end{align*} 
and $\frac{x_2 + a}{2} \geq \max\left\{\frac{\vert x_1 - a\vert}{2}, \frac{\vert x_2 - a\vert}{2}, \frac{x_1 + a}{2}\right\}$.

We need to compare the differences in term $(A)$ and $(B)$ in the following two cases. 

\noindent $\bullet$ If $\frac{x_1+a}{2} \geq \max\left\{\frac{\vert x_2-a\vert}{2}, \frac{\vert x_1-a\vert}{2} \right\}$, we have
\begin{align*}
(A) - (B) & \geq \overline\slope_{\hat{h}_K} \left(\frac{x_1+a}{2}\right) \left\vert \frac{x_2+a}{2} - \frac{x_1+a}{2}\right\vert \\
& \quad - \underline\slope_{\hat{h}_K}\left(\max\left\{\frac{\vert x_2-a\vert}{2}, \frac{\vert x_1-a\vert}{2}\right\}\right) \left\vert \frac{\vert x_2-a\vert}{2} - \frac{\vert x_1-a\vert}{2}\right\vert.
\end{align*}
By the triangle inequality, we observe
\begin{align*}
\left\vert \frac{\vert x_1-a\vert}{2} - \frac{\vert x_2-a\vert}{2} \right\vert \leq \left\vert \frac{\vert x_1 - a - x_2 + a\vert}{2}\right\vert = \left\vert \frac{x_1+a}{2} - \frac{x_2+a}{2}\right\vert.
\end{align*}
Meanwhile, by Lemma \ref{lemma:slope_hK}, $\overline\slope_{\hat{h}_K}(z_1) \geq \underline\slope_{\hat{h}_K}(z_2)$ whenever $z_1 \geq z_2$. Therefore, we verify $(A) - (B) \geq 0$.

\noindent $\bullet$ If on the contrary, $\frac{x_1+a}{2} < \max\left\{\frac{\vert x_2-a\vert}{2}, \frac{\vert x_1-a\vert}{2} \right\}$, by removing overlapping pieces, we have
\begin{align*}
& \quad (A) - (B) \\
& = \hat{h}_K\left(\frac{x_2 + a}{2}\right) - \hat{h}_K\left(\max\left\{\frac{\vert x_2-a\vert}{2}, \frac{\vert x_1-a\vert}{2} \right\}\right) \\
& \quad - \left(\hat{h}_K\left(\frac{x_1+a}{2}\right) - \hat{h}_K\left(\min\left\{\frac{\vert x_2-a\vert}{2}, \frac{\vert x_1-a\vert}{2} \right\}\right)\right) \\
& \geq \overline\slope_{\hat{h}_K} \left(\max\left\{\frac{\vert x_2-a\vert}{2}, \frac{\vert x_1-a\vert}{2} \right\}\right) \left\vert \frac{x_2+a}{2} - \max\left\{\frac{\vert x_2-a\vert}{2}, \frac{\vert x_1-a\vert}{2} \right\}\right\vert \\
& \quad - \underline\slope_{\hat{h}_K}\left(\frac{x_1+a}{2}\right) \left\vert \frac{x_1+a}{2} - \min\left\{\frac{\vert x_2-a\vert}{2}, \frac{\vert x_1-a\vert}{2} \right\}\right\vert \\
& \overset{(i)}{\geq} 0,
\end{align*}
where inequality $(i)$ holds, since
\begin{align*}
& \quad \left\vert \frac{x_2+a}{2} - \max\left\{\frac{\vert x_2-a\vert}{2}, \frac{\vert x_1-a\vert}{2} \right\}\right\vert \\
& = \left\vert \frac{x_1+a}{2} - \frac{x_2+a}{2}\right\vert - \left\vert \max\left\{\frac{\vert x_2-a\vert}{2}, \frac{\vert x_1-a\vert}{2} \right\} - \frac{x_1+a}{2}\right\vert \\
& \geq \left\vert \frac{\vert x_1-a\vert}{2} - \frac{\vert x_2-a\vert}{2} \right\vert - \left\vert \max\left\{\frac{\vert x_2-a\vert}{2}, \frac{\vert x_1-a\vert}{2} \right\} - \frac{x_1+a}{2}\right\vert \\
& = \left\vert \frac{x_1+a}{2} - \min\left\{\frac{\vert x_2-a\vert}{2}, \frac{\vert x_1-a\vert}{2} \right\}\right\vert
\end{align*}
and $\upperslope_{\hat{h}_K}\left(\max\left\{\frac{\vert x_2-a\vert}{2}, \frac{\vert x_1-a\vert}{2} \right\}\right) \geq \lowerslope_{\hat{h}_K}\left(\frac{x_1+a}{2}\right)$.

Combining the two cases above, we deduce $(A) - (B) \geq 0$, and $\hat{\times}(x, a)$ is monotone in $x$ for any fixed $a$. By symmetry, $\hat{\times}(a, x)$ is also monotone. When $m = [m_1, \dots, m_i^*, \dots, m_q]^\top$, $\phi(3N(x_i - m_i^*/N)) = 3Nx_i - 3m^*_i + 2$, which is increasing in $x_i$. By construction of $\hat{\xi}_m$ in \eqref{eq:xi_hat} and the monotonicity of composite functions, we deduce the monotonicity of $\hat\xi_m$. Similarly, when $m = [m_1, \dots, m_i^*-1, \dots, m_q]^\top$, we have $\phi(3N(x_i - (m_i^*-1)/N)) = -3Nx_i + 3m^*_i - 1$ --- decreasing in $x_i$. Therefore, $\hat{\xi}_m$ is decreasing with respect to the $i$-th coordinate in $x$. The proof is complete. 
\end{proof}

\subsubsection{Proof of Lemma \ref{lemma:lip_xi}}\label{pf:lip_xi}
\begin{proof}[Proof of Lemma \ref{lemma:lip_xi}]
We first analyze the Lipschitz continuity of $\hat{\times}$. Let's fix $a \in [0, 1]$ and recall $\hat{\times}(x, a) = \hat{h}_K\left(\frac{x + a}{2}\right) - \hat{h}_K\left(\frac{\vert x - a\vert}{2}\right)$. We observe that $\hat{\times}(x, a)$ is a piecewise linear function in $x$, due to $\hat{h}_K$ being piecewise linear. Therefore, to characterize the Lipschitz continuity of $\hat{\times}$, it suffices to evaluate the steepest and flattest slopes of $\hat{\times}(x, a)$ as $x$ varies in $[0, 1]$. Specifically, we define
\begin{align}
\textsf{SteepSlope}\left(\hat{\times}(\cdot, a)\right) & = \sup_{x \in (0, 1)} \upperlim_{\Delta \rightarrow 0} \frac{\hat{\times}(x+\Delta, a) - \hat{\times}(x, a)}{\Delta}, \label{eq:steepslope} \\ 
\textsf{FlatSlope}\left(\hat{\times}(\cdot, a)\right) & = \inf_{x \in (0, 1)} \lowerlim_{\Delta \rightarrow 0} \frac{\hat{\times}(x+\Delta, a) - \hat{\times}(x, a)}{\Delta}. \label{eq:flatslope}
\end{align}

\noindent $\bullet$ {\bf Steepest slope}. We consider two cases depending on the value of $x$, namely, $0 < x \leq a$ and $a < x < 1$. 

\noindent $\star$ {\it (Case 1)} When $a < x < 1$, we rewrite $\hat{\times}(x, a)$ as $\hat{\times}(x, a) = \hat{h}_K\left(\frac{x+a}{2}\right) - \hat{h}_K\left(\frac{x - a}{2}\right)$. Substituting into \eqref{eq:steepslope}, we obtain 
\begin{align*}
& ~\quad \upperlim_{\Delta \rightarrow 0} \frac{\hat{\times}(x+\Delta, a) - \hat{\times}(x, a)}{\Delta} \\
& = \upperlim_{\Delta \rightarrow 0} \frac{\hat{h}_K\left(\frac{x + \Delta + a}{2}\right) - \hat{h}_K\rbr{\frac{x + \Delta - a}{2}} - \hat{h}_K\rbr{\frac{x+a}{2}} + \hat{h}_K\rbr{\frac{x - a}{2}}}{\Delta} \\
& = \upperlim_{\Delta \rightarrow 0} \frac{\hat{h}_K\left(\frac{x + \Delta + a}{2}\right) - \hat{h}_K\rbr{\frac{x + a}{2}} - \sbr{\hat{h}_K\left(\frac{x + \Delta - a}{2}\right) - \hat{h}_K\rbr{\frac{x - a}{2}}} }{\Delta}.
\end{align*}
Lemma \ref{lemma:slope_hK} implies that $\hat{h}_K$ is strictly monotone increasing. Hence, for any $\Delta$, $\hat{h}_K\left(\frac{x + \Delta + a}{2}\right) - \hat{h}_K\rbr{\frac{x + a}{2}}$ and $\hat{h}_K\left(\frac{x + \Delta - a}{2}\right) - \hat{h}_K\rbr{\frac{x - a}{2}}$ are both positive or negative depending on the sign of $\Delta$. Moreover, Lemma \ref{lemma:slope_hK} shows that $\upperslope_{\hat{h}_K}$ and $\lowerslope_{\hat{h}_K}$ are monotone increasing. As a result, we have
\begin{align*}
& \quad~ \upperlim_{\Delta \rightarrow 0} \frac{\hat{\times}(x+\Delta, a) - \hat{\times}(x, a)}{\Delta} \\
& \leq \upperlim_{\Delta \rightarrow 0} \frac{\hat{h}_K\left(\frac{x + \Delta + a}{2}\right) - \hat{h}_K\rbr{\frac{x + a}{2}}}{\Delta} - \lowerlim_{\Delta \rightarrow 0} \frac{\hat{h}_K\left(\frac{x + \Delta - a}{2}\right) - \hat{h}_K\rbr{\frac{x - a}{2}}}{\Delta} \\
& = \frac{1}{2}\upperslope_{\hat{h}_K}\rbr{\frac{x+a}{2}} - \frac{1}{2}\lowerslope_{\hat{h}_K}\rbr{\frac{x-a}{2}}.
\end{align*}
Using Lemma \ref{lemma:slope_hK}, we can upper bound $\upperslope_{\hat{h}_K}(z)$ and lower bound $\lowerslope_{\hat{h}_K}(z)$ for any $z \in (0, 1)$ as
\begin{align}
\frac{1}{2} \upperslope_{\hat{h}_K}(z) & \leq \min\left\{z + \frac{1}{2^{K+1}}, 1\right\} \label{eq:slope_upper_bound}, \\
\frac{1}{2} \lowerslope_{\hat{h}_K}(z) & \geq \max\left\{z - \frac{1}{2^{K+1}}, 0 \right\} \label{eq:slope_lower_bound}.
\end{align}
The upper bound \eqref{eq:slope_upper_bound} is a consequence of \eqref{eq:upper_slope_value}. Specifically, for any $a \in (0, 1)$, it holds
\begin{align*}
\upperslope_{\hat{h}_K}(z) & = 1 + \sum_{k=1}^K \left(2 \left[\cB_K\left(\left\lfloor z \cdot 2^K \right\rfloor\right)\right]_{K - k + 1} - 1\right) 2^{-k} \\
& = 2^{-K} + 2 \sum_{k=1}^K \frac{\left[\cB_K\left(\left\lfloor z \cdot 2^K \right\rfloor\right)\right]_{K - k + 1} 2^{K-k}}{2^K} \\
& = 2^{-K} + 2\frac{\left\lfloor z \cdot 2^K \right\rfloor}{2^K} \\
& \leq 2z + 2^{-K}. 
\end{align*}
In combination with $2$ being a natural upper bound of $\upperslope_{\hat{h}_K}$ and rescaling by $1/2$, \eqref{eq:slope_upper_bound} holds true. The lower bound \eqref{eq:slope_lower_bound} is a consequence of \eqref{eq:lower_slope_value}. We have
\begin{align*}
\lowerslope_{\hat{h}_K}(z) & = 1 + \sum_{k=1}^K \left(2 \left[\cB_K\left(\left\lceil z \cdot 2^K \right\rceil - 1\right)\right]_{K-k+1} - 1 \right) 2^{-k} \\
& = 2^{-K} + 2 \sum_{k=1}^K \frac{\left[\cB_K\left(\left\lceil z \cdot 2^K \right\rceil - 1\right)\right]_{K-k+1} 2^{K-k}}{2^K} \\
& = 2^{-K} + 2 \frac{\left\lceil z \cdot 2^K \right\rceil - 1}{2^K} \\
& \geq 2z - 2^{-K}.
\end{align*}
Combining with $0$ being a natural lower bound of $\lowerslope_{\hat{h}_K}$, we establish \eqref{eq:slope_lower_bound}. To this end, \eqref{eq:slope_upper_bound} and \eqref{eq:slope_lower_bound} together yield
\begin{align}\label{eq:x_larger_a_slope}
& \quad~ \sup_{a < x < 1} \upperlim_{\Delta \rightarrow 0} \frac{\hat{\times}(x+\Delta, a) - \hat{\times}(x, a)}{\Delta} \nonumber \\
& \leq \sup_{a < x < 1} \frac{1}{2} \upperslope_{\hat{h}_K}\rbr{\frac{x+a}{2}} - \frac{1}{2} \lowerslope_{\hat{h}_K}\rbr{\frac{x-a}{2}} \nonumber \\
& \leq \sup_{a < x < 1} \min\left\{\frac{x+a}{2} + \frac{1}{2^{K+1}}, 1 \right\} - \max\left\{\frac{x-a}{2} - \frac{1}{2^{K+1}}, 0\right\} \nonumber \\
& = \min\left\{a + \frac{1}{2^K}, 1\right\}.
\end{align}

\noindent $\star$ {\it (Case 2)} When $0 < x \leq a$, the analysis is similar. We have $\hat{\times}(x, a) = \hat{h}_K\left(\frac{x+a}{2}\right) - \hat{h}_K\left(\frac{a - x}{2}\right)$, and derive
\begin{align*}
& \quad \upperlim_{\Delta \rightarrow 0} \frac{\hat{\times}(x+\Delta, a) - \hat{\times}(x, a)}{\Delta} \\
& = \upperlim_{\Delta \rightarrow 0} \frac{\hat{h}_K\left(\frac{x + \Delta + a}{2}\right) - \hat{h}_K\rbr{\frac{a - x - \Delta}{2}} - \hat{h}_K\rbr{\frac{x+a}{2}} + \hat{h}_K\rbr{\frac{a - x}{2}}}{\Delta} \\
& = \upperlim_{\Delta \rightarrow 0} \frac{\hat{h}_K\left(\frac{x + \Delta + a}{2}\right) - \hat{h}_K\rbr{\frac{x + a}{2}} + \sbr{\hat{h}_K\rbr{\frac{a - x}{2}} - \hat{h}_K\left(\frac{a - x - \Delta}{2}\right)}}{\Delta} \\
& = \upperlim_{\Delta \rightarrow 0} \frac{\hat{h}_K\left(\frac{x + \Delta + a}{2}\right) - \hat{h}_K\rbr{\frac{x + a}{2}}}{\Delta} + \frac{\hat{h}_K\rbr{\frac{a - x}{2}} - \hat{h}_K\left(\frac{a - x - \Delta}{2}\right)}{\Delta}.
\end{align*}
We also notice that $\hat{h}_K\left(\frac{x + \Delta + a}{2}\right) - \hat{h}_K\rbr{\frac{x + a}{2}}$ and $\hat{h}_K\rbr{\frac{a - x}{2}} - \hat{h}_K\left(\frac{a - x - \Delta}{2}\right)$ have the same sign depending on $\Delta$. In the case of $\Delta > 0$, we have
\begin{align*}
\lim_{\Delta \rightarrow 0^+} \frac{\hat{\times}(x+\Delta, a) - \hat{\times}(x, a)}{\Delta} = \frac{1}{2}\upperslope_{\hat{h}_K}\rbr{\frac{x+a}{2}} + \frac{1}{2}\lowerslope_{\hat{h}_K}\rbr{\frac{a-x}{2}}.
\end{align*}
Using \eqref{eq:upper_slope_value} and \eqref{eq:lower_slope_value}, we derive
\begin{align*}
& \quad \upperslope_{\hat{h}_K}\rbr{\frac{x+a}{2}} + \lowerslope_{\hat{h}_K}\rbr{\frac{a-x}{2}} \\
& = 2^{-K} + 2 \frac{\left\lfloor (x+a) \cdot 2^{K-1} \right\rfloor}{2^K} + 2^{-K} + 2 \frac{\left\lceil (a-x) \cdot 2^{K-1} \right\rceil - 1}{2^K} \\
& = \frac{\left\lfloor (x+a) \cdot 2^{K-1} \right\rfloor + \left\lceil (a-x) \cdot 2^{K-1} \right\rceil}{2^{K-1}} \\
& \leq \frac{(x+a)\cdot 2^{K-1} + (a - x) \cdot 2^{K-1} + 1}{2^{K-1}} \\
& = 2a + 2^{-K+1},
\end{align*}
which implies $\lim_{\Delta \rightarrow 0^+} \frac{\hat{\times}(x+\Delta, a) - \hat{\times}(x, a)}{\Delta} \leq a + 2^{-K}$ for any $0 < x \leq a$. In the case of $\Delta < 0$, we have
\begin{align*}
\lim_{\Delta \rightarrow 0^-} \frac{\hat{\times}(x+\Delta, a) - \hat{\times}(x, a)}{\Delta} & = \frac{1}{2}\lowerslope_{\hat{h}_K}\rbr{\frac{x+a}{2}} + \frac{1}{2}\upperslope_{\hat{h}_K}\rbr{\frac{a-x}{2}} \\
& = \frac{\left\lceil (a+x) \cdot 2^{K-1}\right\rceil + \left\lfloor (a - x) \cdot 2^{K-1}\right\rfloor}{2^{K-1}} \\
& \leq \frac{(a+x) \cdot 2^{K-1} + 1 + (a - x) \cdot 2^{K-1}}{2^{K-1}} \\
& = 2a + 2^{-K+1},
\end{align*}
which implies $\lim_{\Delta \rightarrow 0^-} \frac{\hat{\times}(x+\Delta, a) - \hat{\times}(x, a)}{\Delta} \leq a + 2^{-K}$ for any $0 < x \leq a$. Combining both $\Delta > 0$ and $\Delta < 0$ cases, we conclude
\begin{align}\label{eq:x_smaller_a_slope}
\sup_{0 < x \leq a} \upperlim_{\Delta \rightarrow 0} \frac{\hat{\times}(x+\Delta, a) - \hat{\times}(x, a)}{\Delta} \leq a + 2^{-K}.
\end{align} 
Putting \eqref{eq:x_larger_a_slope} and \eqref{eq:x_smaller_a_slope} together, we deduce
\begin{align*}
\textsf{SteepSlope}\left(\hat{\times}(\cdot, a)\right) \leq a + \frac{1}{2^K}.
\end{align*} 

\noindent $\bullet$ {\bf Flattest slope}. We also discuss two cases, i.e., $0 < x \leq a$, $a < x < 1$. 

\noindent $\star$ {\it (Case 1)} When $a < x < 1$, following the same computation for the steepest slope, we have
\begin{align}\label{eq:x_larger_a_slope_flat}
& \quad~ \lowerlim_{\Delta \rightarrow 0} \frac{\hat{\times}(x+\Delta, a) - \hat{\times}(x, a)}{\Delta} \nonumber \\
& \geq \lowerlim_{\Delta \rightarrow 0} \frac{\hat{h}_K\rbr{\frac{x+\Delta+a}{2}} - \hat{h}_K\rbr{\frac{x+a}{2}}}{\Delta} - \upperlim_{\Delta \rightarrow 0} \frac{\hat{h}_K\rbr{\frac{x+\Delta-a}{2}} - \hat{h}_K\rbr{\frac{x-a}{2}}}{\Delta} \nonumber \\
& = \frac{1}{2} \lowerslope_{\hat{h}_K}\rbr{\frac{x+a}{2}} - \frac{1}{2} \upperslope_{\hat{h}_K} \rbr{\frac{x-a}{2}} \nonumber \\
& \overset{(i)}{\geq} \max\left\{\frac{x+a}{2} - \frac{1}{2^{K+1}}, 0\right\} - \min\left\{\frac{x-a}{2} + \frac{1}{2^{K+1}}, 1\right\} \nonumber \\
& \overset{(ii)}{\geq} \max\left\{a - \frac{1}{2^K}, 0\right\},
\end{align}
where inequality $(i)$ invokes \eqref{eq:slope_upper_bound} and \eqref{eq:slope_lower_bound}, and inequality $(ii)$ uses the natural lower bound $\lowerslope_{\hat{h}_K}\rbr{\frac{x+a}{2}} - \upperslope_{\hat{h}_K} \rbr{\frac{x-a}{2}} \geq 0$ since $x+a > x - a$.

\noindent $\star$ {\it (Case 2)} When $0 < x \leq a$, we derive
\begin{align}
& \quad \lowerlim_{\Delta \rightarrow 0} \frac{\hat{\times}(x+\Delta, a) - \hat{\times}(x, a)}{\Delta} \nonumber \\
& = \lowerlim_{\Delta \rightarrow 0} \frac{\hat{h}_K\left(\frac{x + \Delta + a}{2}\right) - \hat{h}_K\rbr{\frac{x + a}{2}} + \sbr{\hat{h}_K\rbr{\frac{a - x}{2}} - \hat{h}_K\left(\frac{a - x - \Delta}{2}\right)}}{\Delta} \nonumber \\
& = \lowerlim_{\Delta \rightarrow 0} \frac{\hat{h}_K\left(\frac{x + \Delta + a}{2}\right) - \hat{h}_K\rbr{\frac{x + a}{2}}}{\Delta} + \frac{\hat{h}_K\rbr{\frac{a - x}{2}} - \hat{h}_K\left(\frac{a - x - \Delta}{2}\right)}{\Delta}. \nonumber
\end{align}
We distinguish the limit depending on $\Delta$ being positive or negative. If $\Delta > 0$, we have
\begin{align}
& \quad \lim_{\Delta \rightarrow 0^+} \frac{\hat{\times}(x+\Delta, a) - \hat{\times}(x, a)}{\Delta} \nonumber \\
& = \frac{1}{2} \upperslope_{\hat{h}_K} \rbr{\frac{x+a}{2}} + \frac{1}{2} \lowerslope_{\hat{h}_K} \rbr{\frac{a-x}{2}} \nonumber \\
& = \frac{1}{2}\left(2^{-K} + 2\frac{\left\lfloor (x+a) \cdot 2^{K-1} \right\rfloor}{2^{K}} + 2^{-K} + 2 \frac{\left\lceil (a-x) \cdot 2^{K-1} \right\rceil - 1}{2^K}\right) \nonumber \\
& = \frac{\left\lfloor (x+a) \cdot 2^{K-1} \right\rfloor + \left\lceil (a-x) \cdot 2^{K-1} \right\rceil}{2^{K}} \nonumber \\
& \overset{(i)}{\geq} \max\left\{\frac{(a+x) \cdot 2^{K-1} - 1 + (a - x) \cdot 2^{K-1}}{2^{K}}, 0 \right\} \nonumber \\
& = \max\left\{a - \frac{1}{2^K}, 0\right\}, \nonumber
\end{align}
where inequality $(i)$ uses $0$ being a natural lower bound of $\upperslope_{\hat{h}_K} \rbr{\frac{x+a}{2}}$ and $\lowerslope_{\hat{h}_K} \rbr{\frac{a-x}{2}}$.
If $\Delta < 0$, we have
\begin{align*}
& \quad \lim_{\Delta \rightarrow 0^-} \frac{\hat{\times}(x+\Delta, a) - \hat{\times}(x, a)}{\Delta} \nonumber \\
& = \frac{1}{2} \lowerslope_{\hat{h}_K} \rbr{\frac{x+a}{2}} + \frac{1}{2} \upperslope_{\hat{h}_K} \rbr{\frac{a-x}{2}} \nonumber \\
& = \frac{\left\lceil (a+x) \cdot 2^{K-1} \right\rceil + \left\lfloor (a-x) \cdot 2^{K-1} \right\rfloor}{2^K} \nonumber \\
& \geq \max\left\{\frac{(a+x) \cdot 2^{K-1} + (a-x) \cdot 2^{K-1} - 1}{2^K}, 0\right\} \nonumber \\
& = \max\left\{a - 2^{-K}, 0\right\}.
\end{align*}
Combining both $\Delta > 0$ and $\Delta < 0$, we deduce
\begin{align}\label{eq:x_smaller_a_slope_flat}
\sup_{0 < x \leq a} \lowerlim_{\Delta \rightarrow 0} \frac{\hat{\times}(x+\Delta, a) - \hat{\times}(x, a)}{\Delta} \geq \max\{a - 2^{-K}, 0\}. 
\end{align}
Putting \eqref{eq:x_larger_a_slope_flat} and \eqref{eq:x_smaller_a_slope_flat} together, we deduce
\begin{align*}
\textsf{FlatSlope}\left(\hat{\times}(\cdot, a)\right) \geq \max\left\{a - \frac{1}{2^K}, 0\right\}.
\end{align*}

To complete the proof, we observe that $\hat{\xi}_m$ is a composition of $q$ approximate product operations $\hat{\times}$. For $x = [x_1, \dots, x_i, \dots, x_q]^\top$ and $x' = [x_1, \dots, x_i', \dots, x_q]^\top$ only differing in the $i$-th coordinate, recursively applyling \eqref{eq:steepslope} and \eqref{eq:flatslope}, we derive 
\begin{align*}
& 3N \prod_{j\neq i} \max \left\{\phi(3N(x_j - m_j/N)) - \frac{1}{2^K}, 0\right\} \vert x_i - x_i'\vert \leq \left\vert \hat{\xi}_m(x) - \hat{\xi}_m(x')\right\vert \\
& \hspace{1.7in} \leq 3N \prod_{j \neq i} \left(\phi(3N(x_j - m_j/N)) + \frac{1}{2^{K}}\right) \vert x_i - x_i'\vert.
\end{align*}
The proof is complete. 
\end{proof}

\section{Proof of Lemma \ref{lemma:approx}}\label{pf:approx}
\begin{proof}[Proof of Lemma \ref{lemma:approx}]
Lemma \ref{lemma:approx} is a direct result of {\it Theorem 1} in \cite{yarotsky2017error}, which is originally proved for Sobolev functions on $[0,1]^d$. The proof for H\"{o}lder functions can be found in \cite{chen2019efficient}. The high level idea consists of two steps: 1) Approximate the target function $f$ using a weighted sum of local Taylor polynomials; 2) Implement each Taylor polynomial using a ReLU network. The proof holds true even when $\cZ$ is a subset of $[0,1]^d$. In the first step, we let $N = \left\lceil (\frac{s!\delta}{2^{d+1} d^s})^{-1/s} \right\rceil$ and discretize $[0,1]^d$ by a uniform grid with side length $1/N$. A local Taylor polynomial approximation is used in each small cube with side length $1/N$. If a cube intersects with $\partial \cZ$, we can pick any point in the interior of $\cZ$ as the center for Taylor expansion and the proof of {\it Theorem 1} in \cite{yarotsky2017error} remains the same. When the H\"older norm of $f$ is bounded by $C$, all weight parameters in the network constructed in Lemma \ref{lemma:approx} are bounded by $C$. 
\end{proof}

\section{Proof of Lemma \ref{lemma:linearprojection}}\label{pf:linearprojection}
\begin{proof}[Proof of Lemma \ref{lemma:linearprojection}]
The compactness of $\cY$ follows from $\cX$ being compact and $A$ being a continuous transformation. To see $\cY$ is also convex, we consider $y_1, y_2 \in \cY$ and any $\lambda \in (0, 1)$. Since $\cY = A^\top \cX$, we have $x_1, x_2 \in \cX$ such that $A^\top x_1 = y_1$ and $A^\top x_2 = y_2$. Then we have
\begin{align*}
\lambda y_1 + (1 - \lambda) y_2 = A^\top (\lambda x_1 + (1 - \lambda) x_2) \in A^\top \cX = \cY.
\end{align*}
Therefore, $\cY$ is convex.

To check $A \cY = \cX$, we show $A \cY \subset \cX$ and $\cX \subset A \cY$ hold true simultaneously. Let $y \in \cY$, then there exists $x \in \cX$ such that $y = A^\top x$. Due to Assumption \ref{assumption:lowdmanifold}, we write $x \in \cX$ as $x = A z$. As a result, we derive $A y = A A^\top x = AA^\top Az = Az = x \in \cX$. Thus, $A \cY \subset \cX$. On the other hand, any $x \in \cX$ can be written as $x = A z$, which implies $A^\top x = z \in \cY$, since $A$ has orthonormal columns. Therefore, $\cX \subset A \cY$. Combining two arguments together, we deduce $\cX = A\cY$.
\end{proof}

\end{document}